\def\eqref#1{equation~\ref{#1}}
\def\1{\bm{1}}
\DeclareMathAlphabet{\mathsfit}{\encodingdefault}{\sfdefault}{m}{sl}
\SetMathAlphabet{\mathsfit}{bold}{\encodingdefault}{\sfdefault}{bx}{n}
\title{PAC Prediction Sets Under Covariate Shift}
\author{Sangdon Park \\
Dept. of Computer \& Info. Science \\
PRECISE Center \\
University of Pennsylvania \\
\texttt{sangdonp@seas.upenn.edu} \\
\And
Edgar Dobriban \\
Dept. of Statistics \& Data Science \\
The Wharton School\\
University of Pennsylvania \\
\texttt{dobriban@wharton.upenn.edu} \\
\AND
Insup Lee \\
Dept. of Computer \& Info. Science \\
PRECISE Center \\
University of Pennsylvania \\
\texttt{lee@cis.upenn.edu}
\\
\And
\hspace{13.5ex}Osbert Bastani \\
\hspace{14ex}Dept. of Computer \& Info. Science \\
\hspace{14ex}PRECISE Center \\
\hspace{14ex}University of Pennsylvania \\
\hspace{14ex}\texttt{obastani@seas.upenn.edu}
\\
}
    \newtheorem{lemma}{Lemma}
    \newtheorem{corollary}{Corollary}
    \newtheorem{theorem}{Theorem}
    \newtheorem{corollary}{Corollary}
    \newtheorem{assumption}{Assumption}
\providecommand{\customgenericname}{}
\newcommand{\newcustomtheorem}[2]{%
  \newenvironment{#1}[1]
  {%
   \renewcommand\customgenericname{#2}%
   \renewcommand\theinnercustomgeneric{##1}%
   \innercustomgeneric
  }
  {\endinnercustomgeneric}
}
\providecommand{\eqa}				[1]		{\begin{align}#1\end{align}}
\providecommand{\eqas}			[1]		{\begin{align*}#1\end{align*}}
\providecommand{\ie}{{i.e.,}~}
\providecommand{\eg}{{e.g.,}~}
\providecommand\qcomment[1]{ }
\providecommand{\realnum}					{\mathbb{R}}
\providecommand{\naturalnum}				{\mathbb{N}}
\renewcommand{\(}						{\left(}
\renewcommand{\)}						{\right)}
\renewcommand{\[}						{\left[}
\renewcommand{\]}						{\right]}
\providecommand{\Prob}{\mathbbm{P}}
\providecommand{\Probop}{\mathop{\Prob}}
\providecommand{\Exp}{\mathbbm{E}}
\providecommand{\Expop}{\mathop{\Exp}}
\def\bh{\hat{{b}}}
\def\ph{\hat{{p}}}
\def\qh{\hat{{q}}}
\def\wh{\hat{{w}}}
\def\yh{\hat{{y}}}
\def\Ch{\hat{{C}}}
\def\Fs{\mathcal{{F}}}
\def\Ns{\mathcal{{N}}}
\def\Ts{\mathcal{{T}}}
\def\Ws{\mathcal{{W}}}
\def\Xs{\mathcal{{X}}}
\def\Ys{\mathcal{{Y}}}
\newcommand{\SPI}[1]{\textcolor{blue}{[SP: #1]}}
\newenvironment{proof}{\paragraph{{\normalfont \emph{Proof.}}}}{\hfill$\square$\par}
\newcommand{\vmid}{\;\middle|\;}
\newcommand{\Lb}{\bar{L}}
\newcommand{\cmark}{{\color{ForestGreen}\ding{51}}} 
\newcommand{\xmark}{{\color{red}\ding{55}}} 
\definecolor{ed}{RGB}{225,255,0}
\newcommand{\bitem}{\begin{itemize}}
\newcommand{\eitem}{\end{itemize}}
\newcommand{\benum}{\begin{enumerate}}
\newcommand{\eenum}{\end{enumerate}}
\newcommand{\beq}{\begin{equation}}
\newcommand{\eeq}{\end{equation}}
\newcommand{\beqs}{\begin{equation*}}
\newcommand{\eeqs}{\end{equation*}}
\newcommand{\ep}{\varepsilon}
\begin{document}

\maketitle

\begin{abstract}
An important challenge facing modern machine learning is how to rigorously quantify the uncertainty of model predictions. Conveying uncertainty is especially important when there are changes to the underlying data distribution that might invalidate the predictive model. Yet, most existing uncertainty quantification algorithms break down in the presence of such shifts. We propose a novel approach that addresses this challenge by constructing \emph{probably approximately correct (PAC)} prediction sets in the presence of covariate shift. Our approach focuses on the setting where there is a covariate shift from the source distribution (where we have labeled training examples) to the target distribution (for which we want to quantify uncertainty). Our algorithm assumes given importance weights that encode how the probabilities of the training examples change under the covariate shift. In practice, importance weights typically need to be estimated; thus, we extend our algorithm to the setting where we are given confidence intervals for the importance weights. We demonstrate the effectiveness of our approach on covariate shifts based on DomainNet and ImageNet. Our algorithm satisfies the PAC constraint, and gives prediction sets with the smallest average normalized size among approaches that always satisfy the PAC constraint.
\end{abstract}


\section{Introduction}

A key challenge in machine learning is quantifying prediction uncertainty. A promising approach is via \emph{prediction sets}, where the model predicts a set of labels instead of a single label. For example, prediction sets can be used by a robot to navigate to avoid regions where the prediction set includes an obstacle, or in
healthcare to notify a doctor if the prediction set includes a problematic diagnosis.

Various methods based on these approaches can provide probabilistic correctness guarantees (i.e., that the predicted set contains the true label with high probability) when the training and test distributions are equal---formally, assuming the observations are exchangeable \citep{vovk2005algorithmic,papadopoulos2002inductive,lei2015conformal} or i.i.d. \citep{vovk2013conditional,Park2020PAC,bates2021distribution}.
However, this assumption often fails to hold in practice due to \emph{covariate shift}---i.e., where the input distribution changes but the conditional label distribution remains the same \citep{sugiyama2007covariate,quinonero2009dataset}. These shifts can be both natural (e.g., changes in color and lighting)~\citep{hendrycks2019robustness} or adversarial (e.g., $\ell_{\infty}$ attacks)~\citep{Szegedy2014}.


We consider unsupervised domain adaptation~\citep{ben2007analysis}, where we are given labeled examples from the source domain, but only unlabeled examples from the target (covariate shifted) domain. We propose an algorithm that constructs \emph{probably approximately correct (PAC)} prediction sets under bounded covariate shifts~\citep{wilks1941determination,valiant1984theory}---i.e.,
with high probability over the training data (``probably''), the prediction set contains the true label for test instances (``approximately correct'').

Our algorithm uses importance weights to capture the likelihood of a source example under the target domain. When the importance weights are known, it uses rejection sampling \citep{vonNeumann1951} to construct the prediction sets. Often, the importance weights must be estimated, in which case we need to account for estimation error. When only given confidence intervals around the importance weights, our algorithm constructs prediction sets that are robust to this uncertainty.


We evaluate our approach in two settings. First, we consider \emph{rate shift}, where the model is trained on a broad domain, but deployed on a narrow domain---e.g., an autonomous car may be trained on both daytime and nighttime images but operate at night. Even if the model continues to perform well, this covariate shift may invalidate the prediction sets. We show that our approach constructs valid prediction sets under rate shifts constructed from DomainNet~\citep{peng2019moment}, whereas several existing approaches do not. 
Second, we consider \emph{support shift}, where the model is trained on one domain, but evaluated on another domain with shifted support. We train ResNet101~\citep{he2016deep} using unsupervised domain adaptation \citep{dannGanin2016} on both ImageNet-C synthetic perturbations~\citep{hendrycks2019robustness} and PGD adversarial perturbations~\citep{madry2017towards} to ImageNet~\citep{russakovsky2015imagenet}. 
Our PS-W algorithm satisfies the PAC constraint, and gives prediction sets with the smallest average normalized size among approaches that always satisfy the PAC constraint.
See Figure \ref{fig:main} for a summary of our approach and results.

\begin{figure}
\centering
\begin{subfigure}[c]{0.3\linewidth}
\centering
\footnotesize
\begin{align*}
\Ch_{\text{PS}}\( \text{\makecell{\includegraphics[width=0.22\linewidth]{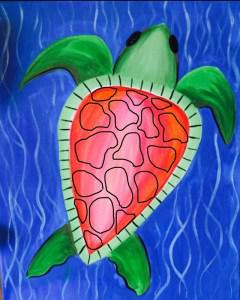}}} \) &= \left\{ \makecell{ {\footnotesize  \widehat{\text{brain}}}} \right\} \\
\Ch_{\text{PS-W}}\( \text{\makecell{\includegraphics[width=0.22\linewidth]{./figs/ps_ex/tar_DomainNetPainting/data/domainnet/painting/sea_turtle_add/painting_256_000157.jpg/iid_y_sea_turtle_yh_brain_ps_brain}}} \) &= {\tiny \left\{ \makecell{  \widehat{\text{brain}},\\ \text{ fish},\\\text{ lion},\\ \text{ lollipop}, \\ {\color{ForestGreen}\text{ sea turtle}}} \right\} }
\end{align*}
\vspace{-1ex}
\caption{prediction set example}
\vspace{-1ex}
\end{subfigure}
\begin{subfigure}[c]{0.3\linewidth}
\includegraphics[width=\linewidth]{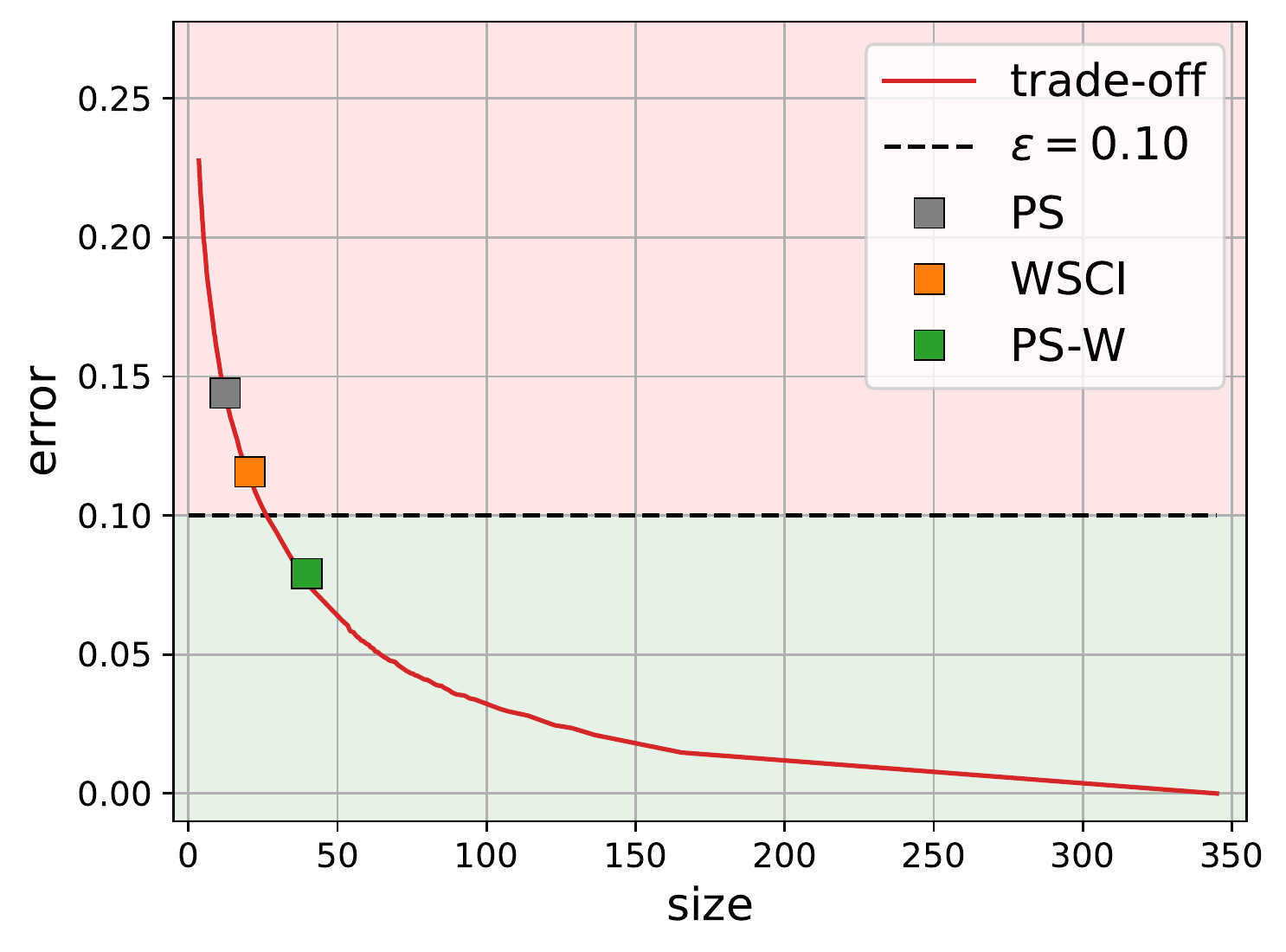}
\vspace{-1ex}
\caption{size and error tradeoff}
\vspace{-1ex}
\label{fig:sizeerrortradeoff}
\end{subfigure}
\begin{subfigure}[c]{0.3\linewidth}
\includegraphics[width=\linewidth]{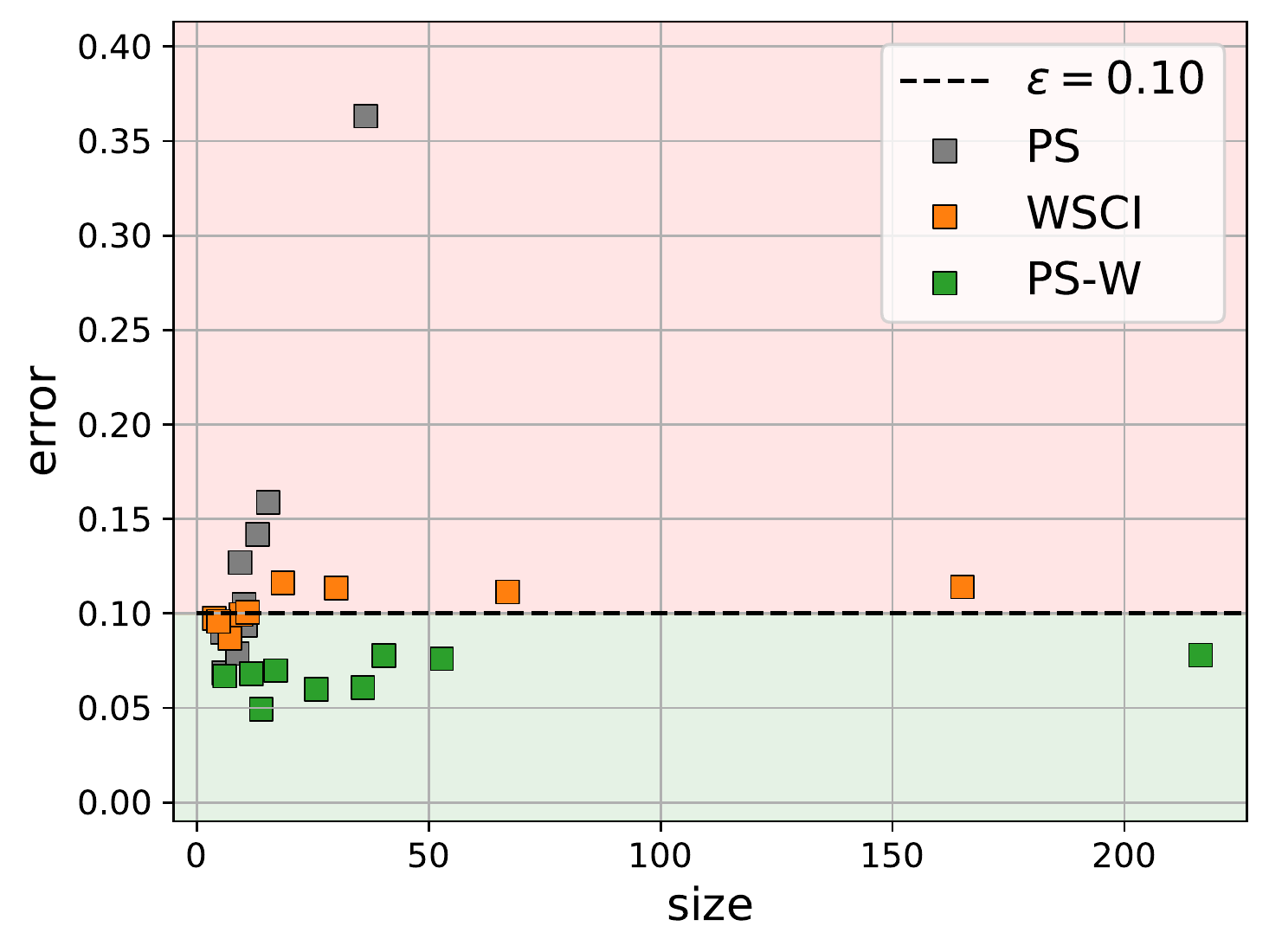}
\vspace{-1ex}
\caption{results across many shifts}
\vspace{-1ex}
\end{subfigure}
\caption{(a) An example of a covariate shifted image where an existing approach, PS \citep{Park2020PAC}, constructs prediction sets that do not contain the true label (\ie ``sea turtle''); in contrast, our proposed approach PS-W does.
(b) The red curve shows the tradeoff between size and error achieved by different choices of $\tau$ on a single shift; the goal is to be as far to the left as possible without crossing the desired error bound $\ep=0.1$ shown as the black dashed line. The existing approaches PS and WSCI \citep{tibshirani2019conformal} fail to satisfy the desired error bound due to covariate shift, whereas our approach satisfies it. (c) Our approach satisfies the error bound over nine covariate shifts on DomainNet and ImageNet, whereas existing approaches do not.}
\label{fig:main}
\vspace{-0.1in}
\end{figure}

\textbf{Related work.}
%
Our work is related to conformal prediction \citep{vovk2005algorithmic,cp} without a shift, where the goal is to
construct models that predict sets of labels designed to include the true label with high probability (instead of predicting individual labels) under the assumption that the source and target distributions are the same. In particular, our setting is related to \emph{inductive} (or \emph{split}) conformal prediction~\citep{papadopoulos2002inductive,vovk2013conditional,lei2015conformal}, where the training set is split into a \emph{proper training set} used to train a traditional predictive model, and a \emph{calibration set} used to construct the prediction sets. The general approach is to train a model $f:\mathcal{X}\times\mathcal{Y}\to\mathbb{R}$ that outputs \emph{conformity scores}, and then to choose a threshold $\tau\in\mathbb{R}$ that satisfies a correctness guarantee, where the corresponding prediction sets are $C(x)=\{y\in\mathcal{Y}\mid f(x,y)\ge\tau\}$ \citep{Park2020PAC, gupta2021nested};
this notation is defined in Section \ref{sec:background}.

Several kinds of correctness guarantees under no shift have been considered. One possibility is \emph{input conditional} validity ~\citep{vovk2013conditional, barber2020limits}, which ensures correctness for \emph{all} future covariates $x$, with high probability only over the conditional label distribution $p(y\mid x)$. This guarantee is very strong, and hard to ensure in practice.
A weaker notion is \emph{approximate}~\citep{lei2014distribution,barber2020limits} or \emph{group}~\citep{romano2019malice} conditional validity, which ensures correctness with high probability over $p(y\mid x)$ as well as some distribution $p(x)$ over a subgroup. Finally, \emph{unconditional validity} ensures only correctness over  the joint distribution $p(x,y)$. We focus on unconditional validity, though our approach extends readily to group conditional validity.

A separate issue, arising in the no shift setting, is how to condition on the calibration set $Z$. A conventional goal is unconditional validity, over the distribution $p(x,y,Z)$. We refer to this strategy as \emph{fully unconditional validity}. However, the guarantee we consider uses a separate confidence level for the training data, which is called a \emph{training conditional guarantee} \citep{vovk2013conditional}; this correctness notion is equivalent to a PAC correctness guarantee~\citep{Park2020PAC}, and is also equivalent to the standard  ``content"
guarantee with a certain confidence level for tolerance regions~\citep{wilks1941determination,fraser1956nonparametric}.
We build on~\cite{Park2020PAC}, which formulates the problem of choosing $\tau$ as learning a binary classifier where the input and parameter spaces are both one-dimensional; thus, the correctness guarantee corresponds to a PAC generalization bound. This approach is widely applicable since it can work with a variety of objectives~\citep{bates2021distribution}.

Recent work has extended inductive conformal prediction to a setting with covariate shift~\citep{tibshirani2019conformal,lei2020conformal}; similarly, \citet{podkopaev2021distribution} considers conformal prediction under label shift~\citep{lipton2018detecting}, i.e., assuming the conditional probabilities $p(x\mid y)$ do not change. These approaches start from the assumption that the true importance weights (IWs) are known \citep{tibshirani2019conformal,podkopaev2021distribution}, or assume some properties of the estimated IWs \citep{lei2020conformal}, whereas our approach considers a special form of   ``ambiguity" in the estimated IWs. 
Furthermore, they are focused on fully unconditional validity,
whereas we obtain PAC prediction sets. In addition, \cite{cauchois2020robust} designs confidence sets that are robust to \emph{all} distribution shifts with bounded $f$-divergence; in contrast, we consider the unsupervised learning setting where we have unlabeled examples from the target distribution.
We provide additional related work in Appendix~\ref{sec:related}.

\section{Background on PAC Prediction Sets}
\label{sec:background}
We give background on PAC prediction sets \citep{Park2020PAC}; we also re-interpret this approach using Clopper-Pearson confidence intervals \citep{clopper1934use}, which aids our analysis.

\subsection{PAC Prediction Sets Algorithm}
Let $x \in \Xs$ be covariates and $y \in \Ys$ be labels. We consider a source distribution $P$ over $\Xs \times \Ys$ with probability density function (PDF) $p(x, y)$.\footnote{All quantities that we define are measurable with respect to a fixed $\sigma$-algebra on $\Xs\times \Ys$; for instance, $p$ is the density induced by a fixed $\sigma$-finite measure. To be precise, we consider a probability measure $P$ defined with respect to the base measure $M$ on $\Xs \times \Ys$; then, $p = \mathrm{d}P/\mathrm{d}M$ is the Radon-Nykodym derivative of $P$ with respect to $M$. For classification, $M$ is the product of a Lebesgue measure on $\Xs$ and a counting measure on $\Ys$.}
A \emph{prediction set}\footnote{We use the term ``prediction set'' to denote both the set-valued function and a set output by this function.} is a set-valued function $C:\Xs\to2^\Ys$.


\textbf{Inputs.}
We are given a held-out calibration set $S_m$ of i.i.d. samples $(x_i,y_i)\sim P$ for $i\in[m] \coloneqq \{1, \dots, m\}$, written as $S_m\sim P^m$, and a score function $\smash{f: \Xs \times \Ys \rightarrow \realnum_{\geq 0}}$. For example, $\smash{f(x, y)}$ can be a prediction for the probability that $y$ is the label for $x$. The score function can be arbitrary, but score functions assigning higher scores to likely labels yield smaller prediction sets.


\textbf{PAC prediction set.}
A \emph{PAC prediction set} is a set-valued function $C: \Xs \rightarrow 2^\Ys$ satisfying the following two conditions. First, $C$ is \emph{approximately correct} in the sense that its expected error (failing to contain the true label) is bounded by a given $\ep \in (0, 1)$, \ie
\begin{align*}
L_P(C)
\coloneqq \Exp_{(x, y) \sim P} \left[\mathbbm{1}( y \notin C(x) )\right]
= \Prob_{(x, y) \sim P}[ y \notin C(x) ]
\le \ep.
\end{align*}
Second, noting that $C_{S_m}$ is constructed based on a calibration set $S_m\sim P^m$, the condition that $C_{S_m}$ is approximately correct must be satisfied with high probability---i.e., for given $\delta \in (0, 1)$, we have
\eqas{
\Prob_{S_m \sim P^{m}} \[ L_P(C_{S_m}) \leq \ep \] \geq 1 - \delta.
}
Our goal is to devise an algorithm for constructing a PAC prediction set $C$. Letting $C(x)=\Ys$ for all $x\in \mathcal{X}$ always satisfies both conditions above, but this extreme case would be uninformative if used as a prediction set. Therefore, we additionally want to minimize the expected size $\Exp [ S(C(x)) ]$ of the prediction sets $C(x)$, where $S:2^\Ys\to\mathbb{R}_{\ge0}$ is a size measure, which is application specific (\eg the cardinality of a set in classification); however, we only rely on the monotonicity of the size measure with respect to the prediction set parameterization in construction.

\textbf{Algorithm.}
To construct $C$, we first define the search space of possible prediction sets along with the size measure $S$. We parameterize $C$ by a scalar $\tau \in \Ts \coloneqq \realnum_{\ge 0}$ as
\begin{align*}
C_\tau(x) = \{ y \in \Ys \mid f( x, y) \ge \tau \},
\end{align*}
i.e., $\tau$ is the threshold on $f(x,y)$ above which we include $y$ in $C(x)$.
Importantly, $\tau$ controls the tradeoff between size and expected error.
The reason is that if $\tau_1\le\tau_2$, then $C_{\tau_2}(x) \subseteq C_{\tau_1}(x)$ for all $x\in\Xs$. Thus, size is monotonically \emph{decreasing} in $\tau$---i.e., $S(C_{\tau_2}(x)) \le S(C_{\tau_1}(x))$ for all $x \in \Xs$, and error is monotonically \emph{increasing} in $\tau$---i.e., $L_P(C_{\tau_1}) \le L_P(C_{\tau_2})$. See Figure \ref{fig:sizeerrortradeoff} for an illustration, and \citet{Park2020PAC} and \citet{gupta2021nested} for details.

As a consequence, a typical goal is to construct $C_\tau$ that provably contains the true label with high probability, while empirically minimizing size~\citep{vovk2005algorithmic,gupta2021nested}.
In our setting, we want to maximize $\tau$ (equivalently, minimize expected size) subject to the constraint that $C_\tau$ is PAC.
Let 
$\bar{L}_{S_m}(C_\tau) \coloneqq \sum_{(x, y) \in S_m} \mathbbm{1}( y \notin C_\tau(x))$ be the empirical error count on $S_m$, and 
$F(k;m,\ep)=\sum_{i=0}^k\binom{m}{k}\ep^i(1-\ep)^{m-i}$ be the cumulative distribution function (CDF) of the binomial distribution $\text{Binom}(m,\ep)$ with $m$ trials and success probability $\ep$. In prior work, \cite{Park2020PAC} constructs $C_{\hat\tau}$ by solving the following problem:
\begin{align}
\label{eqn:iidpacconf}
\hat{\tau}=\max_{\tau\in\Ts}~\tau
\quad\text{subj. to}\quad
\bar{L}
_{S_m}(C_\tau)\le k(m,\ep,\delta),
\end{align}
where
$
k(m,\ep,\delta)=\max_{k\in\mathbb{N}\cup\{0\}}~k~\text{subj. to}~ F(k;m,\ep) \le \delta.
$
Intuitively, the PAC guarantee via this construction is related to the Binomial distribution;
$\bar{L}_{S_m}(C)$ has distribution $\text{Binom}(m,L_P(C))$ (given a fixed $C$), since $\mathbbm{1}(y\not\in C(x))$ has a $\text{Bernoulli}(L_P(C))$ distribution when $(x,y)\sim P$. Thus, $k(m,\ep,\delta)$ defines a ``confidence interval'' such that if $\bar{L}_{S_m}(C)\le k(m,\ep,\delta)$, then $L_P(C)\le\ep$ with probability at least $1-\delta$.
Below, we formalize this intuition by drawing a connection to the Clopper-Pearson confidence interval.



\subsection{Clopper-Pearson Interpretation}
\label{sec:cpbound}
We interpret (\ref{eqn:iidpacconf}) using the Clopper-Pearson (CP) upper bound $\overline{\theta}(k;m, \delta)\in[0,1]$~\citep{clopper1934use,park2021pac}. This is an upper bound on the true success probability $\mu$, constructed from a sample $k\sim\text{Binom}(m,\mu)$, which holds with probability at least $1-\delta$, i.e.,
$
\Prob_{k\sim\text{Binom}(m,\mu)}[\mu\le\overline{\theta}(k;m,\delta)]\ge1-\delta$, where
\eqa{
\overline{\theta}(k;m, \delta) \coloneqq \inf \left\{ \theta \in [0, 1] \vmid F(k; m, \theta) \leq \delta \right\}\cup\{1\}.
\label{eq:cpupperbound}
}
Intuitively, for a fixed $C$, $\bar{L}_{S_m}(C)\sim\text{Binom}( m, L_P(C) )$, so the true error $L_P(C)$ is contained in the CP upper bound $\overline{\theta}(\overline{L}_{S_m}(C);m;\delta)$ with probability at least $1-\delta$. Intuitively, we can therefore choose $\tau$ so that this upper bound is $\le\ep$. However, we need to account for generalization error of our estimator.
To this end, we have the following (see Appendix \ref{apdx:cp_bound_proof} for a proof and Algorithm \ref{alg:cp} in Appendix \ref{apdx:alg} for implementation details on the corresponding algorithm):
\begin{theorem}
\label{thm:cp_bound_gen}
Let $U_{\text{CP}}(C,S_m,\delta) \coloneqq \overline{\theta}(\bar{L}_{S_m}(C); m; \delta)$, where $\overline{\theta}$ is defined in  (\ref{eq:cpupperbound}). 
Let $\hat\tau$ be the solution of the following problem\footnote{We consider a trivial solution $\tau = 0$ when  (\ref{eqn:cp_bound_gen}) is not feasible, but omitting here to avoid clutter.}:
\begin{align}
\label{eqn:cp_bound_gen}
\hat{\tau}=\max_{\tau\in\mathcal{T}}\tau
\quad\text{subj. to}\quad
U_{\text{CP}}(C_\tau,S_m,\delta)\le\ep.
\end{align}
Then, we have $\Prob_{S_m\sim P^m}[L_P(C_{\tau})\le\ep]\ge1-\delta$ for any $\tau \le \hat\tau$.
\end{theorem}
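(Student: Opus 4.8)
The plan is to reduce the statement to the binomial‑tail reasoning already behind~\eqref{eqn:iidpacconf}, and then to handle the adaptivity of $\hat\tau$ by comparing against a fixed, data‑independent prediction set. As a preliminary I would record that for any fixed set $C$ the Clopper-Pearson constraint $U_{\text{CP}}(C,S_m,\delta)\le\ep$ is equivalent to $\bar{L}_{S_m}(C)\le k(m,\ep,\delta)$: since $F(k;m,\theta)$ is non‑increasing in $\theta$, the infimum $\overline{\theta}(k;m,\delta)=\inf\{\theta:F(k;m,\theta)\le\delta\}$ is $\le\ep$ exactly when $F(k;m,\ep)\le\delta$, i.e. exactly when $k\le k(m,\ep,\delta)$. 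Hence~\eqref{eqn:cp_bound_gen} and~\eqref{eqn:iidpacconf} share the optimum $\hat\tau$, and in the degenerate case where neither problem is feasible $\hat\tau=0$, for which $L_P(C_0)=L_P(\Ys)=0$ and the guarantee is trivial.

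Next I would carry out two reductions. First, if $\tau\le\hat\tau$ then $C_{\hat\tau}(x)\subseteq C_\tau(x)$ pointwise, so $L_P(C_\tau)\le L_P(C_{\hat\tau})$ and $\{L_P(C_{\hat\tau})\le\ep\}\subseteq\{L_P(C_\tau)\le\ep\}$; it therefore suffices to prove $\Prob_{S_m}[L_P(C_{\hat\tau})\le\ep]\ge1-\delta$. Second, because $\tau\mapsto\bar{L}_{S_m}(C_\tau)$ and $\tau\mapsto L_P(C_\tau)$ are non‑decreasing and $\hat\tau$ is the largest feasible threshold, $L_P(C_{\hat\tau})>\ep$ occurs if and only if there is some $\tau$ with $\bar{L}_{S_m}(C_\tau)\le k(m,\ep,\delta)$ and $L_P(C_\tau)>\ep$. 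So the whole problem reduces to bounding the probability of this ``some bad threshold looks empirically good'' event by $\delta$.

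For this, let $\tau^*=\inf\{\tau\in\mathcal T:L_P(C_\tau)>\ep\}$ (if this set is empty the bad event is empty and we are done). Every bad $\tau$ satisfies $\tau\ge\tau^*$, and I split on the boundary. If $L_P(C_{\tau^*})>\ep$, set $C'=C_{\tau^*}$; then $C_\tau\subseteq C'$ for every bad $\tau$, hence $\bar{L}_{S_m}(C_\tau)\ge\bar{L}_{S_m}(C')$, so the bad event lies inside $\{\bar{L}_{S_m}(C')\le k(m,\ep,\delta)\}$. Otherwise $L_P(C_{\tau^*})\le\ep$, every bad $\tau$ is strictly above $\tau^*$, and I set $C'(x)=\{y\in\Ys:f(x,y)>\tau^*\}=\bigcup_{\tau>\tau^*}C_\tau(x)$; again $C_\tau\subseteq C'$ for bad $\tau$, and by continuity of measure from above $L_P(C')=\lim_{\tau\downarrow\tau^*}L_P(C_\tau)\ge\ep$. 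In both cases $C'$ is data‑independent with $L_P(C')\ge\ep$, so $\bar{L}_{S_m}(C')\sim\text{Binom}(m,L_P(C'))$ and
\begin{equation*}
\Prob_{S_m}\big[\bar{L}_{S_m}(C')\le k(m,\ep,\delta)\big]=F\big(k(m,\ep,\delta);m,L_P(C')\big)\le F\big(k(m,\ep,\delta);m,\ep\big)\le\delta,
\end{equation*}
using monotonicity of $F$ in its probability argument and the definition of $k(m,\ep,\delta)$. Chaining the inclusions bounds the bad event by $\delta$.

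The only genuinely delicate step is the choice of $C'$. Because the thresholds realizing a given empirical error count depend on $S_m$, one cannot simply apply Clopper-Pearson to ``$C_{\hat\tau}$''; one must exhibit a single data‑independent set dominating every bad $C_\tau$, and this forces the case distinction according to whether the boundary threshold $\tau^*$ is itself bad --- equivalently whether the critical set is $\{f(\cdot,y)\ge\tau^*\}$ or $\{f(\cdot,y)>\tau^*\}$ --- together with the left‑continuity/measure‑continuity bookkeeping needed to keep $L_P(C')\ge\ep$. The remaining ingredients (monotonicity of $F$ and $\overline{\theta}$, measurability of the sublevel sets of $f$, and attainment of the maxima) are routine.
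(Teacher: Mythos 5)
Your proposal is correct, and its first half---showing that the constraint $U_{\text{CP}}(C_\tau,S_m,\delta)\le\ep$ is equivalent to $F(\bar{L}_{S_m}(C_\tau);m,\ep)\le\delta$, i.e.\ to $\bar{L}_{S_m}(C_\tau)\le k(m,\ep,\delta)$---is exactly what the paper's proof does. Where you diverge is in what happens next: the paper stops there, observing that (\ref{eqn:cp_bound_gen}) and (\ref{eqn:iidpacconf}) are therefore the same optimization problem, and then simply invokes Theorem~1 of \citet{Park2020PAC} together with monotonicity of $L_P(C_\tau)$ in $\tau$. You instead re-prove that underlying PAC generalization result from scratch: you characterize the failure event as ``some threshold with true error above $\ep$ passes the empirical test,'' dominate all such thresholds by a single data-independent set $C'$ built from $\tau^*=\inf\{\tau: L_P(C_\tau)>\ep\}$ (with the correct case split on whether the critical set is $\{f\ge\tau^*\}$ or $\{f>\tau^*\}$), and apply the binomial tail bound to $C'$. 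This is the standard one-dimensional uniform-convergence argument, and your handling of the boundary case via continuity of measure to get $L_P(C')\ge\ep$ is right. The trade-off is clear: the paper's route is shorter and modular but opaque about why adaptivity of $\hat\tau$ is harmless, whereas yours is self-contained and makes explicit the one genuinely delicate point---that one cannot apply Clopper--Pearson directly to the data-dependent $C_{\hat\tau}$ and must instead exhibit a fixed dominating set. Both are valid proofs of the stated theorem.
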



\section{PAC Prediction Sets Under Covariate Shift}

We extend the PAC prediction set algorithm described in Section~\ref{sec:background} to the covariate shift setting. Our novel approach combines rejection sampling \citep{vonNeumann1951} with Theorem \ref{thm:cp_bound_gen}.  

\subsection{Problem Formulation}
We assume labeled training examples from the \emph{source distribution} $P$ are given, but want to construct prediction sets that satisfy the PAC property with respect to a (possibly) shifted \emph{target distribution} $Q$. Both of these are distributions over $\Xs \times \Ys$; denote their PDFs by $p(x, y)$ and $q(x, y)$, respectively. The challenge is that we are only given unlabeled examples from $Q$. 

\textbf{Preliminaries and assumptions.}
We denote the likelihood ratio of covariate distributions by $w^*(x) \coloneqq q(x)/p(x)$, also called the \emph{importance weight (IW)} of $x$. 
Our main assumption is the \emph{covariate shift} condition, which says the label distributions are equal (i.e., $p(y\mid x) = q(y\mid x)$), while the covariate distributions may differ (i.e., we can have $p(x)\neq q(x)$).

\textbf{Inputs.}
We assume a labeled calibration set $S_m$ consisting of i.i.d. samples $(x_i,y_i)\sim P$ (for $i\in[m]$) is given,
and a score function $\smash{f:\Xs\times\Ys\to\mathbb{R}_{\ge0}}$.
For now, we also assume we have the true importance weights $w_i^*\coloneqq w^*(x_i)$ for each $(x_i,y_i)\sim P$, as well as an upper bound $b\ge\max_{x\in\Xs}w^*(x)$ on the IWs. In Sections~\ref{sec:e2epac} and Appendix \ref{sec:iw_estimation}, we describe how to estimate these quantities in a way that provides guarantees under smoothness assumptions on the distributions.\footnote{In practice, we can improve stability of importance weights by considering source and target distributions induced by a feature mapping, e.g., learned using unsupervised domain-adaptation~\citep{park2020calibrated}.}


\textbf{Problem.}
Our goal is to construct $C_{S_m}$ that is a PAC prediction set under $Q$---i.e.,
\eqas{
\Prob_{S_m \sim P^{m}
} \[ L_Q(C_{S_m
}) \leq \ep \] \geq 1 - \delta,
}
where $\smash{L_Q(C) \coloneqq  \Prob_{(x,y)\sim Q}[y\not\in C(x)]}$ is the error of $C$ on $Q$, while empirically controlling its size.


\subsection{Rejection Sampling Strategy}
%

Our strategy is to use rejection sampling to convert $S_m$ consisting of i.i.d. samples from $P$ into a labeled calibration set consisting of i.i.d. samples from $Q$.

\textbf{Rejection sampling.}
Rejection sampling \citep{vonNeumann1951,mcbook,rubinstein2016simulation} is a technique for generating samples from a target distribution $q(x)$ based on samples from a proposal distribution $p(x)$. Given importance weights (IWs) $w^*(x)$ and an upper bound $b\ge\max_{x\in\Xs}w^*(x)$, it constructs a set of i.i.d. samples from $q(x)$.
Typically, rejection sampling is used when the proposal distribution is easy to sample compared to the target distribution. In contrast, we use it to convert samples from the source distribution into samples from the target distribution.

In particular, our algorithm takes the proposal distribution to be the source covariate distribution $P_X$, and the target distribution to be our target covariate distribution $Q_X$. 
Let $V_i \sim U\coloneqq \text{Uniform}([0,1])$ be i.i.d.,
$V \coloneqq (V_1, \dots, V_m)$, and
$\vec{w}^* \coloneqq (w_1^*, \dots, w_m^*)$ with $w_i^* = w^*(x_i)$ as defined before.
Then, given $S_m$, it uses rejection sampling to construct a randomly chosen set of $N$ samples
\eqa{
T_N(S_m,V,\vec{w}^*,b) \coloneqq \left\{ (x_i, y_i) \in S_m \vmid V_i \le {w_i^*}/{b} \right\}
\label{eq:targetsamples}
}
from $Q_X$.
The expected number of samples is $\Exp [N] = m/b$; thus, rejection sampling is more effective when the proposal distribution is similar to the target. 

\textbf{Rejection sampling Clopper-Pearson (RSCP) bound.}
Given $T_N$, our algorithm uses the CP bound to construct a PAC prediction set $C$ for $Q$. Let $\sigma_i \coloneqq \mathbbm{1}(V_i \le w_i^*/b)$
indicate whether example $(x_i,y_i)\in S_m$ is accepted---i.e., $T_N(S_m, V, \vec{w}^*, b) = \{ (x_i, y_i) \in S_m \mid \sigma_i = 1 \}$,where $|T_N(S_m,V, \vec{w}^*, b)| = \sum_{i=1}^m \sigma_i$.
Then, it follows that $U_{\text{RSCP}}$ bounds the error on $Q$, where
\eqa{
    U_{\text{RSCP}}(C, S_m, V, \vec{w}^*, b, \delta) \coloneqq U_{\text{CP}}(C,T_N(S_m,V,\vec{w}^*,b),\delta). \label{eq:rscp_bound}
}
Thus, we have the following (see Appendix \ref{apdx:rscp_bound_proof} for the proof, and Algorithm \ref{alg:rscp} in Appendix \ref{apdx:alg} for a detailed description of the PS-R algorithm that leverages this bound):

\begin{theorem} \label{thm:rscp_bound_gen}
Define $U_{\text{RSCP}}$ as in (\ref{eq:rscp_bound}).
Let $\hat\tau$ be the solution of the following problem:
\begin{align}
\hat{\tau}=\max_{\tau\in\mathcal{T}}\tau
\quad\text{subj. to}\quad
U_{\text{RSCP}}(C_\tau, S_m, V, \vec{w}^*, b, \delta) \le\ep.
\label{eq:prob_rscp}
\end{align}
Then, we have $\Prob_{S_m\sim P^m,V\sim U^m}\left[L_Q(C_{\tau}) \le \ep\right]\ge1-\delta$ for any $\tau \le \hat\tau$.
\end{theorem}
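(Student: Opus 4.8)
The plan is to condition on the acceptance pattern generated by rejection sampling, reduce the conditional problem to Theorem~\ref{thm:cp_bound_gen} applied to the target distribution $Q$, and then integrate out the randomness in the pattern. Throughout we use the convention from the footnote: if (\ref{eq:prob_rscp}) is infeasible we set $\hat\tau = 0$, and since $f \ge 0$ we always have $C_0(x) = \Ys$, hence $L_Q(C_0) = 0$.

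First I would fix the acceptance indicators $\sigma \coloneqq (\sigma_1, \dots, \sigma_m)$ with $\sigma_i = \mathbbm{1}(V_i \le w_i^*/b)$, and write $n \coloneqq \sum_{i=1}^m \sigma_i$ for the (now deterministic) number of accepted points, so that $T_N$ consists exactly of the pairs $(x_i,y_i)$ with $\sigma_i = 1$. The crucial step is the correctness of rejection sampling in our setting: conditionally on $\sigma$, the accepted pairs are i.i.d.\ draws from $Q$. This holds because the triples $(x_i,y_i,V_i)$ are independent across $i$ and $\sigma_i$ is a function of $(x_i,V_i)$ only, so the pairs $\big((x_i,y_i),\sigma_i\big)$ are independent across $i$; for a single index, the bound $b \ge \max_x w^*(x)$ makes $w_i^*/b \in [0,1]$ a valid acceptance probability, the event $\{\sigma_i = 1\}$ has probability $\Exp_{x\sim P_X}[w^*(x)/b] = 1/b$, and on this event $x_i$ has density proportional to $p(x)\,w^*(x) = q(x)$, i.e.\ $x_i \sim Q_X$; combining with the covariate shift assumption $p(y\mid x) = q(y\mid x)$ gives $(x_i,y_i)\sim Q$ conditionally on $\sigma_i = 1$. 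Hence, conditionally on $\sigma$, $T_N$ is a set of $n$ i.i.d.\ samples from $Q$.

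Next I would split into two cases. If $n = 0$, then $T_N = \emptyset$, so $\bar{L}_{T_N}(C_\tau) = 0$ with $N = 0$, and $U_{\text{RSCP}} = \overline{\theta}(0;0;\delta) = 1 > \ep$; thus (\ref{eq:prob_rscp}) is infeasible, $\hat\tau = 0$, and $L_Q(C_{\hat\tau}) = 0 \le \ep$ with conditional probability one. If $n \ge 1$, then conditionally on $\sigma$ the set $T_N$ plays the role of an i.i.d.\ calibration set of size $n$ from $Q$, and the program (\ref{eq:prob_rscp}) is, by definition of $U_{\text{RSCP}}$ in (\ref{eq:rscp_bound}), exactly the program (\ref{eqn:cp_bound_gen}) with $(P, S_m, m)$ replaced by $(Q, T_N, n)$. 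Since Theorem~\ref{thm:cp_bound_gen} holds for an arbitrary distribution and sample size, it applies verbatim here and yields $\Prob\big[L_Q(C_{\hat\tau}) \le \ep \mid \sigma\big] \ge 1 - \delta$; by monotonicity of $\tau \mapsto L_Q(C_\tau)$ this extends to $\Prob\big[L_Q(C_{\tau}) \le \ep \mid \sigma\big] \ge 1 - \delta$ for every $\tau \le \hat\tau$.

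Finally, in both cases the conditional probability of $\{L_Q(C_\tau) \le \ep\}$ given $\sigma$ is at least $1-\delta$, so by the tower property, taking the expectation over $\sigma$ (equivalently over $S_m \sim P^m$ and $V \sim U^m$) gives $\Prob_{S_m\sim P^m,\,V\sim U^m}[L_Q(C_\tau)\le\ep] \ge 1-\delta$ for every $\tau \le \hat\tau$, which is the claim. I expect the only genuine subtlety to be the conditioning argument in the second paragraph: making precise that rejection sampling returns conditionally-i.i.d.\ target samples given the acceptance pattern, so that the joint data-dependence of $\hat\tau$ on both $S_m$ and $V$ is completely absorbed by conditioning on $\sigma$ and the reduction to Theorem~\ref{thm:cp_bound_gen} is exact; the $n = 0$ edge case and the final averaging are routine.
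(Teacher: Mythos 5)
Your proof is correct and follows essentially the same route as the paper's: condition on the rejection-sampling randomness so that the accepted points form an i.i.d.\ calibration set from $Q$, apply Theorem~\ref{thm:cp_bound_gen} conditionally, and then average out the conditioning. The only (harmless) difference is that you condition on the full acceptance pattern $\sigma$ and justify conditional i.i.d.-ness via independence of the pairs $\big((x_i,y_i),\sigma_i\big)$ across $i$, whereas the paper conditions only on the count $N$ and uses an informal ``stages'' argument; your version, which also treats the $n=0$ edge case explicitly, is if anything the cleaner one.
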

Note that $V$ is sampled only once for Algorithm \ref{alg:rscp}, so the randomness in $V$ can contribute to the failure of the correctness guarantee; however, the failure rate is controlled by $\delta$.






\subsection{Approximate Importance Weights}
\label{sec:e2epac}
So far, we have assumed that the true importance weight $w^*(x)$ is known. Since in practice, it needs to be estimated, we relax this assumption to only needing an uncertainty set of possible importance weights. This allows us to handle estimation error in the weights.


\textbf{Problem.}
Let 
$S_m^X$ be unlabeled calibration examples from the source distribution (i.e., the covariates in $S_m$), 
$T_n^X$ be $n$ unlabeled calibration examples from the target distribution (denoted by $T_n^X \sim Q_X^n$),
and $\vec{w}^* \coloneqq (w_1^*,...,w_m^*)\in\mathbb{R}^m$ be the vector of true importance weights $w_i^*\coloneqq w^*(x_i)$, for $(x_i,y_i)\in S_m$. Then, we assume an uncertainty set $\Ws\subseteq\mathbb{R}^m$ that has $\vec{w}^*$ with high probability, i.e.,
\eqa{
\Prob_{S_m^X \sim P_X^m, T_n^X \sim Q_X^n}
\left[ \vec{w}^* \in \Ws
\right] \ge 1 - \delta_w, \label{eq:uncertaintyset_guarantee}
}
where
$\delta_w\in(0,1)$.
We assume $\Ws$ has the form
\eqas{
\Ws
\coloneqq \left\{ w \in \realnum^m \vmid 
\forall i\in[m] \;,\; \underline{w}_i \le w_i \le \overline{w}_i \right\},
}
for some $\underline{w}_i$ and $\overline{w}_i$.
See the discussion on our choice of $\Ws$ in Appendix \ref{apdx:discapprxomateiws}.



\textbf{Robust Clopper-Pearson bound.}
To construct a PAC prediction set $C$ for $Q$, it suffices to bound the worst-case error over $w\in\Ws$, i.e., we have the following (see Appendix \ref{apdx:e2e_ps_bound_proof} for the proof):
\begin{theorem}
\label{thm:e2e_ps_bound_gen}
Suppose $\Ws$ satisfies (\ref{eq:uncertaintyset_guarantee}).
Define $U_{\text{RSCP}}$ as in (\ref{eq:rscp_bound}).
Let 
$\hat\tau$ be the solution of the following:
\begin{align}
\label{eq:ucb_worst}
\hat{\tau}=\max_{\tau\in\mathcal{T}}\tau
\quad\text{subj. to}\quad
\max_{w \in \Ws} U_{\text{RSCP}}(C_\tau, S_m, V, w, b, \delta_C)\le\ep.
\end{align}
Then, we have $\Prob_{S_m\sim P^m,V\sim U^m, T_n^X \sim Q_X^n}[L_Q(C_{\tau}) \leq \ep]\ge1-\delta_C-\delta_w$ for any $\tau \le \hat\tau$.
\end{theorem}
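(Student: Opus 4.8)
The plan is to bootstrap off Theorem~\ref{thm:rscp_bound_gen}: condition on the high-probability event that the uncertainty set $\Ws$ actually contains the true importance-weight vector $\vec{w}^*$, observe that on that event the robust problem~(\ref{eq:ucb_worst}) is a \emph{tightening} of the known-weight problem~(\ref{eq:prob_rscp}) run at confidence $\delta_C$, and then pay for the failure of this event by an ordinary union bound, which is precisely what turns a $1-\delta_C$ guarantee into a $1-\delta_C-\delta_w$ guarantee.

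Concretely, I would set up two events. Let $E_w\coloneqq\{\vec{w}^*\in\Ws\}$; by assumption~(\ref{eq:uncertaintyset_guarantee}), and since $S_m$ contains the covariates $S_m^X$, marginalizing the joint law over the labels and over $V$ gives $\Prob_{S_m\sim P^m,\,V\sim U^m,\,T_n^X\sim Q_X^n}[E_w]\ge 1-\delta_w$. Let $\hat\tau_{\mathrm{R}}$ be the solution of problem~(\ref{eq:prob_rscp}) run with the \emph{true} weights $\vec{w}^*$ and confidence level $\delta_C$, and set $G\coloneqq\{\,L_Q(C_\tau)\le\ep \text{ for all } \tau\le\hat\tau_{\mathrm{R}}\,\}$; by Theorem~\ref{thm:rscp_bound_gen} (using monotonicity of $L_Q(C_\tau)$ in $\tau$ from Section~\ref{sec:background}) we have $\Prob_{S_m\sim P^m,\,V\sim U^m}[G]\ge 1-\delta_C$, and since $G$ is a function of $(S_m,V)$ alone this bound is unchanged after adjoining the independent $T_n^X$. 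The deterministic core is that on $E_w$ one has $\hat\tau\le\hat\tau_{\mathrm{R}}$: because $\vec{w}^*\in\Ws$, for every $\tau$ we have $\max_{w\in\Ws}U_{\text{RSCP}}(C_\tau,S_m,V,w,b,\delta_C)\ge U_{\text{RSCP}}(C_\tau,S_m,V,\vec{w}^*,b,\delta_C)$, so any $\tau$ feasible for~(\ref{eq:ucb_worst}) is feasible for~(\ref{eq:prob_rscp}); hence the feasible set of~(\ref{eq:ucb_worst}) is contained in that of~(\ref{eq:prob_rscp}) and so are their suprema. (If~(\ref{eq:ucb_worst}) is infeasible, the convention $\hat\tau=0$ gives $C_0=\Ys$ with $L_Q(C_0)=0\le\ep$, so the claim is immediate and we may assume feasibility.)

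Assembling: on $E_w\cap G$, any $\tau\le\hat\tau$ also satisfies $\tau\le\hat\tau_{\mathrm{R}}$, hence $L_Q(C_\tau)\le\ep$ by the definition of $G$; therefore $\{L_Q(C_\tau)\le\ep\}\supseteq E_w\cap G$ for every $\tau\le\hat\tau$, and
\[
\Prob_{S_m\sim P^m,\,V\sim U^m,\,T_n^X\sim Q_X^n}[L_Q(C_\tau)>\ep]\le\Prob[E_w^c]+\Prob[G^c]\le\delta_w+\delta_C,
\]
which is exactly the assertion. I expect the only real obstacle to be bookkeeping across the three independent sources of randomness: $\Ws$ (and hence $\hat\tau$) depends on the target-unlabeled sample $T_n^X$, which never appears in Theorem~\ref{thm:rscp_bound_gen}. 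This is resolved cleanly by noting that the good event $G$ inherited from that theorem does not involve $T_n^X$ at all, so $E_w$ and $G$ can be combined by a plain union bound regardless of how they are coupled; no independence between $E_w$ and $G$ is needed.
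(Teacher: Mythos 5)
Your proof is correct and follows essentially the same route as the paper's: compare $\hat\tau$ to the solution $\tilde\tau$ of the known-weight problem (\ref{eq:prob_rscp}), use $\vec{w}^*\in\Ws$ to get $\hat\tau\le\tilde\tau$ since the robust constraint is tighter, invoke monotonicity of $L_Q(C_\tau)$ in $\tau$ and Theorem~\ref{thm:rscp_bound_gen}, and finish with a union bound over the two failure events. Your explicit event decomposition $E_w\cap G$ and the remark that $G$ does not depend on $T_n^X$ make the union-bound step slightly more careful than the paper's write-up, but the argument is the same.
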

A key challenge applying Theorem~\ref{thm:e2e_ps_bound_gen} is solving the maximum over $w\in\Ws$. We propose a simple greedy algorithm that achieves the maximum.

\textbf{Greedy algorithm for robust $U_{\text{RSCP}}$.} 
The RSCP bound $U_{\text{RSCP}}$ satisfies certain monotonicity properties that enable us to efficiently compute an upper bound to the maximum in (\ref{eq:ucb_worst}).
In particular, if $C$ makes an error on $(x_i, y_i)$ (i.e., $\smash{y_i\not\in C(x_i)}$), then $U_{\text{RSCP}}$ is monotonically non-decreasing in  $w_i^*=w^*(x_i)$; intuitively, this holds since a
larger $w_i^*$ increases the probability that $(x_i,y_i)$ is included in $T_N(S_m,V,w^*,b)$, which in turn increases the empirical error $\smash{\Lb_{T_N(S_m,V,w^*,b)}(C)}$.
Conversely, if $C$ does not make an error on $(x_i, y_i)$ (i.e., $\smash{y_i\in C(x_i)}$), $U_{\text{RSCP}}$ is non-increasing in $w_i^*$. More formally, we have the following result (see Appendix \ref{apdx:mono_rejection_bound_proof} for a proof):
\begin{lemma} \label{lem:mono_rejection_bound}
For any $i \in [m]$, $U_{\text{RSCP}}(C,S_m,V, w^*,b,\delta)$ is monotonically non-decreasing in $w_i^*$ if $\smash{y_i \notin C(x_i)}$, and monotonically non-increasing in $w_i^*$ if $\smash{y_i \in C(x_i)}$.
\end{lemma}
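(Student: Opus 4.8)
The plan is to observe that $U_{\text{RSCP}}(C,S_m,V,w^*,b,\delta) = \overline{\theta}\bigl(\Lb_{T_N}(C);\,N;\,\delta\bigr)$, where $N = \sum_{j=1}^m \sigma_j$ and $\Lb_{T_N}(C) = \sum_{j=1}^m \sigma_j\,\mathbbm{1}(y_j \notin C(x_j))$ with $\sigma_j = \mathbbm{1}(V_j \le w_j^*/b)$, depends on $w_i^*$ only through the acceptance indicator $\sigma_i$, since all other $V_j$ and $w_j^*$ for $j \ne i$ are held fixed. Because $b > 0$, the map $w_i^* \mapsto \sigma_i = \mathbbm{1}(V_i \le w_i^*/b)$ is non-decreasing and takes only the values $0$ and $1$; hence the monotonicity of $U_{\text{RSCP}}$ in $w_i^*$ reduces to comparing its value at $\sigma_i = 0$ with its value at $\sigma_i = 1$.

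First I would write $n_0 \coloneqq \sum_{j \ne i}\sigma_j$ and $k_0 \coloneqq \sum_{j \ne i}\sigma_j\,\mathbbm{1}(y_j \notin C(x_j))$, so that $\sigma_i = 0$ gives $U_{\text{RSCP}} = \overline{\theta}(k_0;n_0;\delta)$ while $\sigma_i = 1$ gives $U_{\text{RSCP}} = \overline{\theta}\bigl(k_0 + \mathbbm{1}(y_i\notin C(x_i));\,n_0+1;\,\delta\bigr)$. Thus if $y_i \notin C(x_i)$ the comparison is between $\overline{\theta}(k_0;n_0;\delta)$ and $\overline{\theta}(k_0+1;n_0+1;\delta)$, and if $y_i \in C(x_i)$ it is between $\overline{\theta}(k_0;n_0;\delta)$ and $\overline{\theta}(k_0;n_0+1;\delta)$. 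So the lemma follows once we establish the two monotonicity facts for the Clopper--Pearson upper bound: (i) $\overline{\theta}(k+1;n+1;\delta) \ge \overline{\theta}(k;n;\delta)$, and (ii) $\overline{\theta}(k;n+1;\delta) \le \overline{\theta}(k;n;\delta)$, for all integers $0 \le k \le n$.

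Proving (i) and (ii) is the main obstacle. I would use the representation $\overline{\theta}(k;n;\delta) = \inf\bigl(\{\theta \in [0,1] : F(k;n,\theta) \le \delta\}\cup\{1\}\bigr)$ from (\ref{eq:cpupperbound}), which is antitone in the CDF pointwise: if $F(k';n',\theta) \ge F(k;n,\theta)$ for every $\theta \in [0,1]$, then $\{\theta : F(k';n',\theta)\le\delta\}\cup\{1\} \subseteq \{\theta : F(k;n,\theta)\le\delta\}\cup\{1\}$, and the infimum of the smaller set is at least that of the larger, so $\overline{\theta}(k';n';\delta) \ge \overline{\theta}(k;n;\delta)$. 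It therefore suffices to prove the pointwise binomial-CDF inequalities $F(k+1;n+1,\theta) \ge F(k;n,\theta)$ and $F(k;n+1,\theta) \le F(k;n,\theta)$ for all $\theta$. Both come from the decomposition $\text{Binom}(n+1,\theta) \overset{d}{=} \text{Binom}(n,\theta) + B$ with $B\sim\text{Bernoulli}(\theta)$ independent, which yields $F(\ell;n+1,\theta) = (1-\theta)F(\ell;n,\theta) + \theta F(\ell-1;n,\theta)$; taking $\ell = k+1$ and using $F(k+1;n,\theta) \ge F(k;n,\theta)$ gives the first inequality, and taking $\ell = k$ and using $F(k-1;n,\theta)\le F(k;n,\theta)$ gives the second (both use that $\ell\mapsto F(\ell;n,\theta)$ is non-decreasing).

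Finally, I would check the boundary cases: when $n_0 = 0$, or when a sublevel set $\{\theta : F(k;n,\theta)\le\delta\}$ is empty so that $\overline{\theta} = 1$ by the $\cup\{1\}$ convention, the set-inclusion argument still applies verbatim since $\overline{\theta}$ always lies in $[0,1]$; no separate treatment is needed. The only conceptual point is that the sign of the monotonicity in $w_i^*$ flips precisely according to whether accepting $(x_i,y_i)$ increments the error count $\Lb_{T_N}(C)$ (when $y_i\notin C(x_i)$) or only the sample count $N$ (when $y_i \in C(x_i)$).
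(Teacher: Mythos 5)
Your proof is correct and follows essentially the same route as the paper's: you reduce the perturbation of $w_i^*$ to the binary acceptance indicator $\sigma_i$, observe that accepting $(x_i,y_i)$ changes the pair (sample count, error count) by $(+1,+1)$ or $(+1,0)$ according to whether $y_i\notin C(x_i)$, and invoke the two Clopper--Pearson monotonicity facts, which are precisely the paper's Lemma~\ref{lem:mono_CP}. The only cosmetic difference is that you establish the underlying binomial-CDF inequalities via the convolution identity $F(\ell;n+1,\theta)=(1-\theta)F(\ell;n,\theta)+\theta F(\ell-1;n,\theta)$, whereas the paper uses direct event inclusion; both are equivalent one-line arguments.
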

Our greedy algorithm leverages the monotonicity of $U_\text{RSCP}$.
In particular, given $\Ws$ and $C$, the choice
\begin{align}
\wh \coloneqq (\wh_1, \dots, \wh_m), \qquad\text{where}\qquad
\wh_i=\begin{cases}
\overline{w}_i&\text{if }y_i\not\in C(x_i) \\
\underline{w}_i&\text{if }y_i\in C(x_i)
\end{cases}
\qquad(\forall i\in[m])
\label{eq:greedyiw}
\end{align}
is the maximum value over $w\in\Ws$ of the constraint $U_{\text{RSCP}}(C_\tau,S_m,V,w,b,\delta_C)$ in (\ref{eq:ucb_worst}), \ie
\begin{align}
\max_{w \in \Ws} U_{\text{RSCP}}(C, S_m, V, w, b, \delta) = U_{\text{RSCP}}(C, S_m, V, \wh, b, \delta).
\label{eq:ucb_worst_upper}
\end{align}
Thus, we have the following, which follows by (\ref{eq:ucb_worst_upper}) and the same argument as the Theorem~\ref{thm:e2e_ps_bound_gen}:

\begin{theorem}
\label{thm:e2e_ps_bound_gen_greedy}
Suppose $\Ws$ satisfies (\ref{eq:uncertaintyset_guarantee}).
Define $\wh_{\tau, S_m^X, T_n^X}$  as in (\ref{eq:greedyiw}), making the dependency on $\tau$, $S_m^X$, and $T_n^X$ explicit, and $U_{\text{RSCP}}$ as in (\ref{eq:rscp_bound}). 
Let $\hat\tau$ be the solution of the following problem:
\begin{align}
\hat{\tau}=\max_{\tau\in\mathcal{T}}\tau
\quad\text{subj. to}\quad
U_{\text{RSCP}}(C_\tau, S_m, V, \hat{w}_{\tau, S_m^X, T_n^X}, b, \delta_C)\le\ep.
\label{eq:ucb_worst_approx}
\end{align}
Then, we have $\Prob_{S_m\sim P^m,V\sim U^m,T_n^X \sim Q_X^n}[L_Q(C_{\tau}) \leq \ep]\ge1-\delta_C-\delta_w$ for any $\tau \le \hat\tau$.
\footnote{We assume $b$ is given for simplicity, but our approach  estimates it; see Appendix \ref{apdx:bestimation} for details.}
\end{theorem}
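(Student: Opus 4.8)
The plan is to reduce \thmref{thm:e2e_ps_bound_gen_greedy} to \thmref{thm:e2e_ps_bound_gen}, which already handles the worst-case-over-$\Ws$ formulation, by proving that the greedy weight vector $\wh$ in (\ref{eq:greedyiw}) actually attains the inner maximum $\max_{w\in\Ws}U_{\text{RSCP}}(C_\tau,S_m,V,w,b,\delta_C)$ appearing in the constraint of (\ref{eq:ucb_worst}). Concretely, the goal is to establish the identity (\ref{eq:ucb_worst_upper}) for each fixed set-valued map $C_\tau$. Once this is done, the feasibility constraint of the optimization problem (\ref{eq:ucb_worst_approx}) is literally the same constraint as in (\ref{eq:ucb_worst}), so the two problems have the same optimal value $\hat\tau$ (and are simultaneously infeasible, handled by the trivial solution), and the probabilistic conclusion is inherited verbatim from \thmref{thm:e2e_ps_bound_gen} — whose guarantee in turn comes from a union bound combining the $1-\delta_w$ event $\{\vec w^*\in\Ws\}$ with the $1-\delta_C$ RSCP guarantee of \thmref{thm:rscp_bound_gen}.

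First I would fix $\tau$, hence fix $C_\tau$, and regard $g(w) \coloneqq U_{\text{RSCP}}(C_\tau,S_m,V,w,b,\delta_C)$ as a function on the box $\Ws=\prod_{i=1}^m[\underline w_i,\overline w_i]$. By \lemref{lem:mono_rejection_bound}, for each coordinate $i$ the map $w_i\mapsto g(w)$, with the remaining coordinates held at arbitrary values, is non-decreasing if $y_i\notin C_\tau(x_i)$ and non-increasing if $y_i\in C_\tau(x_i)$; crucially, which of the two cases applies depends only on the membership $y_i\in C_\tau(x_i)$ and not on the other coordinates $w_{-i}$. I would then run a coordinate-ascent argument: starting from an arbitrary $w\in\Ws$, replace $w_1$ by $\overline w_1$ or $\underline w_1$ according to its case, which does not decrease $g$; then do the same for $w_2,\dots,w_m$ in turn. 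Because each coordinate's monotonicity direction is fixed in advance, the endpoint of this process is exactly the vertex $\wh$ of (\ref{eq:greedyiw}), and $g(\wh)\ge g(w)$ for every $w\in\Ws$; since $\wh\in\Ws$, this gives $\max_{w\in\Ws}g(w)=g(\wh)$, i.e. (\ref{eq:ucb_worst_upper}).

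Finally, because $\wh_{\tau,S_m^X,T_n^X}$ depends on $\tau$ only through the errors and non-errors of $C_\tau$, the identity (\ref{eq:ucb_worst_upper}) holds for every $\tau\in\mathcal{T}$ simultaneously, so the constraint $U_{\text{RSCP}}(C_\tau,S_m,V,\wh_{\tau,S_m^X,T_n^X},b,\delta_C)\le\ep$ cuts out exactly the same set of feasible $\tau$ as $\max_{w\in\Ws}U_{\text{RSCP}}(C_\tau,S_m,V,w,b,\delta_C)\le\ep$. Hence the $\hat\tau$ solving (\ref{eq:ucb_worst_approx}) equals the $\hat\tau$ solving (\ref{eq:ucb_worst}), and applying \thmref{thm:e2e_ps_bound_gen} under the standing hypothesis that $\Ws$ satisfies (\ref{eq:uncertaintyset_guarantee}) yields $\Prob_{S_m\sim P^m,V\sim U^m,T_n^X\sim Q_X^n}[L_Q(C_\tau)\le\ep]\ge 1-\delta_C-\delta_w$ for all $\tau\le\hat\tau$.

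The main obstacle is the middle step: upgrading the one-coordinate-at-a-time monotonicity of \lemref{lem:mono_rejection_bound} to a genuine global maximizer over the $m$-dimensional box. This is valid precisely because the direction of monotonicity in coordinate $i$ is independent of $w_{-i}$, and I would state and use that fact explicitly. A minor technical wrinkle to handle with care is that $g$ is not continuous in $w_i$ — through $\sigma_i=\mathbbm{1}(V_i\le w_i/b)$ it is a step function of $w_i$ with a single jump at $w_i=bV_i$ — so the coordinate-ascent argument should be phrased purely in terms of monotone (possibly discontinuous) functions, and one should note that the maximum is attained simply because $\wh$ itself lies in $\Ws$. Everything else is bookkeeping.
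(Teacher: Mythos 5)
Your proposal is correct and follows essentially the same route as the paper: establish that the greedy vertex $\wh$ attains $\max_{w\in\Ws}U_{\text{RSCP}}$ via the coordinate-wise monotonicity of Lemma \ref{lem:mono_rejection_bound}, then invoke the argument of Theorem \ref{thm:e2e_ps_bound_gen} (union bound over the $\vec w^*\in\Ws$ event and the RSCP guarantee). In fact you make explicit the one step the paper only asserts---that the fixed, $w_{-i}$-independent monotonicity direction in each coordinate lets a coordinate-ascent argument certify the box vertex as a global maximizer---which is a worthwhile addition.
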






\begin{algorithm}[t]
\small
\caption{PS-W: an algorithm using the robust RSCP bound in (\ref{eq:ucb_worst_approx})}
\label{alg:ps-w}
\begin{algorithmic}[1]
\Procedure{PS-W}{$S_m, T_n^X, f, g, \Ts, \ep, \delta_w, \delta_C, K, E$}
\State $\Ws \gets \textsc{EstimateIWs}(S_m^X, T_n^X, g, \delta_w, K, E)$ $\Comment{\text{Compute an uncertainty set $\Ws$}}$
\State $b \gets \max_{i \in [K]} \overline{w}_i$ $\Comment{\text{Compute the maximum IW (see (\ref{eq:bestimation}) in Appendix~\ref{apdx:bestimation})}}$
\State \textbf{return} \textsc{PS-Robust}$(S_m, f, \Ts, \Ws, b, \ep, \delta_C)$
\EndProcedure
\Procedure{EstimateIWs}{$S_m^X, T_n^X, g, \delta_w, K, E$}
\State Construct bins $B_1, \dots, B_K$ using $g$ as described in Appendix \ref{sec:partitionconstruction}
\State Construct $\Ws$ using $B_1, \dots, B_K$, $S_m^X, T_n^X, g, \delta_w$ and, $E$ as described in Appendix \ref{sec:clusterbasedapproach}
\State \textbf{return} $\Ws$
\EndProcedure
\Procedure{PS-Robust}{$S_m, f, g, \Ts, \Ws, b, \ep, \delta_C$}
\State $V \sim \text{Uniform}([0, 1])^m$ 
\State $\hat\tau \gets 0$
\For{$\tau \in \Ts$} \label{alg:grid} $\Comment{\text{Grid search in ascending order}}$
\State Construct $\wh$ using (\ref{eq:greedyiw}) given $\tau$, $S_m$, and $\Ws$
\If{$U_{\text{RSCP}}(C_\tau, S_m, V, \wh, b, \delta_C) \le \ep$}
\State $\hat\tau \gets \max(\hat\tau, \tau)$
\Else
~\textbf{break}
\EndIf
\EndFor
\State \textbf{return} $\hat\tau$
\EndProcedure
\end{algorithmic}
\end{algorithm}

\textbf{Importance weight estimation.}
In general, to estimate the importance weights (IWs), some assumptions on their structure are required \citep{kanamori2009least, cortes2008sample, Nguyen_2010,lipton2018detecting}. 
We use a cluster-based approach~\citep{cortes2008sample}.
In particular, we heuristically partition the feature space of the score function $f$ into $K$ bins by using a probabilistic classifier $g$ that separates source and target examples, and then estimate the source and target covariate distributions $p(x)$ and $q(x)$ based on the smoothness assumption over the distributions, where the degree of the smoothness is parameterized by $E$. Then, we can construct the uncertainty set $\Ws$ that satisfies the specified guarantee in (\ref{eq:uncertaintyset_guarantee}).
See Appendix \ref{sec:iw_estimation} for details. 

\textbf{Algorithm.}
Our algorithm, called PS-W, is detailed in Algorithm \ref{alg:ps-w}; it solves (\ref{eq:ucb_worst_approx}) and also performs importance weight estimation according to (\ref{eq:iw_bound}) in Appendix~\ref{sec:iw_estimation}.
In particular, Algorithm \ref{alg:ps-w} constructs a prediction set that satisfies the PAC guarantee in Theorem \ref{thm:e2e_ps_bound_gen_greedy}. 
See Section \ref{exp:expsetup} for our choice of parameters (\eg $K$, $E$, and grid search parameters).



\vspace{-0.1in}
\section{Experiments} \label{sec:exp}
\vspace{-0.1in}

We show the efficacy of our approach on rate and support shifts on DomainNet and ImageNet.


\vspace{-0.1in}
\subsection{Experimental Setup}\label{exp:expsetup}
\vspace{-0.1in}

\textbf{Models.}
For each source-target distribution pair, we split each the labeled source data and unlabeled target data into train and calibration sets. We use a separate labeled test set from the target for evaluation.
For each shift from source to target, we use a deep neural network score function $f$ based on ResNet101 \citep{he2016deep}, trained using unsupervised domain adaptation based on the source and target training sets.
See Appendix \ref{apdx:expdetails} for details, 
including data split. 

\textbf{Prediction set construction.}
To construct our prediction sets, we first estimate IWs by training a probabilistic classifier $g$ using the source and target training sets.
Next, we use $g$ to construct heuristic IWs $w(x) = 1 / g(s=1\mid x)-11$, where $s=1$ if $x$ is from source. Then, we estimate the lower and upper bound of the true IWs using Theorem~\ref{thm:iw_bound} with $E = 0.001$ and $K=10$ bins (chosen to contain equal numbers of heuristic IWs), where we compute the lower and upper Clopper-Pearson interval using the source and target calibration sets.
Furthermore, given a confidence level $\delta$, we use $\delta_C=\delta_w=\delta/2$. 
For the grid search in line \ref{alg:grid} of Algorithm \ref{alg:ps-w}, we increase $\tau$ by $10^{-7}$ until the bound $U_{\text{RSCP}}$ exceeds $1.5 \ep$.
Finally, we evaluate the prediction set error on the labeled target test set.


\textbf{Baselines.}
We compare the proposed method in Algorithm \ref{alg:ps-w} (\textbf{PS-W}) with the following:
\begin{itemize}[topsep=0pt,itemsep=-1ex,partopsep=1ex,parsep=1ex, leftmargin=2ex]
\item \textbf{PS:} The prediction set using Algorithm \ref{alg:cp} that satisfies the PAC guarantee from  Theorem~\ref{thm:cp_bound_gen}, based on~\cite{Park2020PAC}, which makes the i.i.d. assumption. 
\item \textbf{PS-C:} A Clopper-Pearson method addressing covariate shift by a conservative upper bound on the empirical loss (see Appendix~\ref{apdx:cp_c} for details), resulting in Algorithm \ref{alg:psc} that solves the following:
\begin{align*}
\vspace{-1ex}
\hat{\tau}=\operatorname*{\arg\max}_{\tau\in\mathcal{T}}\tau
\quad\text{subj. to}\quad
U_{\text{CP}}(C_\tau,S_m,\delta)\le\ep/b.
\vspace{-1ex}
\end{align*}
\item \textbf{WSCI:} Weighted split conformal inference, proposed in~\citep{tibshirani2019conformal}. Under the exchangeability assumption (which is somewhat weaker than our i.i.d. assumption), this approach provides correctness guarantees only for a single future test instance, \ie it includes an $\ep$ confidence level (denoted $\alpha$ in their paper) but not $\delta$. Also, it assumes the true IWs are known; following their experiments, we use the heuristic IWs constructed using $g$.
\item \textbf{PS-R:} The prediction set using Algorithm \ref{alg:rscp} that satisfies the PAC guarantee from Theorem \ref{thm:rscp_bound_gen} with the heuristic IWs constructed using a probabilistic classifier $g$. 
\item \textbf{PS-M:} The prediction set using Algorithm \ref{alg:psm}, which is identical to PS-R except that PS-M estimates IWs heuristically using histogram density estimation and a calibration set. In particular, we partition the IW values into bins, project source and target calibration examples into bins based on the value of the probabilistic classifier $g$, and estimate the source density $\ph_B$ and target density $\qh_B$ for each bin. The estimated importance weight is $\wh(x) = {\qh_B(x)}/{\ph_B(x)}$, where $\ph_B$ and $\qh_B$ are defined in (\ref{eq:phatB}) and (\ref{eq:qhatB}), respectively. 
\end{itemize}


\textbf{Metrics.}
We measure performance via the prediction set error and size on a held-out test set---i.e., error is the fraction of $(x,y)$ such that $y\not\in C(x)$ and size is the average of $|C(x)|$ over $x$.

\textbf{Rate shifts on DomainNet.}
We consider settings where the model is trained on data from a variety of domains, but is then deployed on a specific domain; we call such a shift \emph{rate shift}. For instance, a self-driving car may be trained on both day and night images, but tested during the night time. While the model should still perform well since the target is a subset of the source, the covariate shift can nevertheless invalidate prediction set guarantees. We consider rate shifts on DomainNet \citep{peng2019moment}, which consists of images of $345$ classes from six domains (sketch, clipart, painting, quickdraw, real, and infograph); 
we use all domains as the source and each domain as a target.

\textbf{Support shifts on ImageNet.} 
Next, we consider support shifts, where the support of the target is different from the support of the source, but unsupervised domain adaptation is used to learn feature representations that align the two distributions~\citep{dannGanin2016}; then, the score function $f$ is trained on this representation.
First, we consider ImageNet-C \citep{hendrycks2019robustness}, which modifies the original ImageNet dataset \citep{russakovsky2015imagenet} using 15 synthetic perturbations with 5 severity levels. We use 13 perturbations, omitting ``snow'' and ``glass blur'', which are computationally expensive to run. We consider the original ImageNet dataset as the source, and the all synthetic perturbations on all of ImageNet (denoted ImageNet-C13) as the target. 
See Appendix \ref{apdx:imagenetc-split} for data split.
Second, we consider adversarial shifts, where we generate adversarial examples for ImageNet using the PGD attack \citep{madry2017towards} with $0.01$ $\ell_\infty$-norm perturbations with respect to a pretrained ResNet101. We consider the original ImageNet as the source and the adversarially perturbed ImageNet as the target.

\begin{figure}[t]
\centering
\begin{subfigure}[b]{0.46\linewidth}
\includegraphics[width=0.47\linewidth]{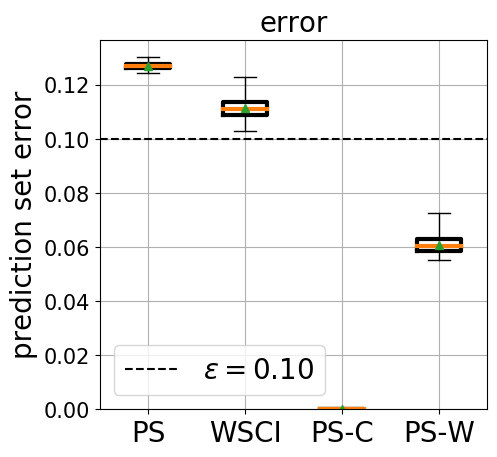}
\includegraphics[width=0.48\linewidth]{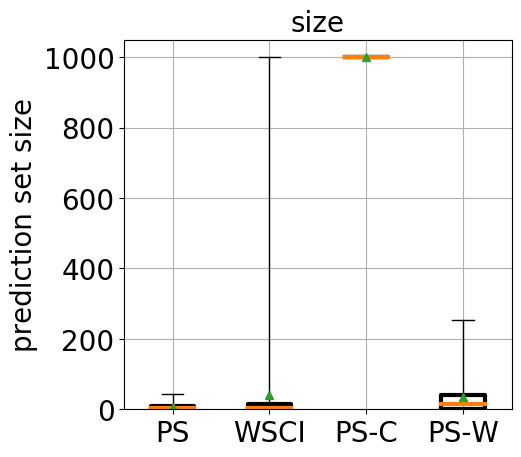}
\vspace{-1ex}
\caption{Comparison}
\vspace{-1ex}
\label{fig:imagenetc13_comp}
\end{subfigure} 
\begin{subfigure}[b]{0.46\linewidth}
\includegraphics[width=0.47\linewidth]{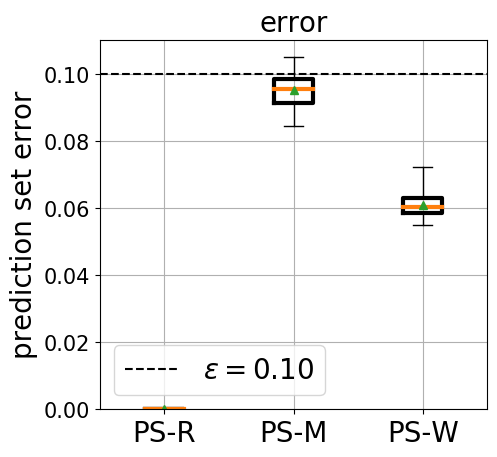}
\includegraphics[width=0.48\linewidth]{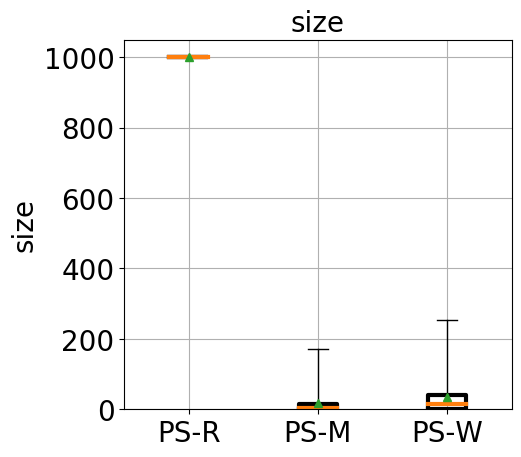}
\vspace{-1ex}
\caption{Ablation study}
\vspace{-1ex}
\label{fig:imagenetc13_abl}
\end{subfigure}
\vspace{-1ex}
\caption{The prediction set error and size over 100 random trials under synthetic shift from ImageNet to ImageNet-C13. Parameters are $m=20,000$, $\ep = 0.1$, and $\delta = 10^{-5}$. (a) and (b) shows the box plots for comparison and ablation study, respectively. }
\label{fig:boxplot_sample}
\vspace{-2ex}
\end{figure}

\begin{table}[t!]
\centering
\setlength{\tabcolsep}{1.8pt}
\renewcommand{\arraystretch}{1.0}
\caption{Average prediction set error and sizes over 100 random trials under rate shifts on DomainNet (first six shifts) and support shifts on ImageNet (last two shifts). We denote an approach satisfying the $\ep$ error constraint by \cmark, and \xmark~otherwise. The ``normalized size'' is the size divided by the total number of classes (i.e., 345 for DomainNet and 1000 for ImageNet).
Parameters are $m=50,000$ for DomainNet, $m=20,000$ for ImageNet, $\ep = 0.1$, and $\delta = 10^{-5}$. Our approach PS-W satisfies the $\ep$ constraint, while producing prediction sets with the smallest average normalized size among approaches that always satisfy the error constraint. See Appendix \ref{apdx:setsizeerror} for box plots.
}
\vspace{-1ex}
\label{tab:summary}
\footnotesize
\begin{tabular}{c||cc|cc|cc||cc|cc||cc}
\toprule
\multirow{3}{*}{\makecell{\textbf{Shift}}} & 
\multicolumn{6}{c||}{Baselines} & 
\multicolumn{4}{c||}{Ablations} &
\multicolumn{2}{c}{Ours}
\\
\cline{2-13}
&
\multicolumn{2}{c|}{\textbf{PS}} & 
\multicolumn{2}{c|}{\textbf{WSCI}} &
\multicolumn{2}{c||}{\textbf{PS-C}} &
\multicolumn{2}{c|}{\textbf{PS-R}} &
\multicolumn{2}{c||}{\textbf{PS-M}} &
\multicolumn{2}{c}{\textbf{PS-W}}
\\
\cline{2-13}
& 
error & size &
error & size &
error & size &
error & size &
error & size &
error & size 
\\
\hline
\hline
All &
\tiny\makecell{\footnotesize\cmark \\ (0.094)} & 10.5 &
\tiny\makecell{\footnotesize\cmark \\ (0.099)} & 9.5 &
\tiny\makecell{\footnotesize\cmark \\ (0.093)} & 10.7 &
\tiny\makecell{\footnotesize\cmark \\ (0.094)} & 10.6 &
\tiny\makecell{\footnotesize\cmark \\ (0.094)} & 10.8 &
\tiny\makecell{\footnotesize\cmark \\ (0.070)} & 17.0
\\
\hline
Sketch &
\tiny\makecell{\footnotesize\xmark \\ (0.142)} & 13.1 &
\tiny\makecell{\footnotesize\xmark \\ (0.116)} & 18.6 &
\tiny\makecell{\footnotesize\cmark \\ (0.020)} & 141.7 &
\tiny\makecell{\footnotesize\cmark \\ (0.097)} & 28.2 &
\tiny\makecell{\footnotesize\xmark \\ (0.105)} & 26.1 &
\tiny\makecell{\footnotesize\cmark \\ (0.078)} & 40.3 
\\
\hline
Painting &
\tiny\makecell{\footnotesize\xmark \\ (0.159)} & 15.4 &
\tiny\makecell{\footnotesize\xmark \\ (0.113)} & 30.0 &
\tiny\makecell{\footnotesize\cmark \\ (0.025)} & 125.4 &
\tiny\makecell{\footnotesize\cmark \\ (0.096)} & 37.7 &
\tiny\makecell{\footnotesize\xmark \\ (0.103)} & 34.5 &
\tiny\makecell{\footnotesize\cmark \\ (0.076)} & 52.8 
\\
\hline
Quickdraw &
\tiny\makecell{\footnotesize\cmark \\ (0.069)} & 5.9 &
\tiny\makecell{\footnotesize\cmark \\ (0.097)} & 3.8 &
\tiny\makecell{\footnotesize\cmark \\ (0.021)} & 23.8 &
\tiny\makecell{\footnotesize\cmark \\ (0.088)} & 4.3 &
\tiny\makecell{\footnotesize\cmark \\ (0.087)} & 4.2 &
\tiny\makecell{\footnotesize\cmark \\ (0.067)} & 6.1 
\\
\hline
Real &
\tiny\makecell{\footnotesize\cmark \\ (0.079)} & 8.7 &
\tiny\makecell{\footnotesize\cmark \\ (0.087)} & 7.2 &
\tiny\makecell{\footnotesize\cmark \\ (0.032)} & 47.8 &
\tiny\makecell{\footnotesize\cmark \\ (0.080)} & 8.7 &
\tiny\makecell{\footnotesize\cmark \\ (0.087)} & 7.1 &
\tiny\makecell{\footnotesize\cmark \\ (0.068)} & 11.8 
\\
\hline
Clipart &
\tiny\makecell{\footnotesize\xmark \\ (0.105)} & 10.2 &
\tiny\makecell{\footnotesize\xmark \\ (0.101)} & 10.9 &
\tiny\makecell{\footnotesize\cmark \\ (0.000)} & 345.0 &
\tiny\makecell{\footnotesize\cmark \\ (0.080)} & 19.4 &
\tiny\makecell{\footnotesize\cmark \\ (0.086)} & 14.8 &
\tiny\makecell{\footnotesize\cmark \\ (0.060)} & 25.7 
\\
\hline
Infograph &
\tiny\makecell{\footnotesize\xmark \\ (0.363)} & 36.4 &
\tiny\makecell{\footnotesize\xmark \\ (0.114)} & 165.1 &
\tiny\makecell{\footnotesize\cmark \\ (0.000)} & 345.0 &
\tiny\makecell{\footnotesize\cmark \\ (0.085)} & 202.6 &
\tiny\makecell{\footnotesize\xmark \\ (0.107)} & 177.4 &
\tiny\makecell{\footnotesize\cmark \\ (0.078)} & 216.4 
\\
\hline
\hline
ImageNet-PGD &
\tiny\makecell{\footnotesize\cmark \\ (0.090)} & 5.5 &
\tiny\makecell{\footnotesize\cmark \\ (0.096)} & 4.7 &
\tiny\makecell{\footnotesize\cmark \\ (0.000)} & 1000.0 &
\tiny\makecell{\footnotesize\cmark \\ (0.000)} & 1000.0 &
\tiny\makecell{\footnotesize\cmark \\ (0.074)} & 7.8 &
\tiny\makecell{\footnotesize\cmark \\ (0.049)} & 13.9 
\\
\hline
ImageNet-C13 &
\tiny\makecell{\footnotesize\xmark \\ (0.127)} & 9.3 &
\tiny\makecell{\footnotesize\xmark \\ (0.111)} & 67.0 &
\tiny\makecell{\footnotesize\cmark \\ (0.000)} & 1000.0 &
\tiny\makecell{\footnotesize\cmark \\ (0.000)} & 1000.0 &
\tiny\makecell{\footnotesize\cmark \\ (0.095)} & 15.9 &
\tiny\makecell{\footnotesize\cmark \\ (0.061)} & 35.8 
\\
\hline
\hline
\makecell{mean normalized size} &
\multicolumn{2}{c|}{--} &
\multicolumn{2}{c|}{--} &
\multicolumn{2}{c||}{0.0338} &
\multicolumn{2}{c|}{0.0257} &
\multicolumn{2}{c||}{--} &
\multicolumn{2}{c}{\textbf{0.0047}}
\\
\bottomrule
\end{tabular}
\vspace{-0.1in}
\end{table}

\vspace{-0.1in}
\subsection{Experimental Results}
\vspace{-0.1in}
We summarize our results in Table \ref{tab:summary}, and provide more details in Figure~\ref{fig:domainnetimagenet} in Appendix \ref{apdx:setsizeerror}. Our PS-W algorithm satisfies the PAC constraint, and gives prediction sets with the smallest average normalized size among approaches that always satisfy the PAC constraint.

\textbf{Rate shifts on DomainNet.}
We use $\ep = 0.1$ and $\delta = 10^{-5}$.
As can be seen in Table \ref{tab:summary}, the prediction set error of our approach (PS-W) does not violate the error constraint $\ep=0.1$, while all other comparing approaches (except for PS-C) violate it for at least one of the shifts. While PS-C satisfies the desired bound, it is overly conservative; its prediction sets are significantly larger than necessary, making it less useful for uncertainty quantification. 
For the shift to Infograph in Table \ref{tab:summary}, PS-W achieves relatively larger average prediction set size compared to other shifts; this is because the classification error of the score function $f$ over Infograph is large---$71.28\%$, whereas that over Sketch is $37.16\%$. Even with the poor score function, our proposed approach still satisfies the $\ep=0.1$ error constraint.
Finally, while our approach PS-W generally performs best subject to satisfying the error constraint, we note that our ablations PS-R and PS-M also perform well, providing different tradeoffs.
First, the performance of PS-W is significantly more reliable than PS-R, but in some cases PS-R performs better (e.g., it produces slightly smaller prediction sets on rate shifts but significantly larger sets on support shifts). Alternatively, PS-M consistently produces smaller prediction sets, though it sometimes violates the $\ep$ error constraints.

\textbf{Support shifts on ImageNet.} We show results for synthetic and adversarial shifts in Table \ref{tab:summary} (and Figure \ref{fig:domainnetimagenet} in Appendix \ref{apdx:setsizeerror}).
As can be seen, the error of our approach (PS-W) is below the desired level. PS-R performs poorly, likely due to the uncalibrated point IW estimation---we find that calibrated importance weights mitigate these issues, though accounting for uncertainty in the IWs is necessary for achieving the desired error rate; see Appendix \ref{apdx:ablation} for details.


For adversarial shifts, the target classification error of the source-trained ResNet101 and the domain-adapted ResNet101 is $99.97\%$ and $28.05\%$, respectively. Domain adaptation can significantly decrease average error rate, but label predictions still do not have guarantees. However, our prediction set controls the prediction set error rate as specified by $\ep$. As shown in Figure \ref{fig:domainnetimagenet}, our prediction set function outputs a prediction set for a given example that includes the true label at least $90\%$ of the time. Thus, downstream modules can rely on this guarantee for further decision-making.

\textbf{Ablation and sensitivity study.}
We conduct an ablation study on the effect of IW calibration and the smoothness parameter $E$ for PS-W. We observe that the IW calibration considering uncertainty intervals around calibrated IWs is required for the PAC guarantee (\eg Figure \ref{fig:imagenetc13_abl}). Also, we find that a broad range of $E$-s can be used to satisfy the PAC guarantee; we believe this result is due to the use of a domain adapted score function. See Appendices \ref{apdx:ablation} \& \ref{apdx:varyingE} for details.


\vspace{-0.1in}
\section{Conclusion} \label{sec:concl}
\vspace{-0.1in}
We propose a novel algorithm for building PAC prediction sets under covariate shift; it leverages rejection sampling and the Clopper-Pearson interval, and accounts for uncertain IWs. We demonstrate the efficacy of our approach on natural, synthetic, and adversarial covariate shifts on DomainNet and ImageNet.
Future work includes providing optimality guarantees on the prediction set size, rigorous estimation of the hyperparameter $E$, and incorporating probabilistic IW uncertainty estimates.

\textbf{Ethics statement.}
We do not foresee any significant ethical issues with our work. One possible issue is that end-users may overly trust the prediction sets if they do not understand the limitations of our approach---e.g., it is only a high probability guarantee.

\textbf{Reproducibility statement.}
Algorithms used for evaluation, including ours, are  stated in Algorithm \ref{alg:ps-w}, Algorithm \ref{alg:cp}, Algorithm \ref{alg:psc}, Algorithm \ref{alg:rscp}, and Algorithm \ref{alg:psm}, along with hyperparameters for algorithms in Section \ref{exp:expsetup} and Appendix \ref{apdx:expdetails}. Related code is released\footnote{\url{https://github.com/sangdon/pac-ps-w}}. The proof of our theorems are stated in Appendix \ref{apdx:proofs}; the required assumption is stated in Assumption \ref{assump:bins}.

\section*{Acknowledgments}
{\small
This work was supported in part by ARO W911NF-20-1-0080, AFRL and DARPA FA8750-18-C-0090, NSF award CCF 1910769, and NSF award 2031895 on the Mathematical and Scientific Foundations of Deep Learning (MoDL). Any opinions, findings and conclusions or recommendations expressed in this material are those of the authors and do not necessarily reflect the views of the Air Force Research Laboratory (AFRL), the Army Research Office (ARO), the Defense Advanced Research Projects Agency (DARPA), or the Department of Defense, or the United States Government.
The authors are grateful to Art Owen for helpful comments.
}

\bibliography{./externals/mybibtex/sml}
\bibliographystyle{iclr2022_conference}

\appendix

\clearpage

\section{Additional Related Work} 
\label{sec:related}

\textbf{Prediction sets under i.i.d. assumption.}
We built our proposed approach on the known PAC prediction set approach \citep{wilks1941determination,Park2020PAC} due to its simplicity and sample efficiency. As other candidates, nested conformal prediction \citep{vovk2005algorithmic,gupta2021nested} reinterprets multiple known conformal prediction approaches using a scalar parameteriztion of prediction sets; but different from \cite{wilks1941determination} and \cite{Park2020PAC}, it is not known to provide PAC guarantees. \cite{kivaranovic2020adaptive} provides a PAC style guarantee, but their approach is limited to regression and is sample-inefficient, \ie it requires a sample of size $O(1/\epsilon^2)$, while \cite{Park2020PAC} has a better sample complexity, as demonstrated in their paper.

\textbf{Prediction sets under various settings.}
Prior prediction set algorithms have been considered in several settings.
First, traditional conformal prediction~\citep{vovk2005algorithmic} often considers the setting where labeled examples arrive sequentially from the same distribution; there has also been work extending conformal prediction to the setting where the distribution is time-varying~\citep{politis2015model,chernozhukov2018exact,xu2021conformal,gibbs2021adaptive}. 
Alternatively, there has been work on constructing risk-controlling prediction sets for supervised learning setting~\citep{vovk2013conditional,Park2020PAC,bates2021distribution,angelopoulos2021learn}.
Finally, there has been recent work on conformal prediction in the meta-learning setting~\citep{fisch2021fewshot}; in particular, given a few labeled examples drawn from the new task, their approach leverages labeled examples from previous tasks to construct a conformal predictor for the new task, assuming the tasks are exchangeable. 

{\bf Developments after our work.}
After our work was made publicly available, \cite{jin2021sensitivity} has developed a different, robust conformal inference approach to constructing prediction sets with estimated weights under covariate shift. 
Their algorithm assumes given upper and lower bounds on the importance weights, and uses the worst-case quantile over all weights that satisfy the constraint to set the critical values.
Further, \cite{yang2022doubly} have developed a doubly robust approach to construct prediction sets satisfying approximate marginal coverage under covariate shift (which can be robust to estimating the weights and per-covariate prediction error), leveraging semiparametric efficiency theory. \cite{qiu2022distribution} have developed a parallel approach for the PAC case.

\textbf{Calibration.}
An alternative way to quantify uncertainty is \emph{calibrated prediction}~\citep[e.g.,][etc]{brier1950verification,cox1958two,miller1962statistical,
murphy1972scalar,lichtenstein1977calibration,
degroot1983comparison,guo2017calibration}, which aims to ensure that among instances with a predicted confidence $p$, the model is correct a fraction $p$ of the time.
Techniques have been proposed to re-scale predicted confidences to improve calibration~\citep{platt1999probabilistic,guo2017calibration,zadrozny2001obtaining,zadrozny2002transforming,kuleshov2015calibrated,kuleshov2018accurate,malik2019calibrated};
including ones with theoretical guarantees~\citep{kumar2019verified,park2021pac} and ones that handle covariate shift \citep{park2020calibrated,wang2020transferable}.
There are also methods to rigorously test calibration, dating back to \cite{cox1958two,miller1962statistical}, see e.g., \cite{lee2022t} for a recent approach.
These approaches provide a qualitatively different form of uncertainty quantification compared to the one we consider.

\textbf{Rejection sampling.} Rejection sampling (sometimes accept-reject sampling) is a well-known technique \citep{mcbook,rubinstein2016simulation} dating back at least to \citet{vonNeumann1951}. We highlight that most work on covariate shift relies on importance weighting; our rejection sampling approach is relatively less common and more novel. In fact, to the best of our knowledge, only \citet{pagnoni2018pac} has used rejection sampling for a different problem in this area.


\textbf{IW estimation.} There has been a long line of work studying the problem of estimating importance weights (IWs), also called likelihood ratios, in a way that provides theoretical guarantees.
For instance, \citep{Nguyen_2010} provides a convergence rate analysis of IW estimators---under smoothness assumptions, they provide a finite sample bound on the Hellinger distance between the true IW and estimated IW.
Next, \citep{kanamori2009least} shows a similar finite sample guarantee, assuming the true IW can be represented as a linear combination of kernels.
Finally, \citep{cortes2008sample} proposes non-parametric IW estimators, modeling the source and target distribution by histograms over clusters in sample space. 

The IW estimation approaches can be used in conjuction with prediction set construction. 
Compared to \citet{candes2021conformalized}, which only guarantees asymptotic validity under certain assumptions on the estimated IWs, Theorem \ref{thm:e2e_ps_bound_gen_greedy} provides a finite-sample correctness guarantee. Furthermore, we explicitly describe an algorithm for approximate IWs, required by Theorem \ref{thm:e2e_ps_bound_gen_greedy}, in Appendix \ref{sec:iw_estimation}.

\section{Importance Weight Estimation}\label{sec:iw_estimation}
In general, to estimate the importance weights (IWs), some assumptions on their structure are required. A number of approaches have been proposed, with varying guarantees under different assumptions~\citep{kanamori2009least, cortes2008sample, Nguyen_2010,lipton2018detecting}. We use a cluster-based approach~\citep{cortes2008sample}; our approach is compatible with any of these strategies if they can be modified to provide uncertainty estimates of the IWs.

In our approach, given a partition $\smash{\Xs=\bigcup_{j=1}^K B_j}$ into bins, we can estimate the IWs based on the fractions of source and target samples in each bin. If the partition is sufficiently fine, then we can obtain confidence intervals around the estimated IWs with finite-sample guarantees. However, this strategy requires the number of bins in the partition to be exponential in the dimension of $\Xs$. Thus, in practice, we use a heuristic to construct the partition.
We describe the cluster-based approach and our partition construction heuristic below.

\subsection{Cluster-based approach}\label{sec:clusterbasedapproach}
We assume given unlabeled calibration sets $S_m^X$ and $T_n^X$, where $S_m^X$ consists of i.i.d. samples $x_i \sim p(x)$ for $i \in [m]$, and $T_n^X$ consists of i.i.d. samples $x_i \sim q(x)$ for $i \in [n]$, respectively\footnote{We can use the same calibration set to construct IWs and prediction sets due to the union bound in Theorem~\ref{thm:e2e_ps_bound_gen}.}.
Roughly speaking, the cluster-based strategy estimates the average IW in each bin $B_j$; assuming $p$ and $q$ are roughly constant in each bin, these accurately estimate the true IWs. Let $j(x)$ be the bin containing $x$ (i.e., $x\in B_{j(x)}$), and let
\eqas{
p_B(x) &\coloneqq p_{j(x)} ~~\text{s.t.}~~ p_j=\int_{B_j}p(x')\;\mathrm{d}x'
\quad\text{and}\quad
q_B(x) \coloneqq q_{j(x)} ~~\text{s.t.}~~ q_j=\int_{B_j}q(x')\;\mathrm{d}x'
}
be the (unnormalized) approximations of the densities $p$ and $q$, respectively, that are constant on each bin. We assume that $p_B$ and $q_B$ are accurate approximations:
\begin{assumption}
\label{assump:bins}
\rm
Given $E\in\mathbb{R}_{\ge0}$, the partition satisfies 
\eqa{
\int_{B_j} | p(x) - p(x')| \mathrm{d}x' \le E \quad \text{and} \quad
\int_{B_j} | q(x) - q(x') | \mathrm{d}x' \le E
\qquad(j\in[K], \forall x\in B_j). \label{eq:smooth_assumption}
} 
\end{assumption}
Thus, $p$ and $q$ are roughly constant on the partitions. In general, (\ref{eq:smooth_assumption}) can hold for any $E\in\mathbb{R}_{>0}$ if $p$ and $q$ are Lipschitz continuous and each $B_j$ is sufficiently small (see Appendix \ref{sec:lipcont_binning} for discussion). Then, 
under Assumption \ref{assump:bins}, 
it can be verified that
\begin{align}
\label{eqn:binapproxbound}
|v(x)\cdot p(x)-p_B(x)|\le E\qquad\text{and}\qquad|v(x)\cdot q(x)-q_B(x)|\le E
\qquad(\forall x\in\Xs),
\end{align}
where $v(x) = v_{j(x)}$, and $v_j=\int_{B_j}\mathrm{d}x'$ is the volume of bin $B_j$ (see the proof of Theorem \ref{thm:iw_bound} for the validity of (\ref{eqn:binapproxbound})). Next, we have the following empirical estimates of $p_B$ and $q_B$, respectively:
\eqa{
\ph_B(x) &\coloneqq \ph_{j(x)}
~~\text{s.t.}~~
\ph_j=\frac{1}{m} \sum_{x' \in S_m^X} \mathbbm{1}\( x' \in B_j \)
~~\text{and}~~ \label{eq:phatB}\\
\qh_B(x) &\coloneqq \qh_{j(x)}
~~\text{s.t.}~~
\qh_j=\frac{1}{n} \sum_{x' \in T_n^X} \mathbbm{1}\( x' \in B_j \).\label{eq:qhatB}
}
Now, $\mathbbm{1}(x'\in B_j)$ has distribution $\text{Bernoulli}(p_j)$ when $x\sim P$, thus $m\cdot \ph_j$ has distribution $\text{Binom}(m,p_j)$, and $p_j$ is contained in a Clopper-Pearson interval around $\hat{p}_j$ with high probability; 
in particular, let $\underline{\theta}$ be the Clopper-Pearson lower bound corresponding to the Clopper-Pearson upper bound defined in Section~\ref{sec:cpbound}, i.e.,
$
\Prob_{k\sim\text{Binom}(m,\mu)}[\mu\ge\underline{\theta}(k;m,\delta)]\ge1-\delta$.
Then, we have 
\begin{align}
\label{eqn:binestbound}
\underline{\theta}(m\cdot\hat{p}_j;m,\delta')\le p_j\le\overline{\theta}(m\cdot\hat{p}_j;m,\delta')
\end{align}
with probability at least $1-\delta'$ with respect to the samples $S_m^X$.
Combining (\ref{eqn:binapproxbound}) and (\ref{eqn:binestbound}), we have the following result (and see Appendix \ref{apdx:thm:iw_bound_proof} for a proof):
\begin{theorem} \label{thm:iw_bound}
Letting $\delta'=\delta_w/(2K)$ and $[v]^+ \coloneqq \max\{0, v\}$ for all $v\in\mathbb{R}$, we have
\begin{align}
\underline{w}(x) \coloneqq \frac{[\underline{\theta}\( n \cdot \qh_B(x); n, \delta' \) - E]^+}{\overline{\theta}\( m \cdot \ph_B(x); m, \delta' \) + E} \le w^*(x) \le 
\overline{w}(x) \coloneqq \frac{\overline{\theta}\( n \cdot \qh_B(x), n, \delta'  \) + E}{[\underline{\theta}\( m \cdot \ph_B(x), m, \delta'  \) - E]^+}
~~(\forall x\in\mathcal{X})
\label{eq:iw_bound}
\end{align}
with probability at least $1-\delta_w$ over $S_m^X$ and $T_n^X$.
\end{theorem}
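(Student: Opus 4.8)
\textbf{Proof plan for Theorem~\ref{thm:iw_bound}.}
The plan is to first establish the deterministic bound~(\ref{eqn:binapproxbound}) relating the binned densities $p_B, q_B$ to the pointwise densities via the volume factors, and then layer on the Clopper--Pearson concentration bound~(\ref{eqn:binestbound}) and a union bound over the $K$ bins (and over the source and target samples, hence the factor $2K$ in $\delta' = \delta_w/(2K)$). Combining these two ingredients sandwiches $w^*(x) = q(x)/p(x)$ between the stated ratios.

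First I would prove~(\ref{eqn:binapproxbound}). Fix a bin $B_j$ and $x \in B_j$. By definition $p_j = \int_{B_j} p(x')\,\mathrm{d}x'$ and $v_j = \int_{B_j}\mathrm{d}x'$, so $v(x)\cdot p(x) - p_B(x) = \int_{B_j}\bigl(p(x) - p(x')\bigr)\,\mathrm{d}x'$. Taking absolute values and applying the triangle inequality (for integrals) gives $|v(x)p(x) - p_B(x)| \le \int_{B_j}|p(x) - p(x')|\,\mathrm{d}x' \le E$ by Assumption~\ref{assump:bins}; the same argument gives the bound for $q$. Rearranging, $p_B(x) - E \le v(x)p(x) \le p_B(x) + E$ and likewise for $q$. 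Dividing the inequality $v(x)q(x) \ge q_B(x) - E$ by $v(x)p(x) \le p_B(x) + E$ (both sides nonnegative, and using $[\cdot]^+$ to guard against the numerator or denominator estimate going negative) yields $w^*(x) = q(x)/p(x) \ge [q_B(x) - E]^+/(p_B(x) + E)$, and symmetrically the upper bound $w^*(x) \le (q_B(x) + E)/[p_B(x) - E]^+$. Here I would be careful about the degenerate case where the denominator bound is zero or negative: then the claimed upper bound is $+\infty$ (interpreting division by $[\,\cdot\,]^+=0$ as $\infty$), which is vacuously true, and similarly the lower bound becomes $0$, which holds trivially since $w^* \ge 0$.

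Second I would replace the unknown bin masses $p_j, q_j$ by their empirical counterparts. Since $m\hat p_j \sim \mathrm{Binom}(m, p_j)$ and $n\hat q_j \sim \mathrm{Binom}(n, q_j)$, the Clopper--Pearson guarantees from Section~\ref{sec:cpbound} give, for each fixed $j$, that $\underline\theta(m\hat p_j; m, \delta') \le p_j \le \overline\theta(m\hat p_j; m, \delta')$ with probability $\ge 1 - 2\delta'$ (a union bound over the lower and upper tails), and similarly for $q_j$. Union-bounding over all $K$ bins for both the source and target estimates costs $2K \cdot 2\delta' = 2K \cdot \delta_w/K$\,---\,wait, I should set this up so the total is $\delta_w$; with $\delta' = \delta_w/(2K)$ and two one-sided events per bin per distribution, the bookkeeping is $2K$ events on the source side contributing $K\delta_w/K$ hmm. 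The cleaner accounting: we need the upper CP bound on $p_j$ for the lower bound on $w^*$, the lower CP bound on $p_j$ for the upper bound on $w^*$, and symmetrically for $q_j$\,---\,that is two CP one-sided bounds on each of $p_j, q_j$ for each of $K$ bins, i.e.\ $4K$ one-sided events, each failing with probability $\le \delta'$; but one typically only needs the specific pair of one-sided bounds actually used, and grouping lower-and-upper as a single two-sided CP event at level $2\delta'$ per bin per distribution gives $2K$ two-sided events failing with total probability $\le 2K \cdot 2\delta' = 2\delta_w$. To land exactly at $\delta_w$ one uses that only one side is needed in each of the final ratio expressions, or absorbs the constant; I would follow the convention of the referenced proof (Appendix~\ref{apdx:thm:iw_bound_proof}) here, where the count works out to $2K$ one-sided failures at level $\delta_w/(2K)$.

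Finally I would chain the two layers: on the event where~(\ref{eqn:binestbound}) holds for all relevant bins and both samples, substitute $q_B(x) \ge \underline\theta(n\hat q_B(x); n, \delta')$ and $p_B(x) \le \overline\theta(m\hat p_B(x); m, \delta')$ into the deterministic lower bound from Step~1, obtaining $w^*(x) \ge [\underline\theta(n\hat q_B(x); n, \delta') - E]^+/(\overline\theta(m\hat p_B(x); m, \delta') + E) = \underline w(x)$, and symmetrically $w^*(x) \le \overline w(x)$; since this holds for every $x$ simultaneously (the binned quantities depend on $x$ only through $j(x)$, so the finite union bound over bins suffices for all $x \in \Xs$ at once), we get the claim with probability $\ge 1 - \delta_w$. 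The main obstacle I anticipate is purely the careful accounting of the failure probabilities and the $[\cdot]^+$ truncations in the degenerate (empty or near-empty bin) cases\,---\,the monotonicity of the CP bounds and the fact that we union-bound over bins rather than over all of $\Xs$ is what makes the ``$\forall x$'' statement cheap, but one must confirm that truncating negative numerator/denominator estimates to $0$ preserves validity of both inequalities rather than breaking them.
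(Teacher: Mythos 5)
Your proposal is correct and follows essentially the same route as the paper's proof in Appendix~\ref{apdx:thm:iw_bound_proof}: the triangle-inequality argument giving $|v(x)p(x)-p_B(x)|\le E$ under Assumption~\ref{assump:bins}, the Clopper--Pearson containment of each bin mass, the ratio in which $v(x)$ cancels, and a union bound over the $2K$ bin-level events. Your hesitation over the exact union-bound accounting is understandable but not a gap --- the paper itself treats each two-sided Clopper--Pearson containment as a single event of probability at least $1-\delta'$ and counts $2K$ such events, which is precisely the convention you settle on.
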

We use these upper and lower bounds on the IWs as the inputs $\underline{w}$ and $\overline{w}$ to our algorithm---\ie
\eqas{
    \Ws = \left\{ w: \Xs \rightarrow \realnum \vmid \forall x\in \Xs,\; \underline{w}(x) \le w(x) \le \overline{w}(x) \right\},
}
and use $b= \max_{x \in \Xs} \overline{w}(x) = \max_{j \in [K]} \overline{w}_j$ as the maximum IW. 
If $b$ is known, we need importance weights associated with source calibration samples $S_m^X$; thus we use a simpler form of $\Ws$ as follows:
\eqas{
    \Ws = \left\{ w \in \realnum^m \vmid \forall i\in [m],\; \underline{w}_i \le w_i \le \overline{w}_i \right\},
}
where $\underline{w}_i \coloneqq \underline{w}(x_i)$ and $\overline{w}_i \coloneqq \overline{w}(x_i)$ for $x_i \in S_m^X$.

Theorem \ref{thm:iw_bound} under Assumption \ref{assump:bins} is one way to construct uncertainty set $\Ws$ that contains the true IWs.
\begin{corollary}
Given $K \in \naturalnum$, $E \in \realnum_{\ge 0}$, and $\delta_w\in(0,1)$, suppose Assumption \ref{assump:bins} is satisfied and $\Ws$ is constructed using Theorem \ref{thm:iw_bound}. Then, we have
\eqas{
\Prob_{S_m^X \sim P_X^m, T_n^X \sim Q_X^n}
\left[ w^* \in \Ws
\right] \ge 1 - \delta_w. 
}
\end{corollary}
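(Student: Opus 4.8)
The plan is to obtain the corollary as an essentially immediate consequence of Theorem~\ref{thm:iw_bound}, since the construction of $\Ws$ there already encodes the desired containment. First I would unwind the definition of $\Ws$: when $b$ is known, $\Ws = \{ w \in \realnum^m \mid \underline{w}_i \le w_i \le \overline{w}_i \text{ for all } i\in[m]\}$ with $\underline{w}_i = \underline{w}(x_i)$ and $\overline{w}_i = \overline{w}(x_i)$ the Clopper--Pearson-based bounds from~\eqref{eq:iw_bound} evaluated at the source calibration covariates $x_i\in S_m^X$ (and analogously in the functional form $\Ws = \{ w:\Xs\to\realnum \mid \underline{w}(x)\le w(x)\le\overline{w}(x)\ \forall x\}$, where $w^*$ denotes the true IW function). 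In either case membership in $\Ws$ is exactly the statement that $w^*$ lies between the two bounds at the relevant points.

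Next I would invoke Theorem~\ref{thm:iw_bound} with $\delta' = \delta_w/(2K)$: under Assumption~\ref{assump:bins}, with probability at least $1-\delta_w$ over the joint draw $S_m^X\sim P_X^m$, $T_n^X\sim Q_X^n$, we have $\underline{w}(x)\le w^*(x)\le\overline{w}(x)$ simultaneously for all $x\in\Xs$; call this event $\gE$. On $\gE$, specializing $x$ to each calibration point $x_i$ yields $\underline{w}_i \le w_i^* \le \overline{w}_i$ for every $i\in[m]$, which is precisely the defining condition of $\Ws$, so $w^*\in\Ws$ on $\gE$. Hence $\Prob[w^*\in\Ws]\ge\Prob(\gE)\ge 1-\delta_w$, as claimed.

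At the level of the corollary there is essentially no obstacle — it is a restatement of Theorem~\ref{thm:iw_bound} once the definition of $\Ws$ is expanded — so the substantive work lives in Theorem~\ref{thm:iw_bound} itself, whose proof I would organize in three pieces: (i) verify the deterministic binning bound~\eqref{eqn:binapproxbound} directly from Assumption~\ref{assump:bins}; (ii) apply the Clopper--Pearson coverage guarantee to each of the $2K$ bin-count statistics $m\hat p_j\sim\text{Binom}(m,p_j)$ and $n\hat q_j\sim\text{Binom}(n,q_j)$ at level $\delta' = \delta_w/(2K)$, and union-bound to get~\eqref{eqn:binestbound} for all $j$ with total failure probability $\le\delta_w$; and (iii) chain the two bounds to pin $w^*(x) = q(x)/p(x)$ between $\underline{w}(x)$ and $\overline{w}(x)$. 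The only mild care needed is the monotonicity bookkeeping in step (iii): enlarging numerators, shrinking denominators, and handling the $[\,\cdot\,]^+$ truncations so the ratio bounds remain valid (with the lower bound degenerating to $0$ when a clip activates).
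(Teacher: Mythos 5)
Your proof is correct and matches the paper's treatment: the paper states this corollary without a separate proof precisely because, as you observe, it is an immediate unwinding of the definition of $\Ws$ combined with the guarantee of Theorem~\ref{thm:iw_bound}, whose own proof (deterministic binning bound from Assumption~\ref{assump:bins}, Clopper--Pearson at level $\delta'=\delta_w/(2K)$ for each of the $2K$ bin counts, union bound, then chaining the ratio bounds) is exactly the three-step outline you give.
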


\subsection{Partition construction heuristic}\label{sec:partitionconstruction}
In general, exponentially many bins are needed to guarantee Assumption~\ref{assump:bins}. Instead, we consider an intuitive heuristic for constructing these bins,
so that the importance weights $w(x)$---rather than the density functions $p(x)$ and $q(x)$ individually---are roughly constant on each bin, inspired by \citep{park2021pac,park2020calibrated}; a standard heuristic for estimating IWs is to train a probabilistic classifier $g(s\mid x)$ to distinguish source and target training examples, and then use these probabilities to construct the IWs. In particular, define the distribution
\begin{align*}
g^*(x,y)=\frac{1}{2}p(x)\cdot\mathbbm{1}(s=1)+\frac{1}{2}q(x)\cdot\mathbbm{1}(s=0).
\end{align*}
Then, letting $g^*(y\mid x)$ be the conditional distribution, we have $w^*(x)=1/g^*(s=1\mid x)-1$~\citep{bickel2007discriminative}. Thus, we train $g(s\mid x)\approx g^*(s\mid x)$ and construct bins according to $w(x)=1/g(s=1\mid x)-1$, i.e.,
\begin{align*}
B_j=\{x\in\Xs\mid w(x)\in[w_j,w_{j+1})\},
\end{align*}
where $0=w_1\le w_2\le...\le w_{K+1}=\infty$. Finally, we describe how to train $g$. Let $S_{m'}^X$ and $T_{n'}^X$ be unlabeled training examples from a source and target, respectively; then, the set
\begin{align*}
R_{m', n'}^X=\{(x,1)\mid x\in S_{m'}^X\}\cup\{(x,0)\mid x\in T_{n'}^X\}
\end{align*}
consists of i.i.d. samples $(x,y)\sim s^*(x, y)$. Thus, we can train $s$ on $R_{m', n'}^X$ using supervised learning. In practice, the corresponding IW estimates $w(x)$ can be inaccurate partly since $w$ is likely overfit to $R_{m', n'}^X$, which is why re-estimating the IWs in each bin according to Theorem~\ref{thm:iw_bound} remains necessary. 




\subsection{Density Estimation Satisfying Assumption \ref{assump:bins} } \label{sec:lipcont_binning}

We consider the following binning strategy that satisfies Assumption \ref{assump:bins} for Lipschitz continuous PDFs. 
In particular, assume the PDFs $p(x)$ and $q(x)$ are $L$-Lipschitz continuous for some norm $\|\cdot\|$. Then, for any given $E$, construct each bin $B_j$ such that 
\eqas{
\| x - x' \| \le \frac{E}{L \cdot v_j} \qquad \forall x, x' \in B_j,
}
where $v_j$ is the volume of bin $B_j$.
Then, we know that 
\eqas{
| p(x) - p(x') | \le L \| x - x' \| \le \frac{E}{v_j},
} so 
\eqas{
\int_{B_j} | p(x) - p(x') | \mathrm{d}x' \le E.
}
Similarly, we have
\eqas{
\int_{B_j} | q(x) - q(x') | \mathrm{d}x' \le E.
}
Thus, the bins satisfy Assumption \ref{assump:bins}.
Finally, assuming $\|\cdot\|$ is the $L_\infty$ norm, we describe one way to construct bins. In particular, construct bins by taking an $\ep$-net with $\ep = (E/L)^{1/(d+1)}$, where $d$ is the dimension of $\Xs$. Then, we have
\eqas{
\| x - x' \| 
\le \ep
= \frac{\ep^{d+1}}{\ep^d}
= \frac{E}{L \cdot v_j},
}
as desired.

\section{Additional Discussion on Importance Weights}

\subsection{Approximate Importance Weights}\label{apdx:discapprxomateiws}

In Section \ref{sec:e2epac}, we assume $\Ws$ has the following form:
\eqas{
\Ws \coloneqq \left\{ w \in \realnum^m \vmid \underline{w}_i \le w_i \le \overline{w}_i \right\}.
}
However, considering the fact that the expected importance weight is one, \ie $\Exp_{x \sim P}[w^*(x)] = 1$, the uncertainty set $\Ws$ that contains the true importance weights with high probability can be further constrained as follows:
\eqas{
\Ws'
\coloneqq \left\{ w \in \realnum^m \vmid 
\forall i\in[m] \;,\; \underline{w}_i \le w_i \le \overline{w}_i \;,\;\underline{c} \le \textstyle\sum_{i=1}^m w_i \le \overline{c} \right\}
}
for some $\underline{c}$ and $\overline{c}$.
In particular, using the Hoeffding's inequality for example, we can estimate $\underline{c}$ and $\overline{c}$ such that $w_1, \dots, w_m$ can be the part of the true importance weight $w^*$ that satisfies the mass constraint $\Exp_{x \sim P}[w^*(x)] = 1$. 

Recall that we have to find a maximizer in the uncertainty set $\Ws'$ as in (\ref{eq:ucb_worst}); however, due to the additional constraint on $\sum_{i=1}^m w_i$ in $\Ws'$, it is challenging to solve the maximization problem exactly. 


\subsection{Maximum Importance Weight Estimation}
\label{apdx:bestimation}

To generalize our approach to estimate the maximum importance weight $b$, we redefine the uncertainty set $\Ws$ over importance weights as follows:
\eqas{
\Ws
\coloneqq \left\{ w: \Xs \rightarrow \realnum  \vmid \forall x \in \Xs,\; \underline{w}(x) \le w(x) \le \overline{w}(x) \right\}
}
for some $\underline{w}: \Xs \rightarrow \realnum$ and $\overline{w}: \Xs \rightarrow \realnum$ such that it contains the true importance weight $w^*$ with high probability---\ie 
\eqa{
\Prob_{S_m^X \sim P_X^m, T_n^X \sim Q_X^n}
\left[ w^* \in \Ws
\right] \ge 1 - \delta_w.
\label{eq:uncertaintyset_guarantee_general}
}
Given this, the maximum importance weight is obtained as follows:
\eqas{
\bh = \max_{x \in \Xs} \overline{w}(x).
}
Considering that $\Ws$ can be estimated using binning as in Appendix \ref{sec:iw_estimation}, the maximum importance weight is rewritten as follows:
\eqa{
\bh = \max_{x \in \Xs} \overline{w}(x) = \max_{j \in [K]} \overline{w}_j,
\label{eq:bestimation}
}
where 
\eqas{
\overline{w}_j \coloneqq \frac{\overline{\theta}\( n \cdot \qh_j, n, \delta'  \) + E}{[\underline{\theta}\( m \cdot \ph_j, m, \delta'  \) - E]^+}.
}
Here, $\underline{\theta}$, $\overline{\theta}$, $m, n, \delta', E, \ph_j$, and $\qh_j$ are defined in Theorem \ref{thm:iw_bound} and Appendix \ref{sec:iw_estimation}.

The guarantee (\ref{eq:uncertaintyset_guarantee_general}) implies the guarantee (\ref{eq:uncertaintyset_guarantee}). 
Thus, 
for the maximization over the uncertainty set in (\ref{eq:ucb_worst}), we use the original definition $\Ws$ and the greedy algorithm for $\wh$ in (\ref{eq:greedyiw}). Then, Theorem \ref{thm:e2e_ps_bound_gen_greedy} with the estimated $b$ using (\ref{eq:bestimation}) still holds as follows:
\begin{theorem}
\label{thm:e2e_ps_bound_gen_greedy_est_b}
Suppose Assumption \ref{assump:bins} is satisfied and $\Ws$ is estimated using Theorem \ref{thm:iw_bound}. Define $\wh_{\tau, S_m^X, T_n^X}$ as in (\ref{eq:greedyiw}) and $\bh_{S_m^X, T_n^X}$ as in (\ref{eq:bestimation}), making the dependency on $\tau$, $S_m^X$, and $T_n^X$ explicit, and $U_{\text{RSCP}}$ as in (\ref{eq:rscp_bound}).
Let $\hat\tau$ be the solution of the following problem:
\begin{align}
\hat{\tau}=\max_{\tau\in\mathcal{T}}\tau
\quad\text{subj. to}\quad
U_{\text{RSCP}}(C_\tau, S_m, V, \hat{w}_{\tau, S_m^X, T_n^X}, \bh_{S_m^X, T_n^X}, \delta_C)\le\ep.
\label{eq:ucb_worst_approx}
\end{align}
Then, we have $\Prob_{S_m\sim P^m,V\sim U^m,T_n^X \sim Q_X^n}[L_Q(C_{\tau}) \leq \ep]\ge1-\delta_C-\delta_w$ for any $\tau \le \hat\tau$.
\end{theorem}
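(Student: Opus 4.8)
The plan is to reduce the statement to Theorem~\ref{thm:e2e_ps_bound_gen_greedy}, whose hypotheses are a \emph{fixed} valid rejection-sampling cap $b$ together with an uncertainty set obeying~(\ref{eq:uncertaintyset_guarantee}). The only genuinely new ingredient relative to that theorem is that $b$ is now replaced by the data-dependent estimate $\bh_{S_m^X,T_n^X}$ of~(\ref{eq:bestimation}); everything else should go through unchanged once we know that this estimate is ``valid'' on a high-probability event, which is exactly what Theorem~\ref{thm:iw_bound} supplies.

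Concretely, first I would fix the good event $A \coloneqq \{\, w^* \in \Ws \,\}$, with $\Ws$ the function-valued uncertainty set of Theorem~\ref{thm:iw_bound}. Under Assumption~\ref{assump:bins}, Theorem~\ref{thm:iw_bound} gives $\Prob_{S_m^X\sim P_X^m,\,T_n^X\sim Q_X^n}[A]\ge 1-\delta_w$. On $A$ two facts hold automatically: (a) restricting $\Ws$ to the coordinates indexed by $S_m^X$ yields the finite-dimensional uncertainty set with $\vec{w}^*\in\Ws$, so~(\ref{eq:uncertaintyset_guarantee}) holds with this $\delta_w$; and (b) $\bh_{S_m^X,T_n^X}=\max_{x\in\Xs}\overline{w}(x)\ge\sup_{x\in\Xs}w^*(x)$, so the estimated cap dominates the true importance weights and every acceptance probability $w^*(x_i)/\bh$ lies in $[0,1]$, i.e.\ $\bh$ is a legitimate rejection-sampling bound.

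Second, conditionally on $A$ I would replay the argument behind Theorem~\ref{thm:e2e_ps_bound_gen_greedy} with $b$ set to $\bh$: since $\bh\ge\sup_x w^*(x)$, rejection sampling turns $S_m$ into $T_N(S_m,V,\vec{w}^*,\bh)$, which (conditionally on $N$) is i.i.d.\ from $Q$, so by the Clopper--Pearson generalization argument of Theorem~\ref{thm:cp_bound_gen}/Theorem~\ref{thm:rscp_bound_gen} the quantity $U_{\text{RSCP}}(C_\tau,S_m,V,\vec{w}^*,\bh,\delta_C)$ is a $1-\delta_C$ upper bound on $L_Q(C_\tau)$ simultaneously over all $\tau\le\hat\tau$; and by Lemma~\ref{lem:mono_rejection_bound} the greedy weights $\wh_{\tau,S_m^X,T_n^X}$ of~(\ref{eq:greedyiw}) attain $\max_{w\in\Ws}U_{\text{RSCP}}(C_\tau,S_m,V,w,\bh,\delta_C)$, so the constraint in~(\ref{eq:ucb_worst_approx}) together with $\vec{w}^*\in\Ws$ forces $U_{\text{RSCP}}(C_\tau,S_m,V,\vec{w}^*,\bh,\delta_C)\le\ep$, hence $L_Q(C_\tau)\le\ep$ with conditional probability $\ge 1-\delta_C$. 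A union bound then gives $\Prob[L_Q(C_{\tau})>\ep]\le\Prob[\{L_Q(C_{\tau})>\ep\}\cap A]+\Prob[A^c]\le\delta_C+\delta_w$, which is the claim.

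The step I expect to be the real obstacle is making the rejection-sampling part of paragraph three fully rigorous, because $\bh$ is estimated from $S_m^X$ itself---the very covariates being sampled---so the accepted subsample is not literally an i.i.d.\ sample from $Q$ and the dependence must be handled carefully (e.g.\ by conditioning on the bin-occupancy statistics that determine $\bh$, or by a dominating coupling against the deterministic oracle cap $\sup_x w^*(x)$, exploiting that downstream only the inequality $\bh\ge\sup_x w^*(x)$ is used). By contrast, the monotonicity in Lemma~\ref{lem:mono_rejection_bound}, the importance-weight estimation bound (Theorem~\ref{thm:iw_bound}), and the final union bound are all routine.
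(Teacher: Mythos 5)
Your proposal follows essentially the route the paper intends, and in fact goes further than the paper does: the paper states Theorem~\ref{thm:e2e_ps_bound_gen_greedy_est_b} without any dedicated proof, merely asserting that Theorem~\ref{thm:e2e_ps_bound_gen_greedy} ``still holds'' with the estimated $\bh$, after noting that the guarantee (\ref{eq:uncertaintyset_guarantee_general}) implies (\ref{eq:uncertaintyset_guarantee}). Your decomposition --- good event $A=\{w^*\in\Ws\}$ from Theorem~\ref{thm:iw_bound}, validity of $\bh$ as a rejection-sampling cap on $A$, the greedy maximizer via Lemma~\ref{lem:mono_rejection_bound}, and a final union bound --- is exactly the skeleton underlying the paper's chain Theorem~\ref{thm:rscp_bound_gen} $\to$ Theorem~\ref{thm:e2e_ps_bound_gen} $\to$ Theorem~\ref{thm:e2e_ps_bound_gen_greedy}.

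One phrasing in your middle paragraph should be tightened. You claim that, \emph{conditionally on $A$}, $U_{\text{RSCP}}(C_\tau,S_m,V,\vec{w}^*,\bh,\delta_C)$ upper-bounds $L_Q(C_\tau)$ with probability $1-\delta_C$; but conditioning on $A$ (an event determined by $S_m^X,T_n^X$) tilts the law of $S_m$, so Theorem~\ref{thm:rscp_bound_gen} cannot be invoked conditionally at level $\delta_C$. The clean bookkeeping is the one in the proof of Theorem~\ref{thm:e2e_ps_bound_gen}: define an oracle $\tilde\tau$ using the true weights, apply the rejection-sampling CP guarantee to $\tilde\tau$ \emph{unconditionally}, use $A$ only to conclude $\hat\tau\le\tilde\tau$ via (\ref{eq:ucb_worst_upper}) and monotonicity of $L_Q(C_\tau)$ in $\tau$, and then bound $\Prob[\text{fail}]\le\Prob[\text{fail}\cap A]+\Prob[A^c]$ --- which is in fact the intersection bound you write in your last display, so the fix is local. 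Finally, the obstacle you flag is genuine and is \emph{not} resolved in the paper either: because $\bh$ (and the greedy $\wh$) are functions of $S_m^X$, the unconditional application of Theorem~\ref{thm:rscp_bound_gen} to the oracle $\tilde\tau$ defined with cap $\bh$ still requires an argument (e.g.\ conditioning on the bin counts, or coupling against the deterministic cap $\max_x w^*(x)$, noting that $U_{\text{RSCP}}$ depends on $(w,b)$ only through the ratios $w_i/b$). The paper's only acknowledgment of this reuse is a footnote appealing to the union bound, which covers the event $w^*\in\Ws$ but not the dependence of the rejection-sampling cap on the calibration covariates; identifying this is a correct and worthwhile observation.
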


\subsection{Choosing Hyperparameters}

In general, there is no systematic way to choose the smoothness parameter $E$ and the number of bins $K$; we briefly discuss strategies for doing so.

\textbf{Number of bins $K$.} The bins are defined in one dimensional space as described in Appendix \ref{sec:partitionconstruction}, so we follow the standard practice in the calibration literature for binning \citep{guo2017calibration, park2021pac}, where $K$ is between $10$ and $20$. As we are using equal-mass binning, we choose the number of bins so that each bin contains sufficiently many source examples (in our case, $5000$ examples) for the length of the Clopper-Pearson interval over IWs of each bin to be below some threshold (in our case, $10^{-3}$), which leads us to $K=10$. We provide a sensitivity analysis in Appendix \ref{apdx:varyingK}. 

\textbf{Smoothness parameter $E$.} 
Estimating $E$ (\ie computing the integral of the difference of source and target probabilities) is intractable in general as it requires that we perform density estimation in a high-dimensional space (\ie $2048$ in our case), and then integrate this density over each bin. To avoid this hyperparameter selection, we choose $E$ equal to zero or close to zero (in our case, $0.001$). Intuitively, we find that binning based on the source-discriminator scores is an effective way to group examples with similar IWs; thus, the main contribution to the uncertainty is the error of the point estimate.
We provide a sensitivity analysis on $E$ in Appendix \ref{apdx:varyingE}. Importantly, note that PS-W even with $E=0$ satisfies PAC criterion. One direction for future work is devising better strategies for choosing $E$.

\subsection{Comparison with WSCI}

\textbf{Guarantee.}
The main difference between WSCI and PS-W lies in the guarantee provided (rather than prediction set size); PS-W provides a stronger guarantee than WSCI. In particular, WSCI does not provide a PAC guarantee. Instead, to satisfy the coverage probability guarantee, it requires a new calibration set for every new test example. However, in practice, we usually have a single held-out calibration set. Thus, our approach PS-W provides a guarantee that holds conditioned on this set. This difference is illustrated in Figure \ref{fig:twogaussians_trueiw}, which compares WSCI and PS-W given the true importance weight. PS-W produces a larger set size than WSCI, but strictly satisfies the error constraint. The shortcoming of WSCI can also be observed in Figure 2 in \cite{tibshirani2019conformal}, which empirically shows that the guarantee only holds on average over examples. 


\textbf{Usage of IWs.} 
WSCI requires the true IWs, whereas our method can use approximate IWs. Also, IWs are used differently. Given the true importance weights, WSCI uses the importance weights to reweight examples in the calibration set, and PS-W uses the importance weights to generate a target labeled calibration set using rejection sampling.


\section{Proofs}\label{apdx:proofs}
\subsection{Proof of Theorem \ref{thm:cp_bound_gen}} \label{apdx:cp_bound_proof}
First, note that the constraint in (\ref{eqn:iidpacconf}) implies $F(\bar{L}_{S_m}(C_\tau);m,\ep)\le\delta$; conversely, any value of $\tau$ satisfying $F(\bar{L}_{S_m}(C_\tau);m,\ep)\le\delta$ also satisfies $\Lb_{S_m}(C_\tau)\le k(m,\ep,\delta)$. Thus, we can rewrite (\ref{eqn:iidpacconf}) as
\begin{align*}
\hat{\tau}=\operatorname*{\arg\max}_{\tau\in\Ts}~\tau
\quad\text{subj. to}\quad
F(\bar{L}_{S_m}(C_\tau);m,\ep)\le\delta.
\end{align*}
On the other hand, if $\tau$ satisfies (\ref{eqn:cp_bound_gen}), then by definition of $U_{\text{CP}}(C_\tau,S_m,\delta)=\overline{\theta}(k;m,\delta)$, we have
\begin{align*}
\inf\{\theta\in[0,1]\mid F(\Lb_{S_m}(C_\tau);m,\theta)\le\delta\}\cup\{1\}\le\ep,
\end{align*}
which implies that $F(\Lb_{S_m}(C_\tau);m,\ep)\le\delta$ (since the infimum is obtained within the set since the binomial CDF $F$ is continuous in $\ep$).
Conversely, any value of $\tau$ satisfying $F(\Lb_{S_m}(C_\tau);m,\ep)\le\delta$ also satisfies $U_{\text{CP}}(C_\tau,S_m,\delta)\le\ep$. Thus, we can rewrite (\ref{eqn:cp_bound_gen}) as
\begin{align*}
\hat{\tau}=\operatorname*{\arg\max}_{\tau\in\Ts}~\tau
\quad\text{subj. to}\quad
F(\bar{L}_{S_m}(C_\tau);m,\ep)\le\delta,
\end{align*}
implying (\ref{eqn:iidpacconf}) and (\ref{eqn:cp_bound_gen}) are equal.
Thus, the claim follows from Theorem 1 of~\citep{Park2020PAC} and the fact that the error $L_P(C_\tau)$ is monotonically increasing in $\tau$. $\square$

\subsection{Monotonicity of the Clopper-Pearson Bound}\label{sec:cp_monotonicity}

The CP bound $U_{\text{CP}}$ enjoys certain monotonicity properties that we will need. Intuitively, the CDF decreases as the number of observations $m$ increases while holding the number of successes $k$ fixed, but increases if both $m$ and $k$ are increased by the same amount (i.e., holding the number of failures $m-k$ fixed). In particular, we have the following:
\begin{lemma}\label{lem:mono_CP}
We have
$\overline{\theta}(k;m-1, \delta) \ge\overline{\theta}(k;m, \delta)$
and $\overline{\theta}(k-1;m-1,\delta)\le\overline{\theta}(k;m,\delta)$.
\end{lemma}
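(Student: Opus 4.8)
The plan is to reduce both inequalities to two elementary pointwise comparisons between binomial CDFs, together with the observation that $\overline\theta(k;m,\delta)$ is an infimum over a sublevel set of $\theta\mapsto F(k;m,\theta)$. Write $S(k,m)\coloneqq\{\theta\in[0,1]\mid F(k;m,\theta)\le\delta\}\cup\{1\}$, so that $\overline\theta(k;m,\delta)=\inf S(k,m)$ by definition~(\ref{eq:cpupperbound}). Since $A\subseteq B$ (with $A,B\subseteq[0,1]$ nonempty) implies $\inf B\le\inf A$, it suffices to prove the inclusions $S(k,m-1)\subseteq S(k,m)$ and $S(k,m)\subseteq S(k-1,m-1)$: the first gives $\overline\theta(k;m,\delta)\le\overline\theta(k;m-1,\delta)$, and the second gives $\overline\theta(k-1;m-1,\delta)\le\overline\theta(k;m,\delta)$. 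Note that this step uses nothing about continuity or monotonicity of $F$ in $\theta$.

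Each inclusion will follow from the standard one-step recursion for the binomial CDF,
\begin{align*}
F(k;m,\theta)=(1-\theta)\,F(k;m-1,\theta)+\theta\,F(k-1;m-1,\theta),
\end{align*}
obtained by conditioning $\mathrm{Binom}(m,\theta)$ on the outcome of the $m$-th trial (with the usual conventions $F(j;\ell,\theta)=1$ for $j\ge\ell$ and $F(-1;\ell,\theta)=0$, under which the identity remains valid at the boundary), combined with the trivial monotonicity $F(k-1;m-1,\theta)\le F(k;m-1,\theta)$ of the CDF in its first argument. Bounding the $\theta$-weighted term from above by $\theta\,F(k;m-1,\theta)$ gives $F(k;m,\theta)\le F(k;m-1,\theta)$, hence $S(k,m-1)\subseteq S(k,m)$; bounding the $(1-\theta)$-weighted term from below by $(1-\theta)\,F(k-1;m-1,\theta)$ gives $F(k;m,\theta)\ge F(k-1;m-1,\theta)$, hence $S(k,m)\subseteq S(k-1,m-1)$. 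Equivalently, one can couple $\mathrm{Binom}(m,\theta)=\mathrm{Binom}(m-1,\theta)+B$ with an independent $B\sim\mathrm{Bernoulli}(\theta)$, and observe that the event $\{\mathrm{Binom}(m,\theta)\le k\}$ both contains $\{\mathrm{Binom}(m-1,\theta)\le k-1\}$ and is contained in $\{\mathrm{Binom}(m-1,\theta)\le k\}$, which yields the same two inequalities.

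I do not expect a genuine obstacle here; the only thing that requires care is the bookkeeping of inequality directions — which binomial CDF dominates which, and therefore which sublevel set is contained in which, so that the final infimum comparison comes out with the correct sign. Once those are pinned down, each step is a one-line consequence of the recursion and the monotonicity of $F$ in its first argument.
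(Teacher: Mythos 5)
Your proposal is correct and follows essentially the same route as the paper: both reduce the claim to the pointwise CDF comparisons $F(k;m,\theta)\le F(k;m-1,\theta)$ and $F(k-1;m-1,\theta)\le F(k;m,\theta)$ (the paper proves these by exactly the event-containment/coupling argument you mention as an alternative) and then conclude via inclusion of the sublevel sets defining the infimum. Your one-step recursion derivation is a harmless cosmetic variant, and your inequality directions all check out.
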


\begin{proof}
Recall that $F(k; m, \theta)$ is the cumulative distribution function of a binomial distribution $\text{Binom}\left(m, \theta\right)$, or equivalently of the random variable $\sum_{i=1}^{m} X_{i}$, where $X_i \sim \text{Bernoulli}(\theta)$ are i.i.d.

\textbf{Decreasing case.}
If $k \le m-1$, then
we have
$$
\sum_{i=1}^m X_{i} \le k \Rightarrow
\sum_{i=1}^{m-1} X_{i} \le k,
$$
hence
$$
\Prob\left[ \sum_{i=1}^m X_{i} \le k \right]\subseteq
\Prob\left[ \sum_{i=1}^{m-1} X_{i} \le k\right],
$$
so $F(k; m, \theta) \le F(k; m-1, \theta)$.

Then, 
we have
\eqas{
\overline{\theta}(k; m, \delta)
&\coloneqq \inf \left\{ \theta \in [0, 1] \vmid  F(k; m, \theta) \leq \delta \right\} \cup \{1\} \\
&\le \inf \left\{ \theta \in [0, 1] \vmid  F(k; m-1, \theta) \leq \delta \right\} \cup \{1\} \\
&\eqqcolon  \overline{\theta}(k; m-1, \delta), 
}
thus $\overline{\theta}$ is monotonically non-increasing in $m$.

\textbf{Increasing case.}
We have
$$
\sum_{i=1}^{m-1} X_{i} \le k-1 \Rightarrow
\sum_{i=1}^{m} X_{i} \le k,
$$
hence
$$
\Prob\left[ \sum_{i=1}^{m-1} X_{i} \le k-1 \right]\subseteq
\Prob\left[ \sum_{i=1}^{m} X_{i} \le k\right],
$$
so $F(k-1; m-1, \theta) \le F(k; m, \theta)$.

Then, we have
\eqas{
 \overline{\theta}(k; m, \delta)
 &\coloneqq \inf \left\{ \theta \in [0, 1] \vmid  F(k; m, \theta) \leq \delta \right\} \cup \{1\} \\
 &\ge \inf \left\{ \theta \in [0, 1] \vmid  F(k-1; m-1, \theta) \leq \delta \right\} \cup \{1\} \\
 &\eqqcolon  \overline{\theta}(k-1; m-1, \delta), 
}
thus $\overline{\theta}$ is monotonically  jointly non-decreasing in $(m, k)$.
\end{proof}

\subsection{Proof of Theorem \ref{thm:rscp_bound_gen}} \label{apdx:rscp_bound_proof}

The rejection sampling prediction set consists of two steps: (i) generate target samples, using source samples $S_m$, importance weights $w$, and an upper bound on their maximum value $b$, and (ii) construct the Clopper-Pearson prediction set using the generated target samples.

From rejection sampling, we choose $N \coloneqq \sum_{i=1}^m \sigma_i$ samples from $S_m$, denoting them by $T_N$; here, $N \sim \text{Binom}\left(m, 1/b\right)$, and $1/b$ is the acceptance probability \citep{vonNeumann1951}---i.e.,
\eqas{
	\Prob\[ V' \le \frac{w(X)}{b} \] = \frac{1}{b},
}
where $V' \sim \text{Uniform}([0, 1])$.
The samples in $T_N$ are independent and identically distributed, conditionally on the random number $N$ of samples being equal to any fixed value $n$. The reason is that one can view the rejection sampling algorithm proceeding in stages, iterating through the samples one by one. The first stage starts at the very beginning, and then each stage ends when a datapoint is accepted, followed by starting a new stage at the next datapoint. The last stage ends at the last datapoint. 

Based only on the source samples observed in one stage, rejection sampling produces a sample from the target distribution. Thus, within each stage, we produce one sample from the target distribution, and because each stage is independent of all the other ones, conditionally on any number of stages reached, our produced target samples are iid.  Thus, we can use the Clopper-Pearson bound conditionally on each $N=n$.

To this end, let $\hat\tau(S_m,V)=\hat\tau$ to explicitly denote the dependence on $S_m$ and $V$, and let
\begin{align*}
\tilde{\tau}(T_n)=\operatorname*{\arg\max}_{\tau\in\mathcal{T}}\tau
\quad\text{subj. to}\quad
U_{\text{RSCP}}(C_\tau,T_n,\delta) \le \ep.
\end{align*}
Note that conditioned on obtaining $n$ samples using rejection sampling (i.e., $|T_n(S_m,V,w,b)|=n$), we have $\hat\tau(S_m,V)\stackrel{D}{=}\tilde\tau(T_n)$, where $\stackrel{D}{=}$ denotes equality in distribution. Then, we have
\begin{align*}
\Prob_{S_m\sim P^m,V\sim U^m}&\left[L_Q(C_{\hat{\tau}(S_m,V)}) \le \ep\right] \\
&=\sum_{n=0}^m\Prob_{S_m\sim P^m,V\sim U^m}[L_Q(C_{\hat{\tau}(S_m,V)}) \le \ep\mid N=n]\cdot\Prob[N=n] \\
&=\sum_{n=0}^m\Prob_{T_n\sim Q^n}[L_Q(C_{\tilde{\tau}(T_n)}) \le \ep]\cdot\Prob[N=n] \\
&\ge\sum_{n=0}^m(1-\delta)\cdot\Prob[N=n] \\
&=1-\delta,
\end{align*}
where the inequality follows by Theorem~\ref{thm:cp_bound_gen}. The claim follows. $\square$

\subsection{Proof of Theorem \ref{thm:e2e_ps_bound_gen}} \label{apdx:e2e_ps_bound_proof}

First, let
\begin{align}
\label{eqn:e2e_ps_bound_gen:1}
\tilde{\tau}=\operatorname*{\arg\max}_{\tau\in\mathcal{T}}\tau
\quad\text{subj. to}\quad
U_{\text{RSCP}}(C_\tau,S_m,V,\vec{w}^*,b,\delta_C)\le\ep,
\end{align}
which satisfies $\Prob_{S_m\sim P^m,V\sim U^m}\left[L_Q(C_{\tilde{\tau}}) \le \ep\right]\ge1-\delta_C$ by Theorem \ref{thm:rscp_bound_gen}.
Now, with probability at least $1-\delta_w$, we have $\vec{w}^*\in\mathcal{W}$. Under this event, we have
\begin{align*}
U_{\text{RSCP}}(C_\tau,S_m,V,\vec{w}^*,b,\delta_C)\le \max_{w\in\mathcal{W}}U_{\text{RSCP}}(C_\tau,S_m,V,w,b,\delta_C),
\end{align*}
so $\hat\tau$ satisfies the constraint in (\ref{eqn:e2e_ps_bound_gen:1}). Thus, we must have $\hat\tau\le\tilde\tau$. By monotonicity of $L_Q(C_\tau)$ in $\tau$, we have $L_Q(C_{\hat\tau})\le L_Q(C_{\tilde\tau})$, which implies that
\begin{align*}
\Prob_{S_m\sim P^m,V\sim U^m,T_n^X\sim Q_X^n}[L_Q(C_{\hat\tau}) \leq \ep]
&\ge\Prob_{S_m\sim P^m,V\sim U^m}[L_Q(C_{\tilde\tau}) \leq \ep] \ge 1 - \delta_C,
\end{align*}
where the last step follows by Theorem~\ref{thm:rscp_bound_gen}. The claim follows by a union bound, since $\vec{w}^*\in\mathcal{W}$ with probability at least $1-\delta_w$. $\square$

\subsection{Proof of Lemma \ref{lem:mono_rejection_bound}} \label{apdx:mono_rejection_bound_proof}

Let $w$ and $v$ be IWs where $w(x_i) \ge v(x_i)$ and $w(x_j) = v(x_j)$ for $j \ne i$.
Additionally, we use the following shorthands:
\eqas{
n_w &\coloneqq \sum_{i=1}^m \mathbbm{1} \( V_i \le \frac{w(x_i)}{b} \), \\
T_{n_w} &\coloneqq \left\{ (x_i, y_i) \in S_m \vmid V_i \le \frac{w(x_i)}{b} \right\}, \\
k_{w} &\coloneqq \sum_{(x, y) \in T_{n_w}} \mathbbm{1} \( y \notin C(x) \), \\
n_v &\coloneqq \sum_{i=1}^m \mathbbm{1} \( V_i \le \frac{v(x_i)}{b} \), \\
T_{n_v} &\coloneqq \left\{ (x_i, y_i) \in S_m \vmid V_i \le \frac{v(x_i)}{b} \right\}, \text{~and} \\
k_{v} &\coloneqq \sum_{(x, y) \in T_{n_v}} \mathbbm{1} \( y \notin C(x) \).
}
Here, $n_w \ge n_v$ since $w(x_i) \ge v(x_i)$.
Finally, recall that $F(k; m, \theta)$ be the cumulative distribution function of a binomial random variable $\sum_{i=1}^{m} X_{i}$, where $X_i \sim \text{Bern}(\theta)$.

\textbf{Non-decreasing case.}
If ${y_i} \notin C(x_i)$, 
there are two cases to consider: \benum
\item If $\frac{v(x_i)}{b} < V_i \le \frac{w(x_i)}{b}$, then we can verify that $n_w=n_v+1$ and $k_w=k_v+1$. 
\item Otherwise, we can verify that $n_w=n_v$ and $k_w=k_v$. 
\eenum 
In both cases, $k_w \ge n_v$ and $n_w \ge n_v$. 
Since $\overline{\theta}$ is monotonically jointly non-decreasing in $(m, k)$ as in Lemma \ref{lem:mono_CP}, we have
\eqas{
U_{\text{RSCP}}(C, S_m, V, w, b, \delta)
&\coloneqq U_{\text{CP}}(\Lb_{T_{n_w}}(C) , \delta) \\
&\coloneqq \overline{\theta}(k_w; n_w, \delta) \\
&\ge \overline{\theta}(k_v; n_v, \delta) \\
&\eqqcolon U_{\text{CP}}( \Lb_{T_{n_v}}(C), \delta) \\
&\eqqcolon U_{\text{RSCP}}(C, S_m, V, v, b, \delta), 
}
thus $U_{\text{RSCP}}$ is monotonically non-decreasing in $w(x_i)$.


\textbf{Non-increasing case.}
If ${y_i} \in C(x_i)$, then $k_w = k_v$. 
Since $\overline{\theta}$ is monotonically non-increasing in $m$ as in Lemma \ref{lem:mono_CP}, we have
\eqas{
U_{\text{RSCP}}(C, S_m, V, w, b, \delta)
&\coloneqq U_{\text{CP}}(\Lb_{T_{n_w}}(C), \delta) \\
&\coloneqq \overline{\theta}(k_w; n_w, \delta) \\
&\le \overline{\theta}(k_v; n_v, \delta) \\
&\eqqcolon U_{\text{CP}}(\Lb_{T_{n_v}}(C), \delta) \\
&\eqqcolon U_{\text{RSCP}}(C, S_m, V, v, b, \delta), 
}
thus $U_{\text{RSCP}}$ is monotonically non-increasing in $w(x_i)$.


\subsection{Proof of Theorem \ref{thm:iw_bound}} \label{apdx:thm:iw_bound_proof}

Recall that
\eqas{
	\ph_B(x) &\coloneqq \sum_{j=1}^K \mathbbm{1}\( x \in B_j\) \[ \frac{1}{m} \sum_{x' \in S_m^X} \mathbbm{1}\left( x' \in B_j \right) \], \\
	\qh_B(x) &\coloneqq \sum_{j=1}^K \mathbbm{1}\( x \in B_j\) \[ \frac{1}{n} \sum_{x' \in T_n^X} \mathbbm{1}\left( x_i \in B_j \right) \], \\
	p_B(x) &\coloneqq \sum_{j=1}^K \mathbbm{1}\( x \in B_j \) \int_{B_j} p(x')~ \mathrm{d}x', \\
	q_B(x) &\coloneqq \sum_{j=1}^K \mathbbm{1}\( x \in B_j \) \int_{B_j} q(x')~ \mathrm{d}x', \text{~and} \\
	v(x) &\coloneqq v_{j(x)} = \int_{B_{j(x)}} \mathrm{d}x'.
}

Due to the assumption of (\ref{eq:smooth_assumption}), $| v(x) \cdot p(x) - p_K(x) |$ is bounded for any $x \in B_j$ as follows:
\eqa{
	| v(x) \cdot p(x) - p_B(x) |
	= \left| \int_{B_j} p(x) \; \mathrm{d}x' - \int_{B_j} p(x') \; \mathrm{d}x' \right| \nonumber
	&= \left| \int_{B_j} p(x) - p(x') \mathrm{d}x'  \right| \nonumber \\
	&\le \int_{B_j} \left| p(x) - p(x') \right| \mathrm{d}x' \nonumber \\
	&= E. \label{eq:p_p_K_bound}
}
Similarly, 
\eqa{
	| v(x) \cdot q(x) - q_B(x) | \le E. \label{eq:q_q_K_bound}
}

Observe that $m \ph(x) \sim \text{Binom}\left(m, \int_{B_j} p(x') \mathrm{d}x' \right)$ for any $x \in B_j$; thus $p_K$ is bounded with probability at least $1 - \delta'$ as follows due to the Clopper-Pearson interval $(\underline{\theta}, \overline{\theta})$:
\eqa{
	\underline{\theta}(m \ph(x); m, \delta') \le p_K(x) \le \overline{\theta}(m \ph(x); m, \delta'). \label{eq:p_K_ph_bound}
}
Similarly, 
\eqa{
	\underline{\theta}(n \qh(x); n, \delta') \le q_K(x) \le \overline{\theta}(n \qh(x); n, \delta'). \label{eq:q_K_qh_bound}
}

From (\ref{eq:p_p_K_bound}), (\ref{eq:q_q_K_bound}), (\ref{eq:p_K_ph_bound}), and (\ref{eq:q_K_qh_bound}), the following holds:
\eqas{
	\underline{\theta}(m \ph(x); m, \delta') - E &\le v(x) \cdot p(x) \le \overline{\theta}(m \ph(x); m, \delta') + E \text{~and} \\
	\underline{\theta}(n \qh(x); n, \delta') - E &\le v(x) \cdot q(x) \le \overline{\theta}(n \qh(x); n, \delta') + E. 
}

Therefore, for any $x \in B_j$, $w^*(x)$ is bounded as follows:
\eqas{
	\frac{\underline{\theta}(n \qh(x); n, \delta') - E}{\overline{\theta}(m \ph(x); m, \delta') + E} \le w^*(x) = \frac{q(x)}{p(x)} \le \frac{\overline{\theta}(n \qh(x); n, \delta') + E}{\underline{\theta}(m \ph(x); m, \delta') - E}.
}
Since we apply the Clopper-Pearson interval for $K$ partitions for both source and target, the claim holds due to the union bound.

\clearpage
\section{Additional Algorithms} \label{apdx:alg}

\subsection{PS Algorithm}
\begin{algorithm}[h!]
\caption{PS: an algorithm using the CP bound in (\ref{eqn:cp_bound_gen})}
\label{alg:cp}
\begin{algorithmic}
\Procedure{PS}{$S_m, f, \Ts, \ep, \delta$}
\State $\hat\tau \gets 0$
\For{$\tau \in \Ts$} $\Comment{\text{Grid search in ascending order}}$
    \If{$U_{\text{CP}}(C_\tau, S_m, \delta) \le \ep$}
        \State $\hat\tau \gets \max(\hat\tau, \tau)$
    \Else
        \State \textbf{break}
    \EndIf
\EndFor
\State \textbf{return} $\hat\tau$
\EndProcedure
\end{algorithmic}
\end{algorithm}

\subsection{PS-C Algorithm} \label{apdx:cp_c}

\begin{algorithm}[h!]
\caption{PS-C: an algorithm using the CP bound in (\ref{eqn:cp_bound_gen}) with $\ep/b$}
\label{alg:psc}
\begin{algorithmic}
\Procedure{PS-C}{$S_m, f, \Ts, b, \ep, \delta$}
\State \textbf{return} \textsc{PS}($S_m, f, \Ts, \ep/b, \delta$)
\EndProcedure
\end{algorithmic}
\end{algorithm}

We describe the PS-C algorithm, which uses a conservative upper bound on the CP interval. Let
\eqas{
L_P(C) &\coloneqq \Expop_{(x, y) \sim P} \[ \mathbbm{1}\left( y \notin C(x) \right) \] \\ 
L_Q(C) &\coloneqq \Expop_{(x, y) \sim Q} \[ \mathbbm{1}\left( y \notin C(x) \right) \] \\
w^*(x) &\coloneqq \frac{q(x)}{p(x)} \\
b &\coloneqq \max_{x \in \Xs} w^*(x).
}
Then, we have
\eqas{
L_Q(C) 
&= \Expop_{(x, y) \sim Q} \[ \mathbbm{1}\left( y \notin C(x) \right) \] \\
&= \Expop_{(x, y) \sim P} \[ w^*(x) \mathbbm{1}\left( y \notin C(x) \right) \] \\
&\le \Expop_{(x, y) \sim P} \[ b \cdot \mathbbm{1}\left( y \notin C(x) \right) \] \\
&= b \Expop_{(x, y) \sim P} \[ \mathbbm{1}\left( y \notin C(x) \right) \] \\
&= b\cdot  L_P(C).
}
Thus, $L_Q(C) \le \ep$ if $b \cdot L_P(C) \le \ep$. Equivalently, $L_Q(C) \le \ep$ if $L_P(C) \le \ep / b$. As a consequence, we can choose $C$ based on the CP bound for the i.i.d. case (\ie Algorithm \ref{alg:cp}), except using the desired error of $\ep / b$ (instead of $\ep$). The algorithm is described in Algorithm \ref{alg:psc}.

\clearpage

\subsection{PS-R Algorithm}

\begin{algorithm}[h!]
\caption{PS-R: an algorithm using the RSCP bound in (\ref{eq:prob_rscp})}
\label{alg:rscp}
\begin{algorithmic}
\Procedure{PS-R}{$S_m, f, \Ts, w, b, \ep, \delta$}
\State $V \sim \text{Uniform}([0, 1])^m$ 
\State $\hat\tau \gets 0$
\For{$\tau \in \Ts$} $\Comment{\text{Grid search in ascending order}}$
    \If{$U_{\text{RSCP}}(C_\tau, S_m, V, w, b, \delta) \le \ep$}
        \State $\hat\tau \gets \max(\hat\tau, \tau)$
    \Else
        \State \textbf{break}
    \EndIf
\EndFor
\State \textbf{return} $\hat\tau$
\EndProcedure
\end{algorithmic}
\end{algorithm}

\subsection{PS-M Algorithm}

\begin{algorithm}[h!]
\caption{PS-M: an algorithm using the RSCP bound in (\ref{eq:prob_rscp}) along with IWs rescaling}
\label{alg:psm}
\begin{algorithmic}
\Procedure{PS-M}{$S_m, T_n^X, f, \Ts, w, b, \ep, \delta$}
\State $\wh(x) \gets \frac{\qh_B(x)}{\ph_B(x)}$ for $x \in \Xs$, where $\ph_B$ and $\qh_B$ are defined in (\ref{eq:phatB}) and (\ref{eq:qhatB}), respectively
\State \textbf{return} \textsc{PS-R}($S_m, f, \Ts, \wh, b, \ep, \delta$)
\EndProcedure
\end{algorithmic}
\end{algorithm}

\section{Experiment Details} \label{apdx:expdetails}

\subsection{Domain Adaptation} \label{apdx:da_setup}

We use a fully-connected network (with two hidden layers, where each layers has 500 neurons followed by ReLU activations and a $0.5$-dropout layer) as the domain classifier (recall that the input of this domain classifier is the last hidden layer of ResNet101). 
We use the last hidden layer of the model as example space $\Xs$, where its dimension is 2048.
For neural network training, we run stochastic gradient descent (SGD) for 100 epochs with an initial learning rate of 0.1, decaying it by half once every 20 epochs. The domain adaptation regularizer is gradually increased as in \citep{dannGanin2016}.
We use the same hyperparameters for all experiments.

\subsection{DomainNet}
We split the dataset into 409,832 training, 88,371 calibration, and 88,372 test images.

\subsection{ImageNetC-13}\label{apdx:imagenetc-split}
We split ImageNet into 1.2M training, 25K calibration, and 25K test images, and ImageNet-C13 into 83M training, 1.6M calibration, and 1.6M test images. 

To train a model using domain adaptation, due to the large size of the target training set, we subsample the target training set to be the same size as the source training set on for each random trials.

\section{Additional Results} \label{apdx:additionalresults}

\subsection{Synthetic Rate Shift by Two Gaussians} \label{apdx:twogaussians}

We demonstrate the efficacy of the proposed approaches (\ie PS-R with the known IWs and PS-W with the estimated IWs) using a synthetic dataset consisting of samples from two Gaussian distributions.

\begin{figure}[th!]
\begin{subfigure}[b]{0.49\linewidth}
\includegraphics[width=0.49\linewidth]{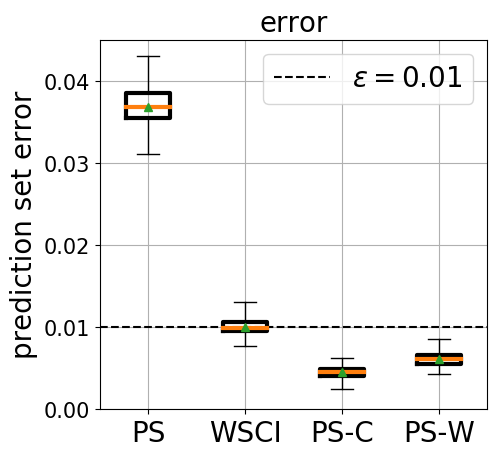}
\includegraphics[width=0.49\linewidth]{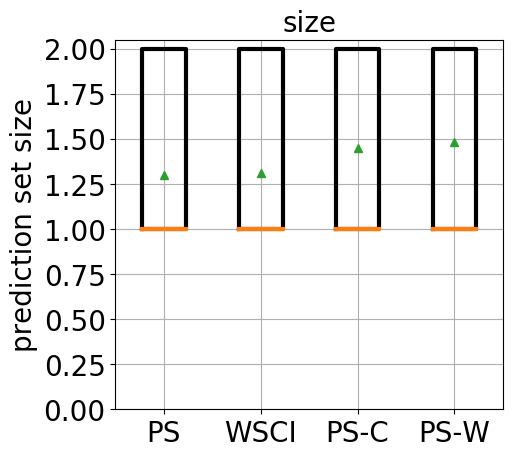}
\caption{With the known true IW}
\label{fig:twogaussians_trueiw}
\end{subfigure}
\begin{subfigure}[b]{0.49\linewidth}
\includegraphics[width=0.49\linewidth]{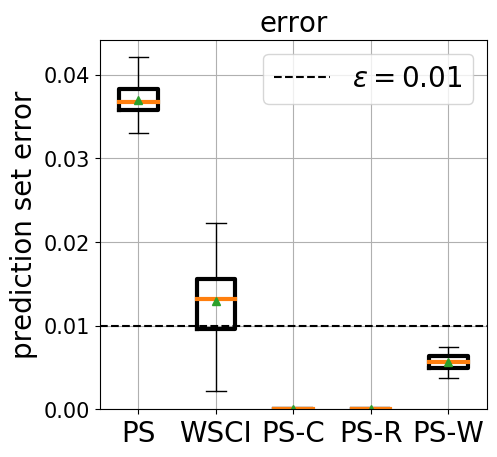}
\includegraphics[width=0.49\linewidth]{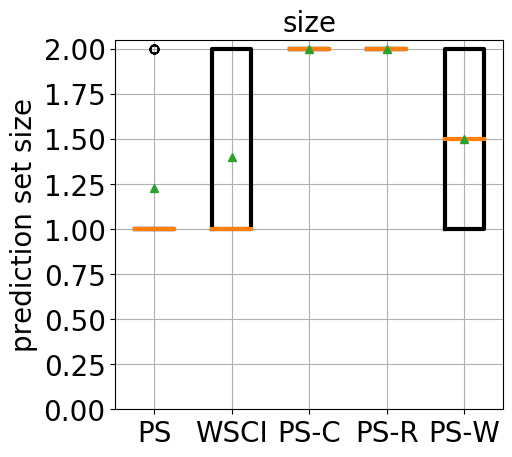}
\caption{With an estimated IW}
\label{fig:twogaussian_estiw}
\end{subfigure}
\caption{Error under the rate shift by Two Gaussians (over 100 random trials). Parameters are $m=50,000$, $\ep = 0.01$, and $\delta = 10^{-5}$. }
\end{figure}

\textbf{Dataset.} We consider two Gaussian distributions $\Ns(\mu, \Sigma)$ and $\Ns(\mu, \Sigma')$ over $2048$-dimensional covariate space $\Xs$. Here, $\mu = \mathbf{0}$; $\Sigma$ and $\Sigma'$ are diagonal where $\Sigma_{1, 1} = 5^2$, $\Sigma_{i, i} = 10^{-1}$, $\Sigma'_{1, 1} = 1$, and $\Sigma'_{i, i} = 10^{-1}$ for $i \in \{2, \dots, 2048\}$. We consider the ``flat'' Gaussian $\Ns(\mu, \Sigma)$ as the source and the ``tall'' Gaussian $\Ns(\mu, \Sigma')$ as the target. Intuitively, there is a rate shift from the source to the target---i.e., the target examples are a subset of the source, but occur with higher frequencies. We use the following labeling function: $p(y\mid x) = \sigma(5x_1)$, where $\sigma$ is the sigmoid function. Finally, we generate 50,000 labeled examples for each training, calibration, and test.

\textbf{Results.}
We consider two different setups: 1) the true IW is known, and 2) the true IW is unknown.  In Figure \ref{fig:twogaussians_trueiw}, we demonstrate the prediction set errors given the true IW. As expected PS-R satisfies the PAC guarantee---i.e., the error is below $\ep$. However, as shown in Figure \ref{fig:twogaussian_estiw}, when we need to estimate IWs, using just the point-estimate of the IW results in PS-R performing poorly in terms of prediction set error; it still satisfies the $\ep$ constraint, but the error is close to zero, indicating that the prediction set size is too large to be useful for an uncertainty quantifier. In contrast, PS-W (i.e., rejection sampling based on interval estimates of IWs) produces a larger, more reasonable error rate while still satisfying the PAC condition. These experiments demonstrate that PS-R works well when given the true IW, but accounting for IW uncertainty is important when using estimated IWs.

\clearpage
\subsection{Prediction Set Size and Error} \label{apdx:setsizeerror}

\begin{figure}[th!]
\centering
\begin{subfigure}[b]{0.49\linewidth}
\includegraphics[width=0.49\linewidth]{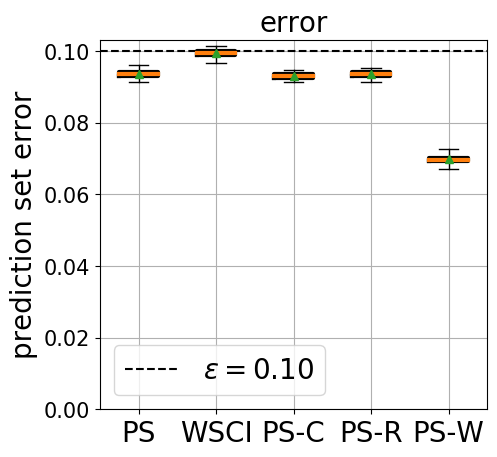}
\includegraphics[width=0.49\linewidth]{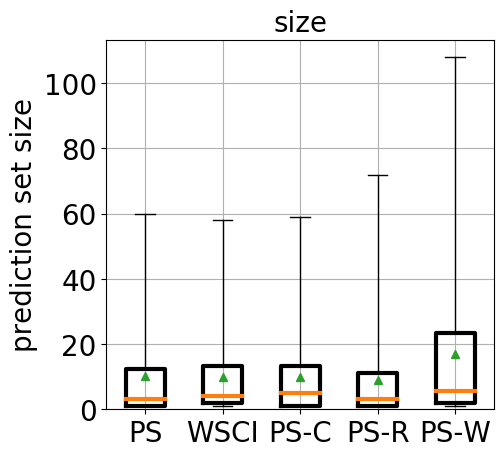}
\caption{All}
\end{subfigure}
\begin{subfigure}[b]{0.49\linewidth}
\includegraphics[width=0.49\linewidth]{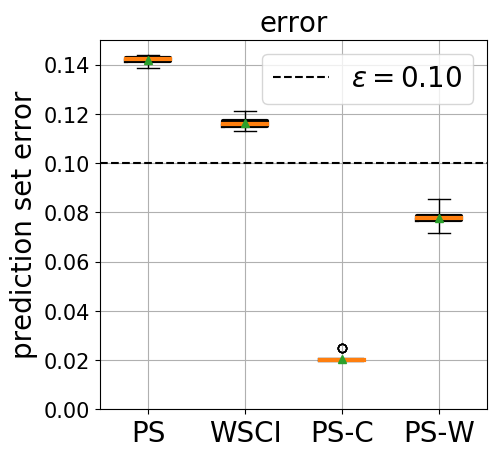}
\includegraphics[width=0.49\linewidth]{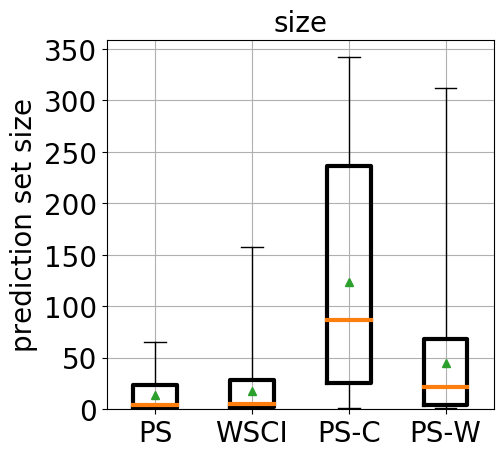}
\caption{Sketch}
\end{subfigure}
\begin{subfigure}[b]{0.49\linewidth}
\includegraphics[width=0.49\linewidth]{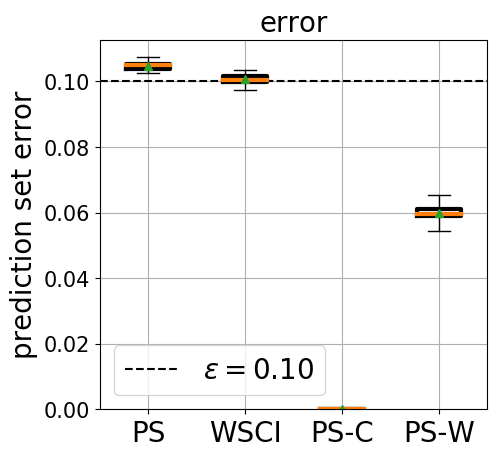}
\includegraphics[width=0.49\linewidth]{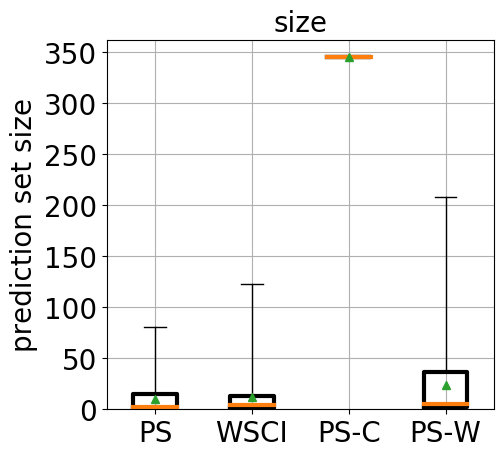}
\caption{Clipart}
\end{subfigure}
\begin{subfigure}[b]{0.49\linewidth}
\includegraphics[width=0.49\linewidth]{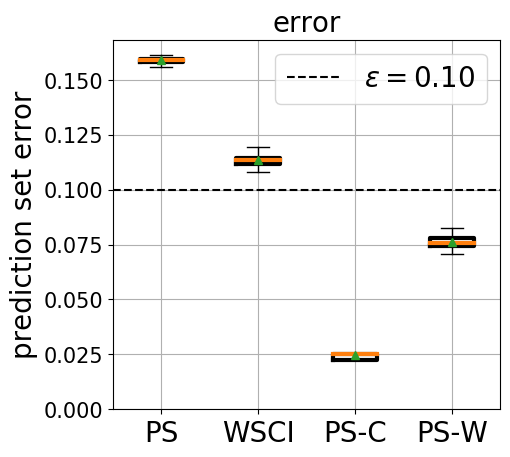}
\includegraphics[width=0.49\linewidth]{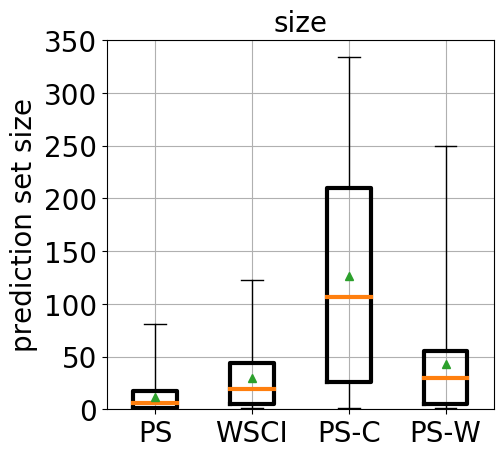}
\caption{Painting}
\end{subfigure}
\begin{subfigure}[b]{0.49\linewidth}
\includegraphics[width=0.49\linewidth]{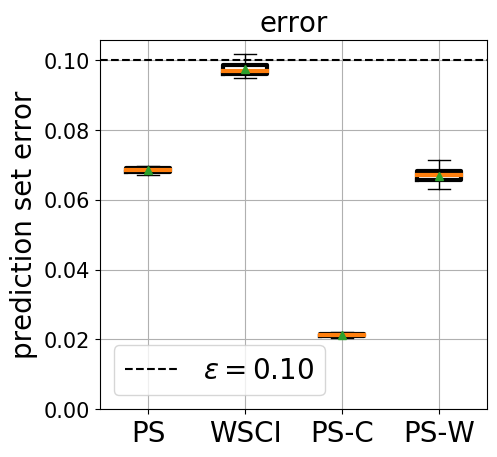}
\includegraphics[width=0.49\linewidth]{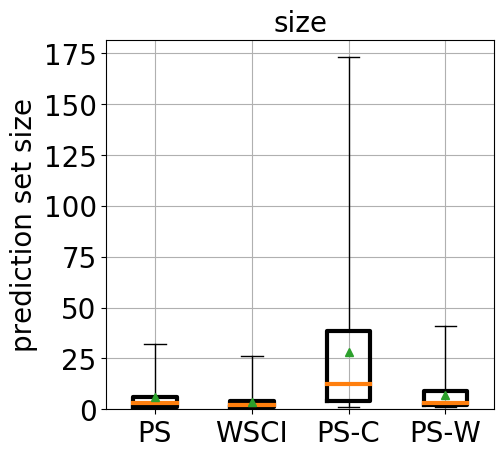}
\caption{Quickdraw}
\end{subfigure}
\begin{subfigure}[b]{0.49\linewidth}
\includegraphics[width=0.49\linewidth]{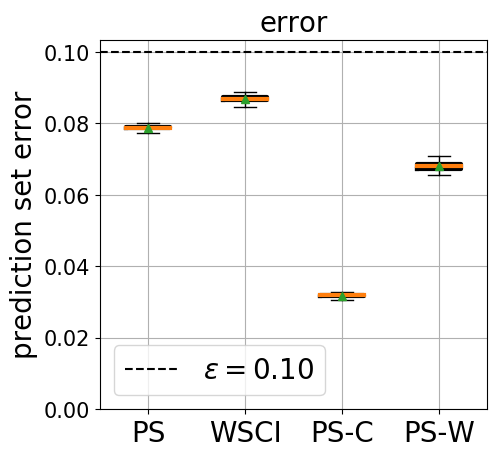}
\includegraphics[width=0.49\linewidth]{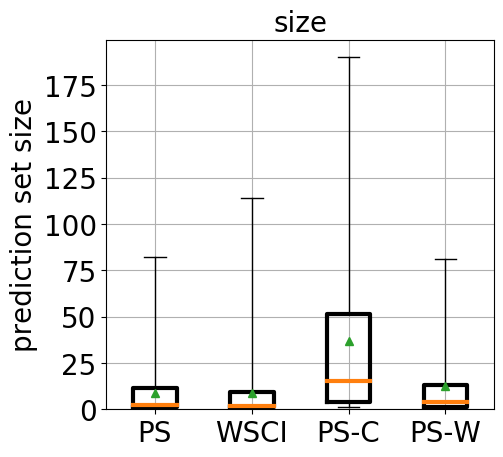}
\caption{Real}
\end{subfigure}
\begin{subfigure}[b]{0.49\linewidth}
\includegraphics[width=0.49\linewidth]{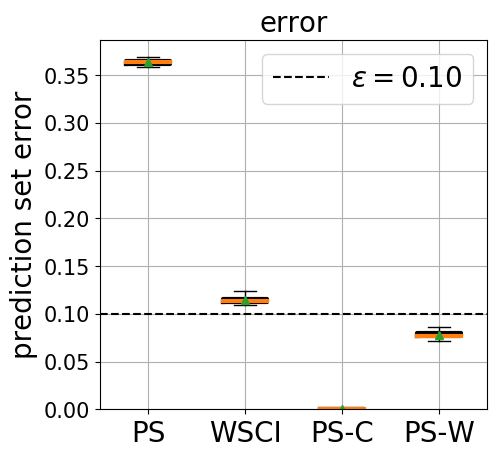}
\includegraphics[width=0.49\linewidth]{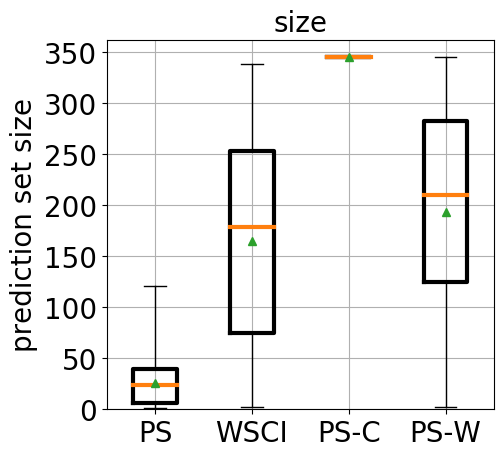}
\caption{Infograph}
\label{fig:domainnet:infograph_errorsize}
\end{subfigure}
\begin{subfigure}[b]{0.49\linewidth}
\includegraphics[width=0.48\linewidth]{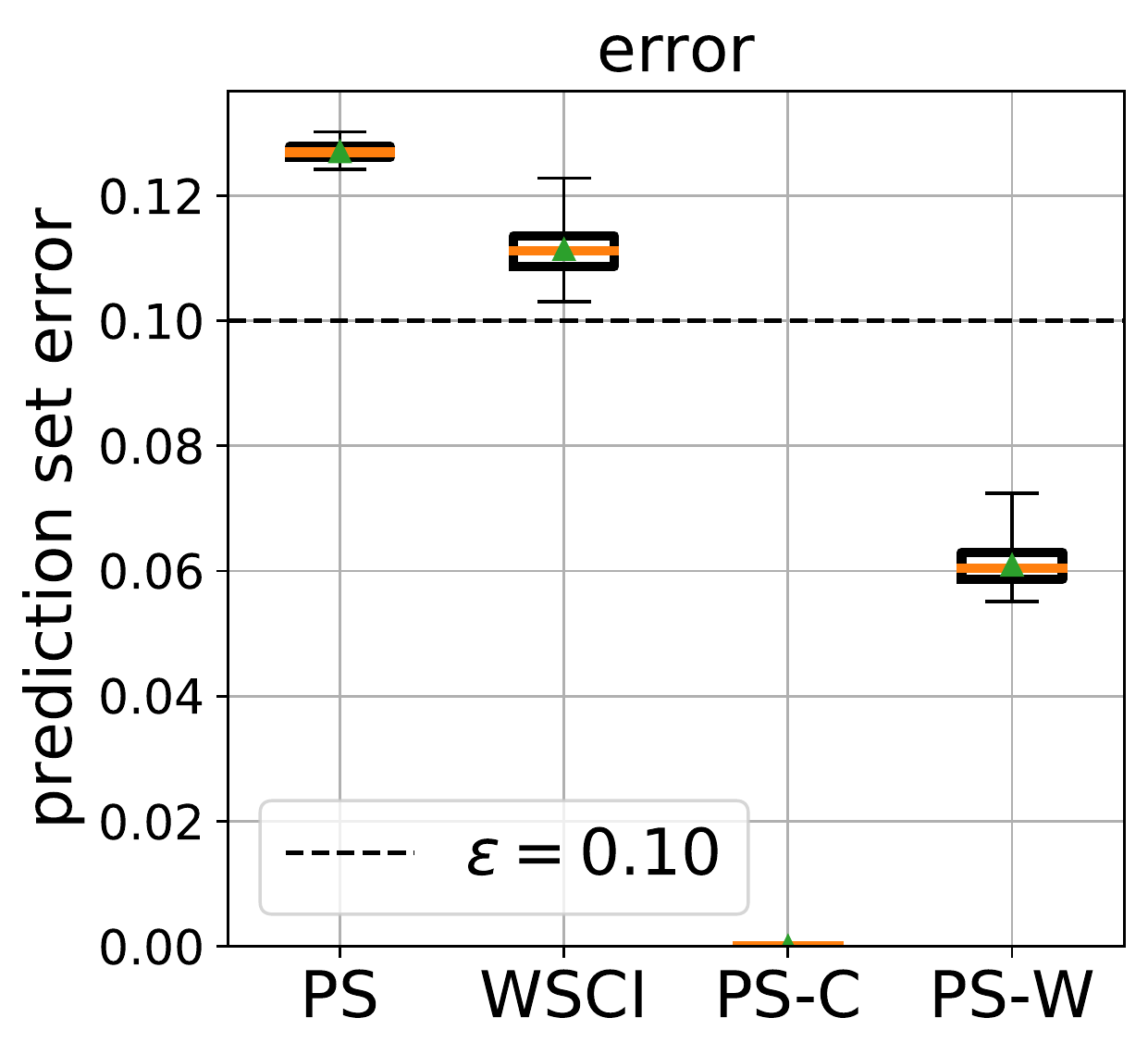}
\includegraphics[width=0.51\linewidth]{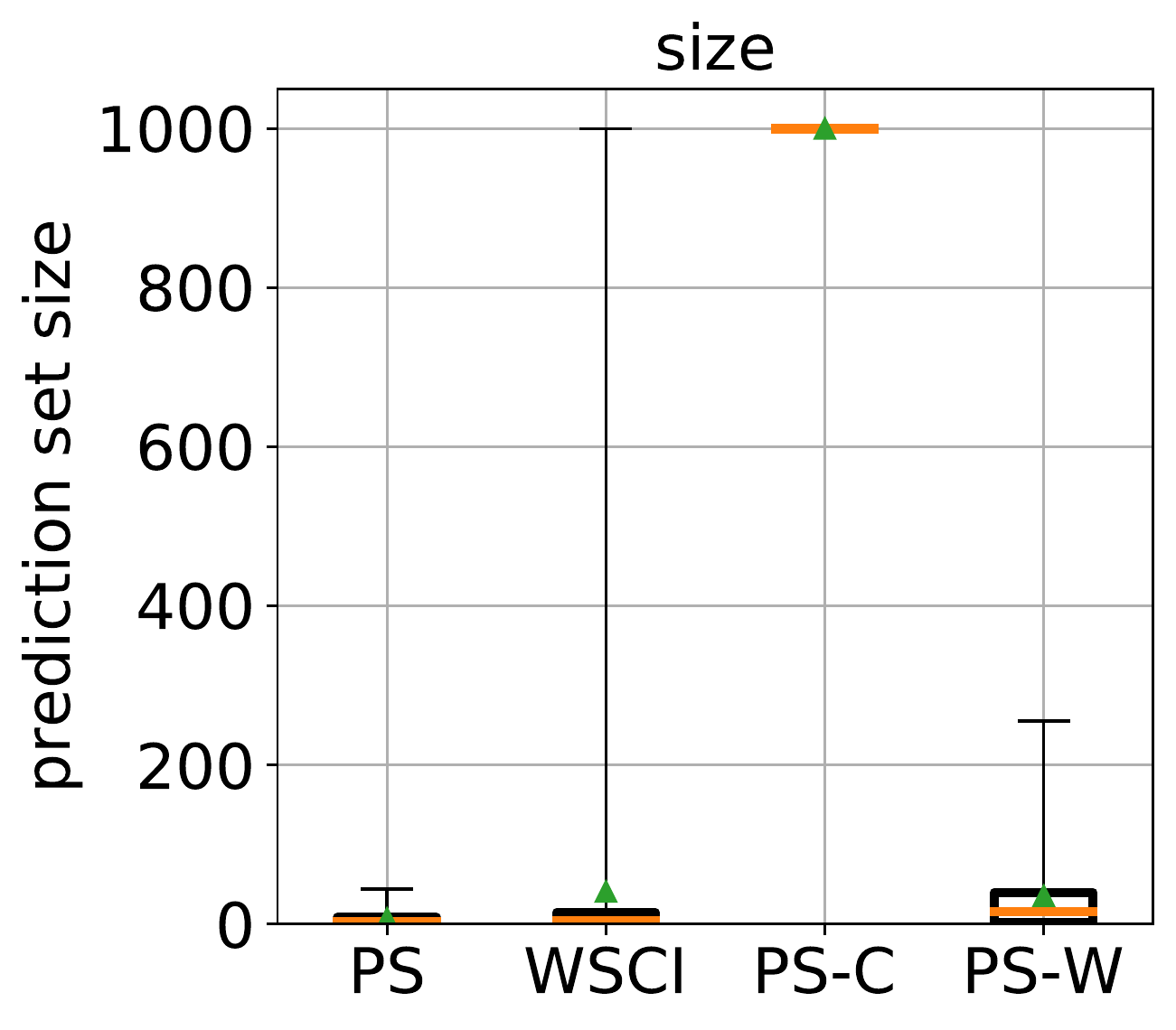}
\caption{ImageNet-C13}
\label{fig:imagenetc13_errorsize}
\end{subfigure} 
\begin{subfigure}[b]{0.49\linewidth}
\includegraphics[width=0.48\linewidth]{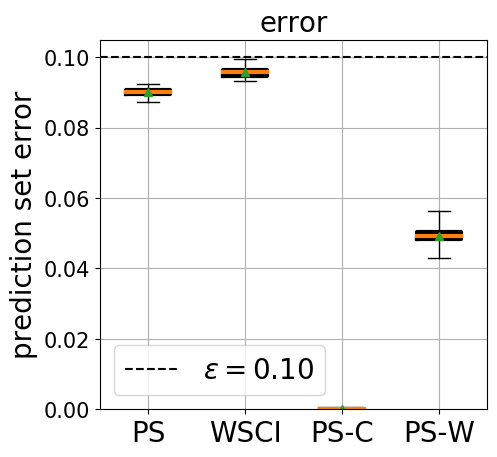}
\includegraphics[width=0.51\linewidth]{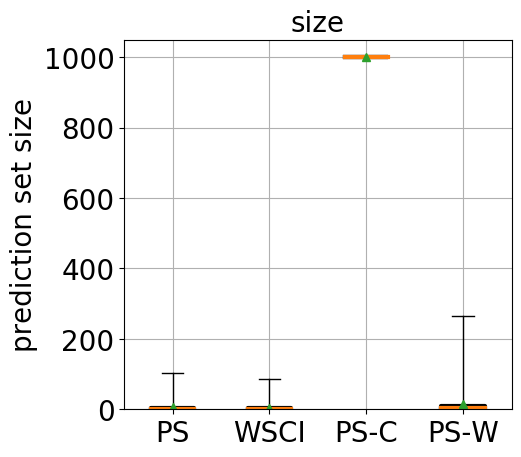}
\caption{ImageNet + PGD}
\label{fig:imagenetadvex_errorsize}
\end{subfigure} 
\caption{Prediction set error and size under rate shifts on DomainNet (a-g) and under support shifts on ImageNet (h, i) (over 100 random trials for error and over a held-out test set for size). Parameters are $m=50,000$ for DomainNet and $m=20,000$ for ImageNet, $\ep = 0.1$, and $\delta = 10^{-5}$.} 
\label{fig:domainnetimagenet}
\end{figure}

\clearpage
\subsection{Ablation Study on Calibration} \label{apdx:ablation}

\begin{figure}[th!]
\centering
\begin{subfigure}[b]{0.49\linewidth}
\includegraphics[width=0.49\linewidth]{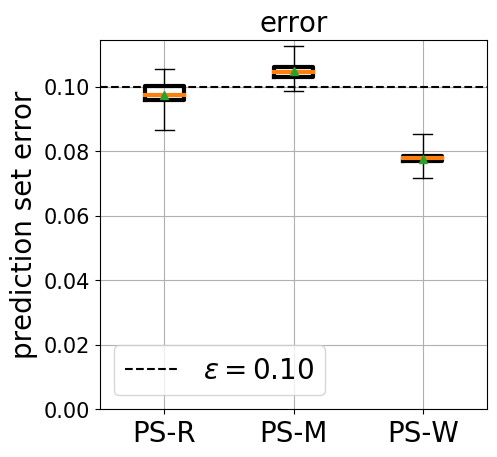}
\includegraphics[width=0.49\linewidth]{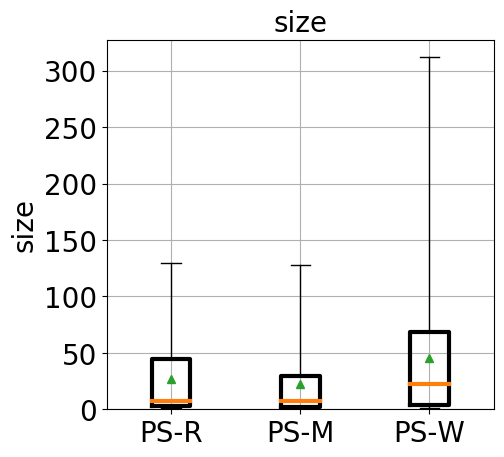}
\caption{DomainNet-Sketch}
\end{subfigure}
\begin{subfigure}[b]{0.49\linewidth}
\includegraphics[width=0.49\linewidth]{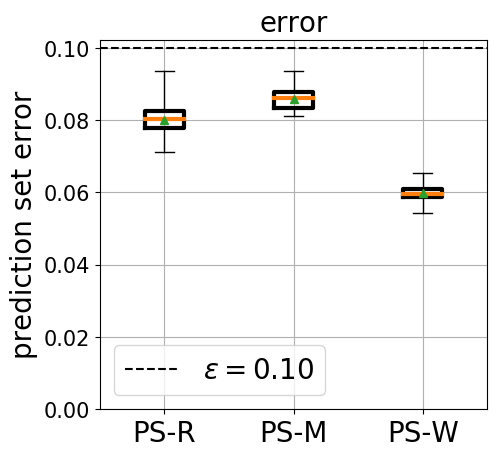}
\includegraphics[width=0.49\linewidth]{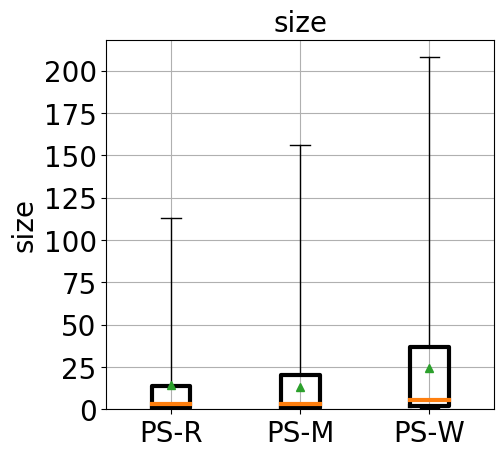}
\caption{DomainNet-Clipart}
\end{subfigure}
\begin{subfigure}[b]{0.49\linewidth}
\includegraphics[width=0.49\linewidth]{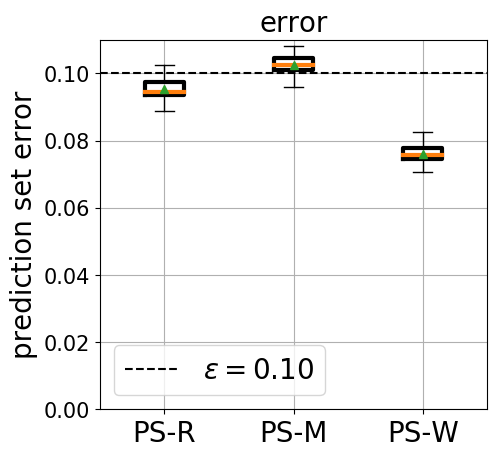}
\includegraphics[width=0.49\linewidth]{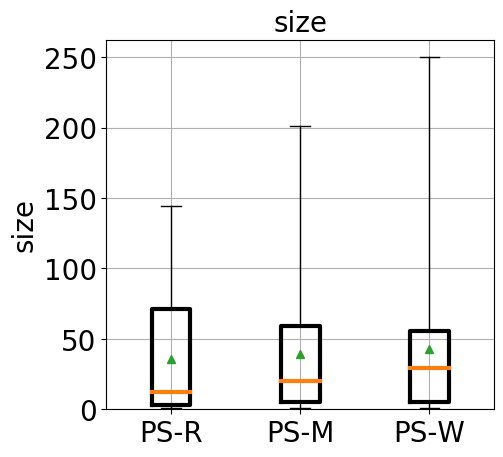}
\caption{DomainNet-Painting}
\end{subfigure}
\begin{subfigure}[b]{0.49\linewidth}
\includegraphics[width=0.49\linewidth]{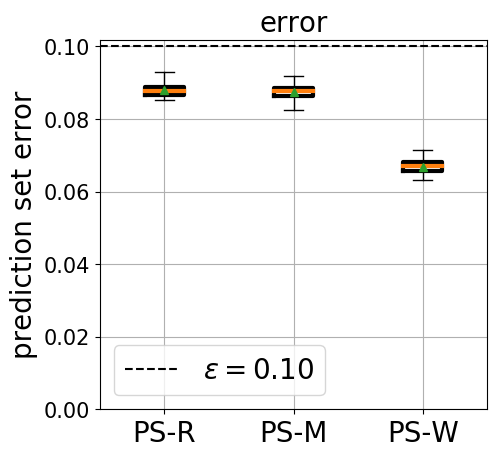}
\includegraphics[width=0.49\linewidth]{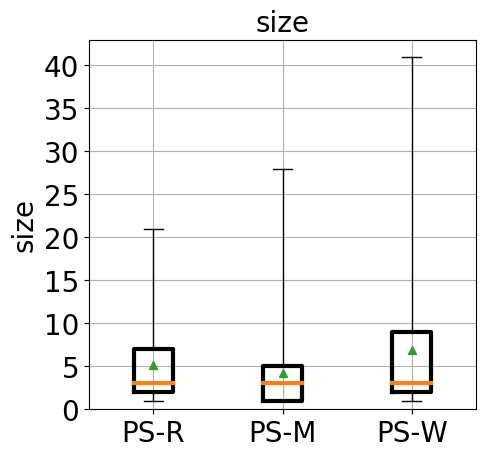}
\caption{DomainNet-Quickdraw}
\end{subfigure}
\begin{subfigure}[b]{0.49\linewidth}
\includegraphics[width=0.49\linewidth]{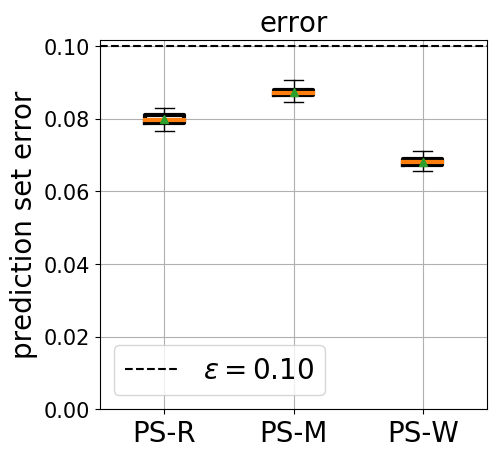}
\includegraphics[width=0.49\linewidth]{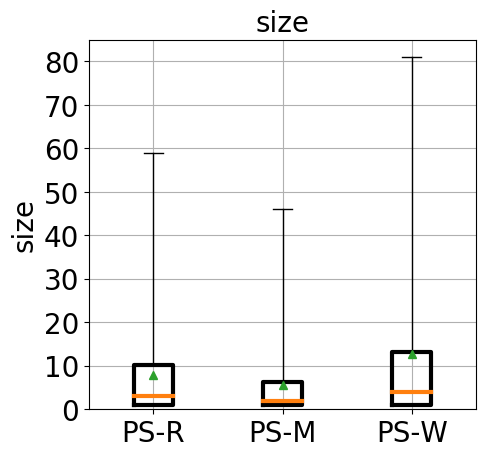}
\caption{DomainNet-Real}
\end{subfigure}
\begin{subfigure}[b]{0.49\linewidth}
\includegraphics[width=0.49\linewidth]{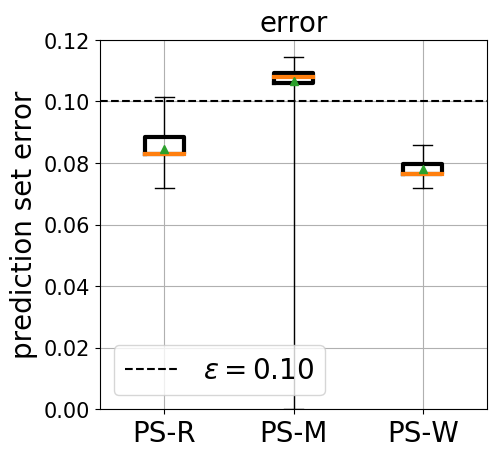}
\includegraphics[width=0.49\linewidth]{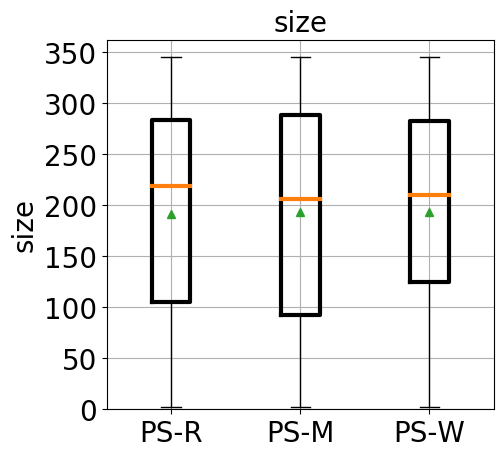}
\caption{DomainNet-Infograph}
\end{subfigure}
\begin{subfigure}[b]{0.49\linewidth}
\includegraphics[width=0.49\linewidth]{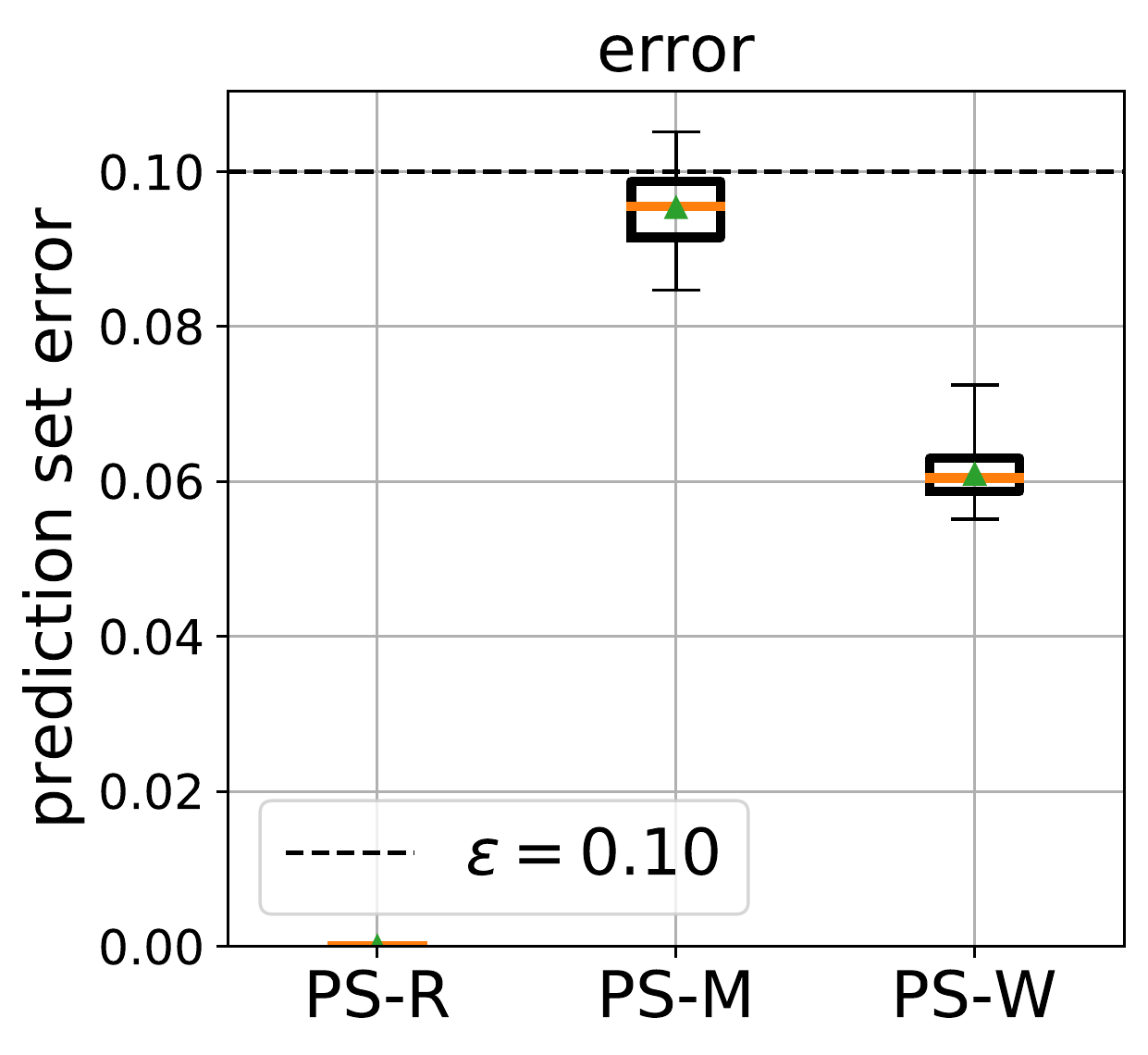}
\includegraphics[width=0.49\linewidth]{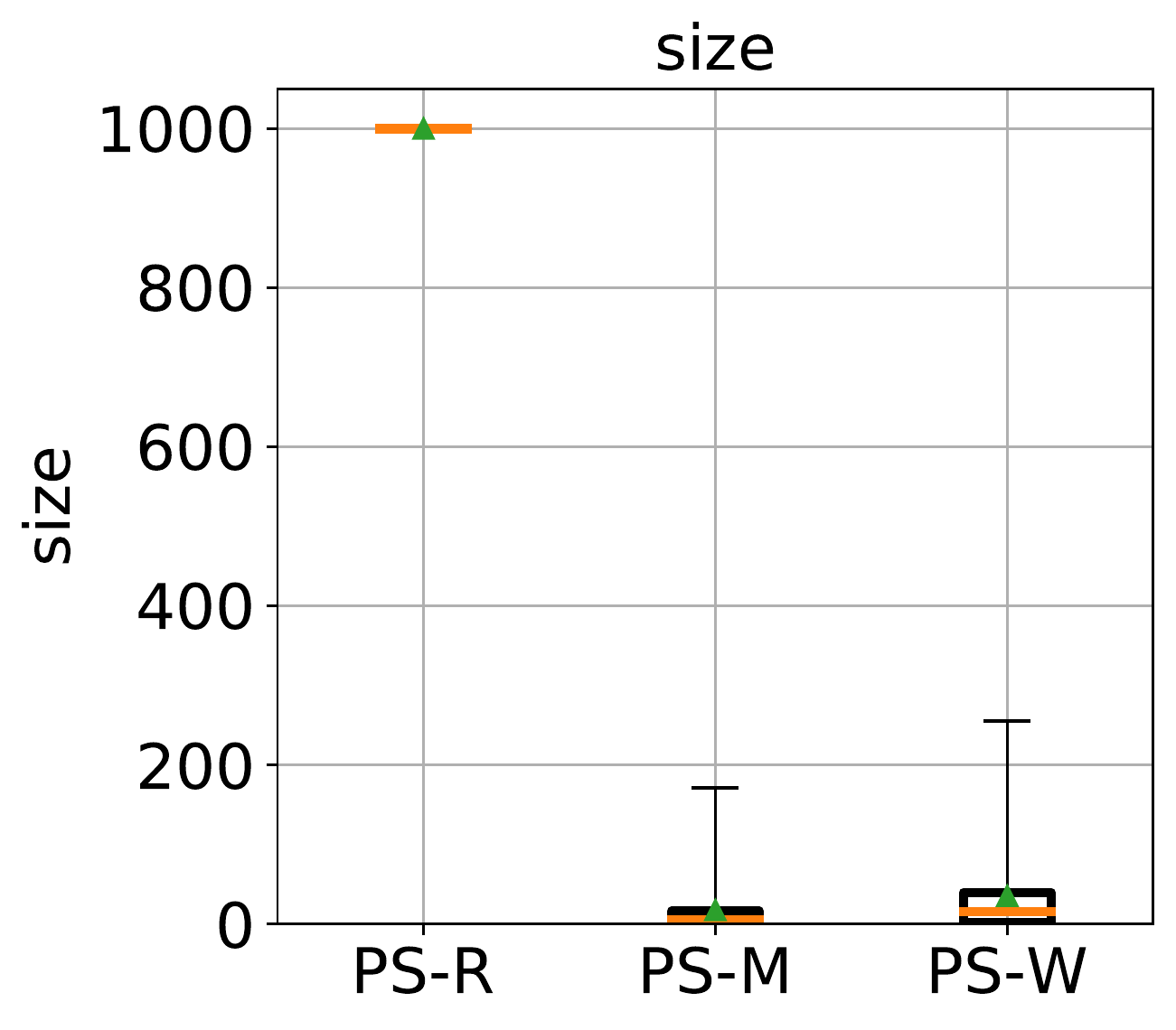}
\caption{ImageNet-C13}
\end{subfigure}
\begin{subfigure}[b]{0.49\linewidth}
\includegraphics[width=0.49\linewidth]{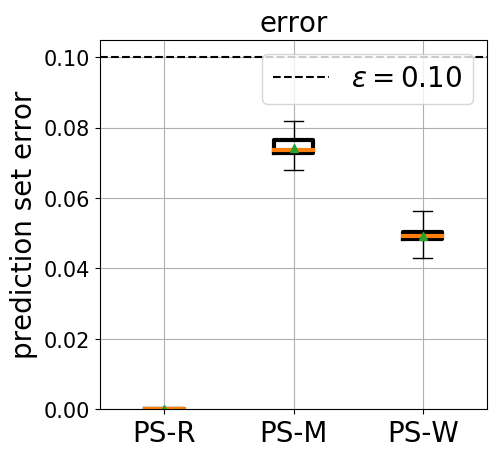}
\includegraphics[width=0.49\linewidth]{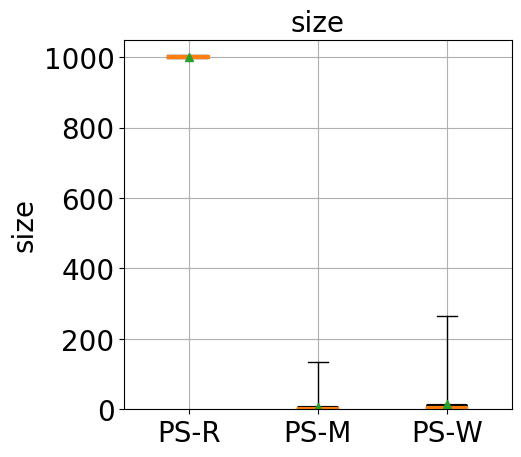}
\caption{ImageNet-PGD}
\end{subfigure}
\caption{Ablation study on calibration (over 100 random trial for error and over a held-out test set for size). PS-R does not use rescaled IWs, PS-M uses point estimates of rescaled IWs, and PS-W uses intervals around rescaled IWs. Parameters are $m=50,000$ for DomainNet shifts, $m=20,000$ for ImageNet shifts, $\ep = 0.1$, and $\delta = 10^{-5}$. In particular, we consider a variant PS-M of PS-W that ignores the worst-case IWs as in Theorem \ref{thm:e2e_ps_bound_gen}; instead, it rescales the importance weights in each bin via a point estimate, \ie Theorem \ref{thm:iw_bound} without the Clopper-Pearson interval (\ie $m=n=\infty$) and with $E = 0$. As can be seen in the results on the shift to various domains, our approach satisfies the PAC guarantee but this version does not---in fact, its error is even worse than PS-R, which uses the non-rescaled importance weights from the probabilistic classifier $s$.
} 
\label{fig:ablation}
\end{figure}

\clearpage

\subsection{Comparison with Various Top-$K$ Prediction Sets} \label{apdx:topkcomparison}

\begin{figure}[h]
\centering
\begin{subfigure}[b]{0.47\linewidth}
\includegraphics[width=0.49\linewidth]{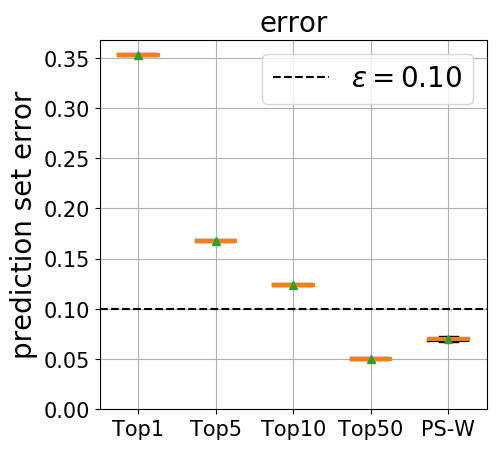}
\includegraphics[width=0.49\linewidth]{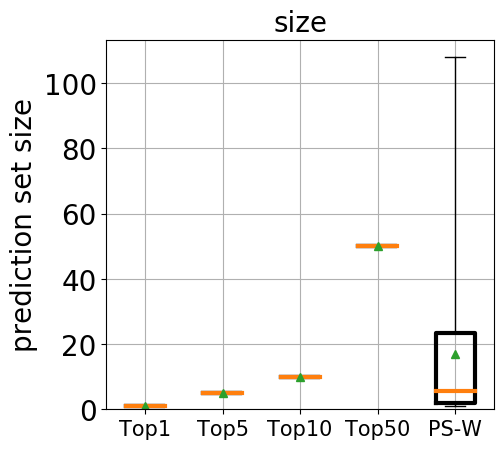}
\caption{All}
\end{subfigure}
\begin{subfigure}[b]{0.47\linewidth}
\includegraphics[width=0.49\linewidth]{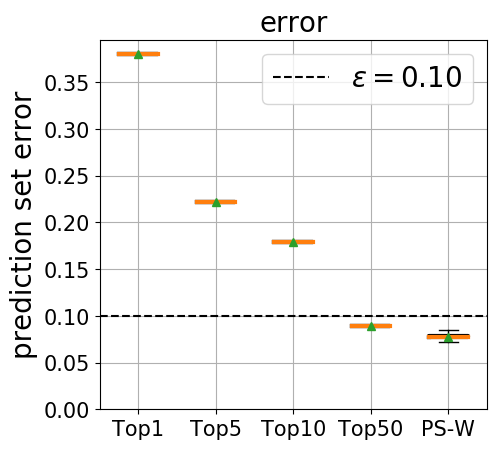}
\includegraphics[width=0.49\linewidth]{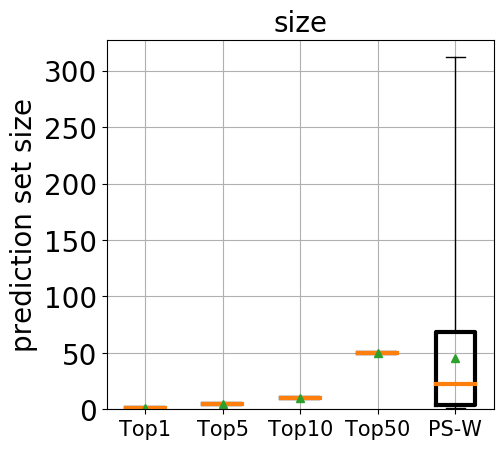}
\caption{Sketch}
\end{subfigure}
\begin{subfigure}[b]{0.47\linewidth}
\includegraphics[width=0.49\linewidth]{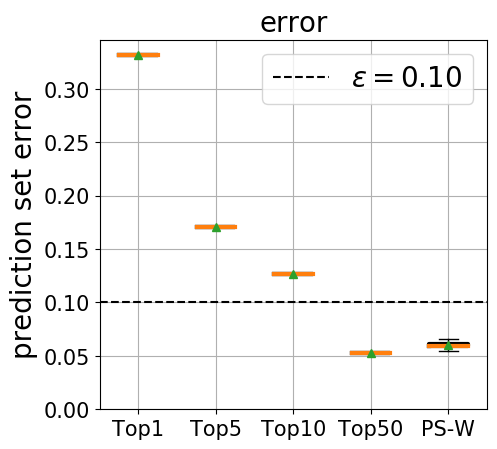}
\includegraphics[width=0.49\linewidth]{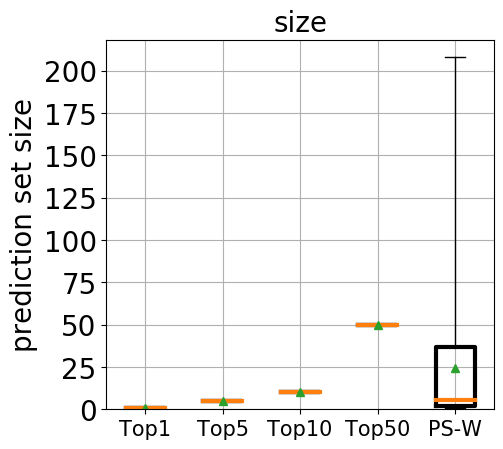}
\caption{Clipart}
\end{subfigure}
\begin{subfigure}[b]{0.47\linewidth}
\includegraphics[width=0.49\linewidth]{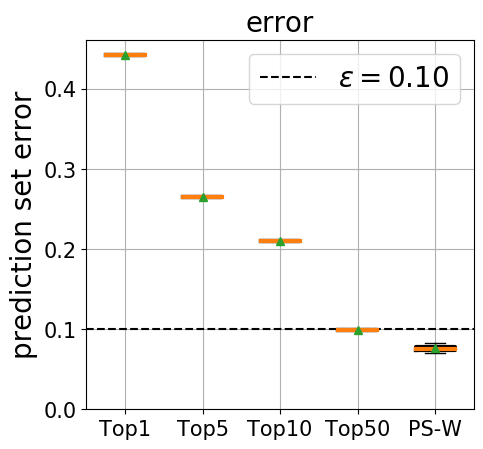}
\includegraphics[width=0.49\linewidth]{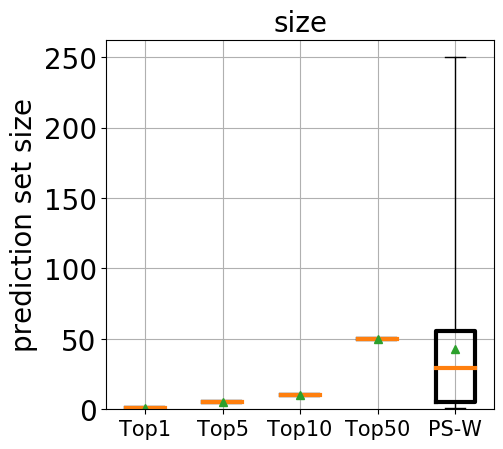}
\caption{Painting}
\label{fig:painting_topk}
\end{subfigure}
\begin{subfigure}[b]{0.47\linewidth}
\includegraphics[width=0.49\linewidth]{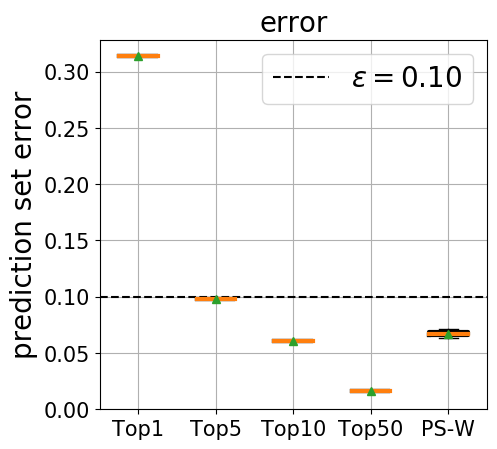}
\includegraphics[width=0.49\linewidth]{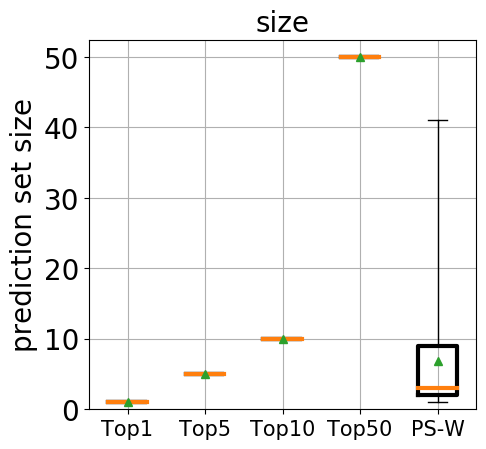}
\caption{Quickdraw}
\end{subfigure}
\begin{subfigure}[b]{0.47\linewidth}
\includegraphics[width=0.49\linewidth]{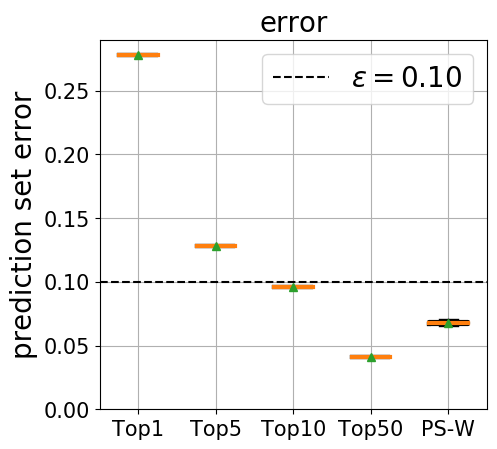}
\includegraphics[width=0.49\linewidth]{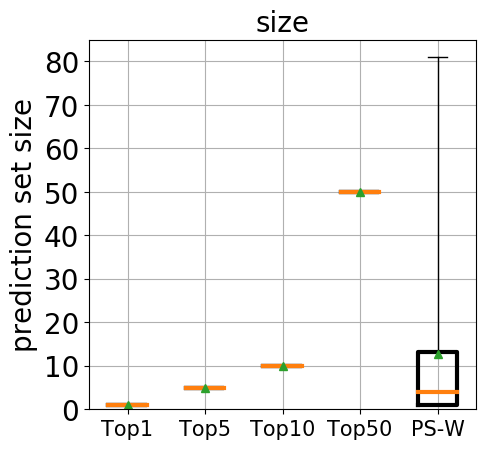}
\caption{Real}
\end{subfigure}
\begin{subfigure}[b]{0.47\linewidth}
\includegraphics[width=0.49\linewidth]{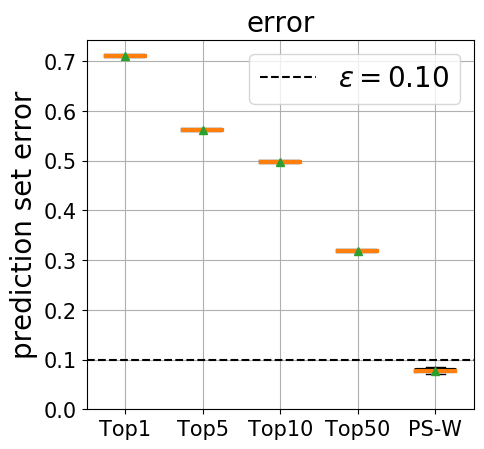}
\includegraphics[width=0.49\linewidth]{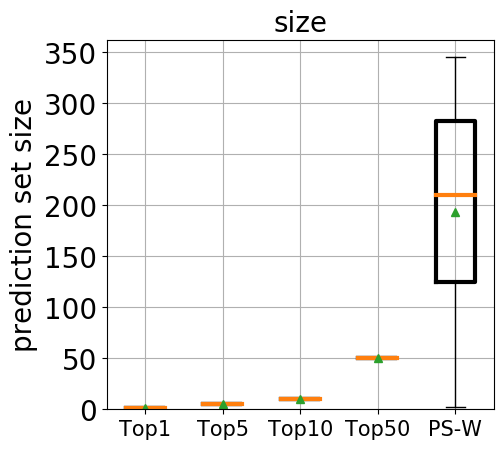}
\caption{Infograph}
\end{subfigure}
\begin{subfigure}[b]{0.47\linewidth}
\includegraphics[width=0.47\linewidth]{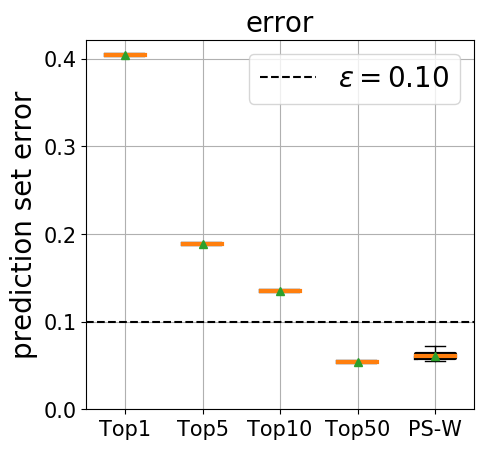}
\includegraphics[width=0.47\linewidth]{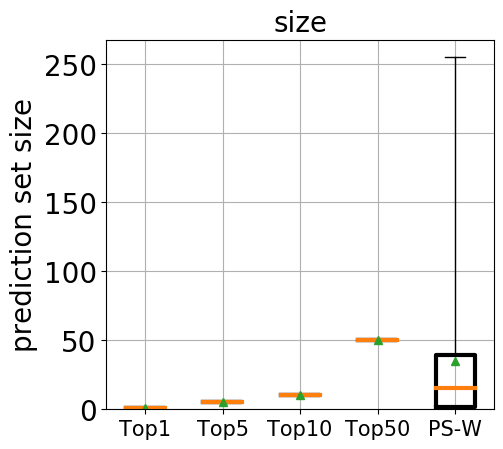}
\caption{ImageNet-C13}
\end{subfigure} 
\begin{subfigure}[b]{0.47\linewidth}
\includegraphics[width=0.48\linewidth]{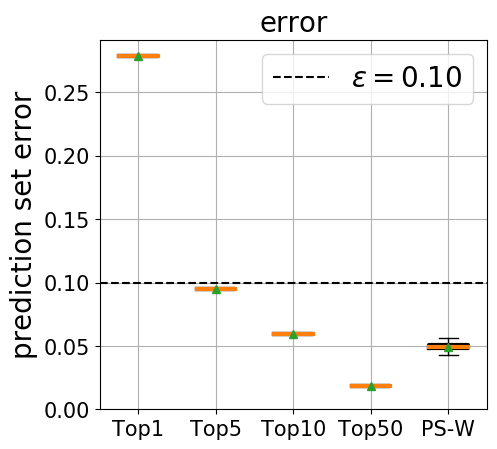}
\includegraphics[width=0.48\linewidth]{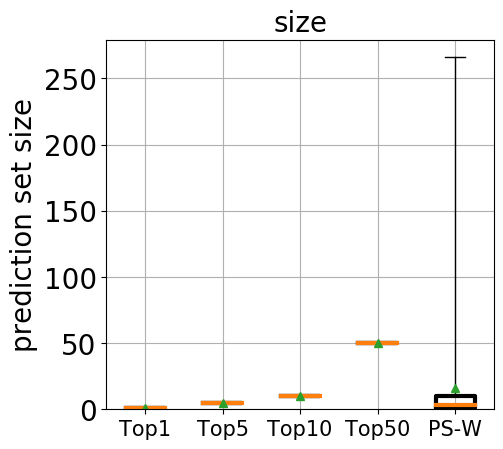}
\caption{ImageNet + PGD}
\end{subfigure} 
\caption{Prediction set error size size under rate shifts on DomainNet (a-g) and under support shifts on ImageNet (h, i) (over 100 random trial). Default parameters are $m=50,000$ for DomainNet and $m=20,000$ for ImageNet, $\ep = 0.1$, and $\delta = 10^{-5}$. A Top-$K$ prediction set is a prediction set that contains top $K$ labels based on a domain-adapted score function; thus the set size is always $K$. As can be seen in the prediction set error plots, Top-$K$ prediction sets do not consistently satisfy the desired $\epsilon$ guarantee. Moreover, the prediction set size is worse than PS-W. For example, when the Top-$50$ prediction set error rate almost achieves the desired error rate, \eg (\ref{fig:painting_topk}), the mean and medial of the corresponding prediction set size is larger than PS-W. }
\end{figure}

\clearpage
\subsection{Varying a Smoothness Parameter} \label{apdx:varyingE}

\begin{figure}[ht!]
\centering
\begin{subfigure}[t]{0.48\linewidth}
\includegraphics[width=0.49\linewidth]{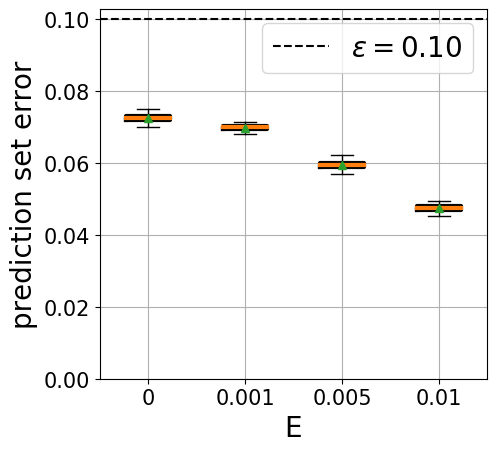}
\includegraphics[width=0.49\linewidth]{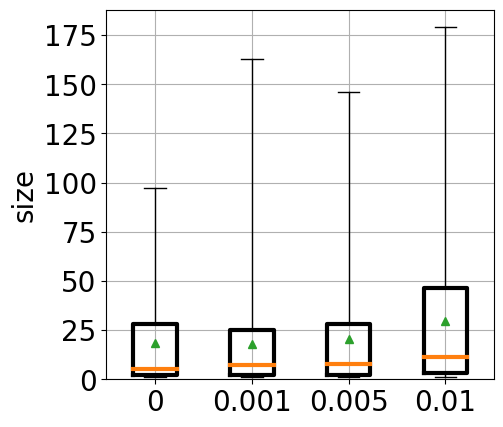}
\caption{All}
\end{subfigure}
\begin{subfigure}[t]{0.48\linewidth}
\includegraphics[width=0.49\linewidth]{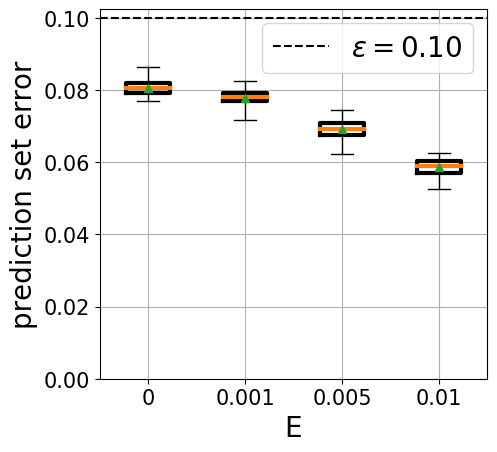}
\includegraphics[width=0.49\linewidth]{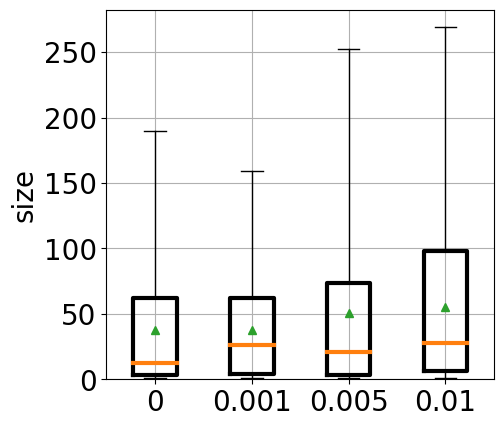}
\caption{Sketch}
\end{subfigure}
\begin{subfigure}[t]{0.48\linewidth}
\includegraphics[width=0.49\linewidth]{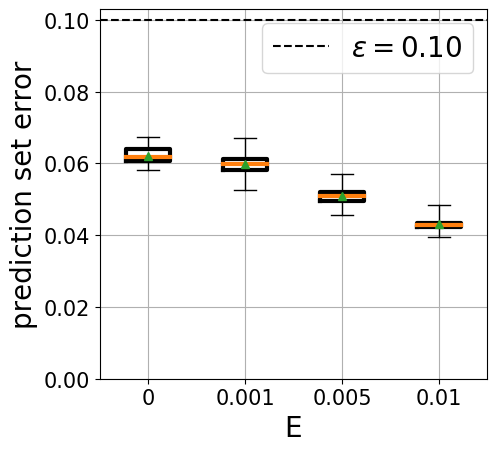}
\includegraphics[width=0.49\linewidth]{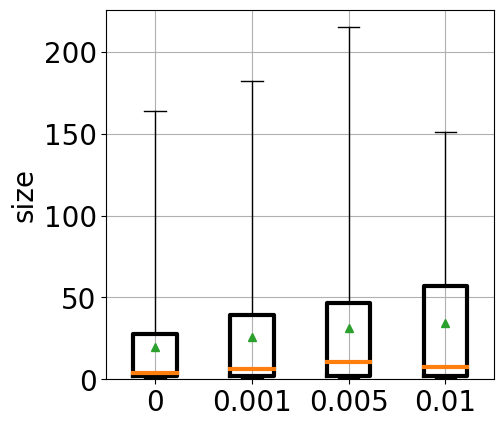}
\caption{Clipart}
\end{subfigure}
\begin{subfigure}[t]{0.48\linewidth}
\includegraphics[width=0.49\linewidth]{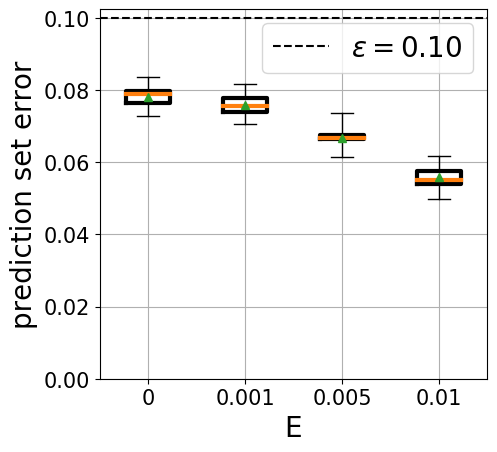}
\includegraphics[width=0.49\linewidth]{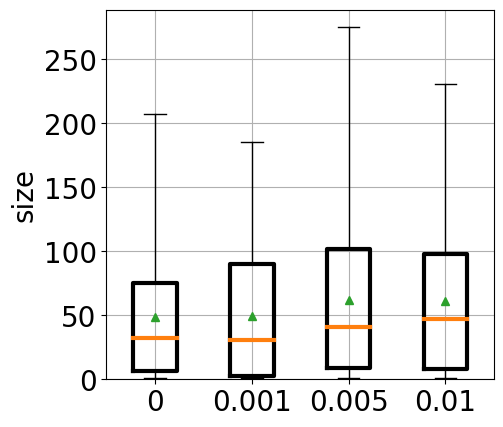}
\caption{Painting}
\end{subfigure}
\begin{subfigure}[t]{0.48\linewidth}
\includegraphics[width=0.49\linewidth]{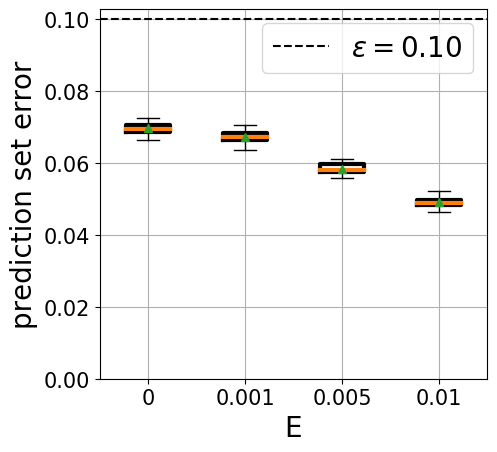}
\includegraphics[width=0.49\linewidth]{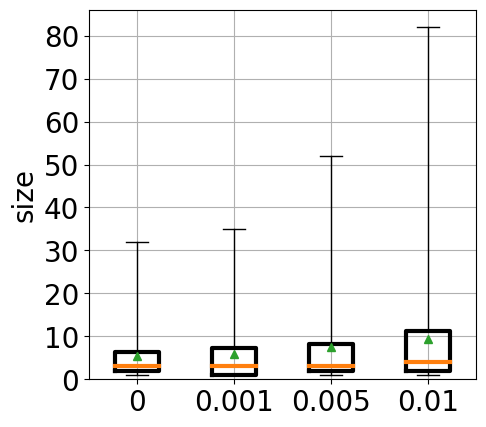}
\caption{Quickdraw}
\end{subfigure}
\begin{subfigure}[t]{0.48\linewidth}
\includegraphics[width=0.49\linewidth]{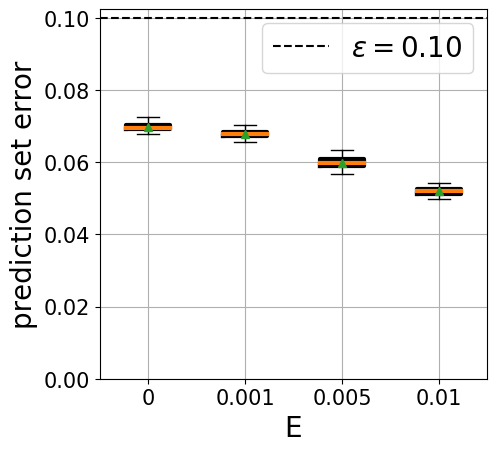}
\includegraphics[width=0.49\linewidth]{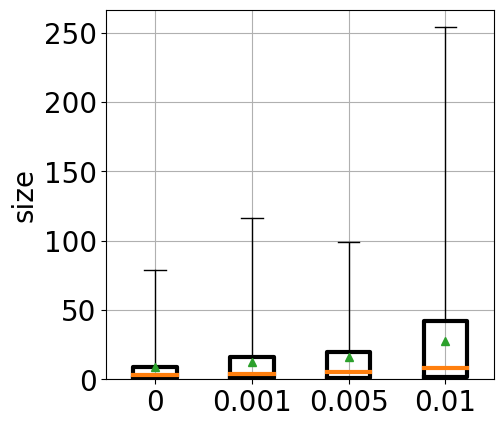}
\caption{Real}
\end{subfigure}
\begin{subfigure}[t]{0.48\linewidth}
\includegraphics[width=0.49\linewidth]{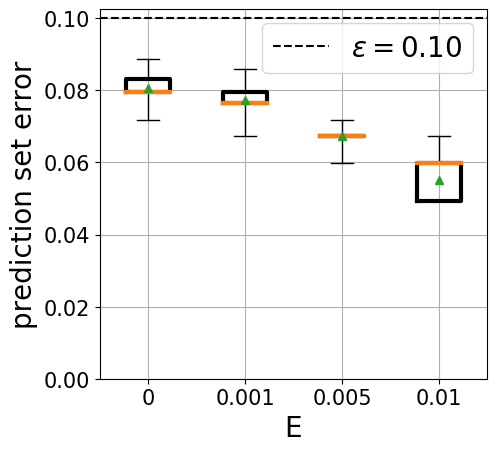}
\includegraphics[width=0.49\linewidth]{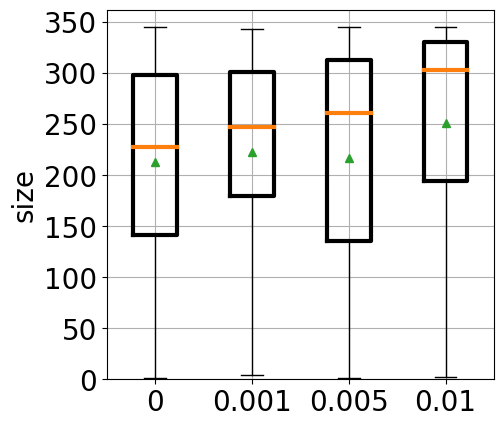}
\caption{Infograph}
\end{subfigure}
\begin{subfigure}[t]{0.48\linewidth}
\includegraphics[width=0.48\linewidth]{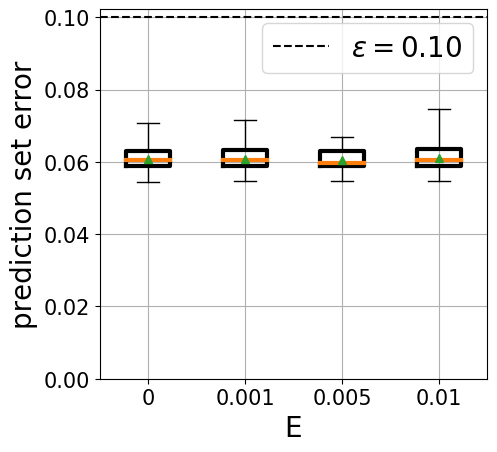}
\includegraphics[width=0.51\linewidth]{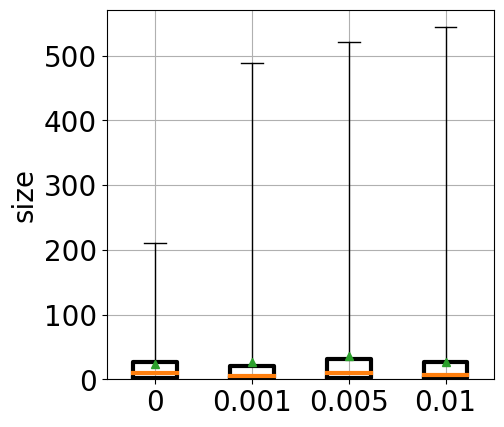}
\caption{ImageNet-C13}
\end{subfigure} 
\begin{subfigure}[t]{0.48\linewidth}
\includegraphics[width=0.48\linewidth]{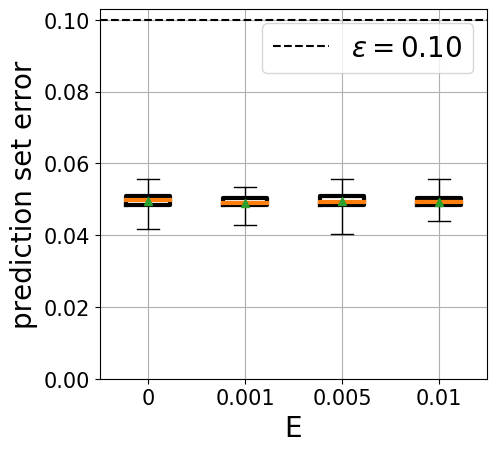}
\includegraphics[width=0.51\linewidth]{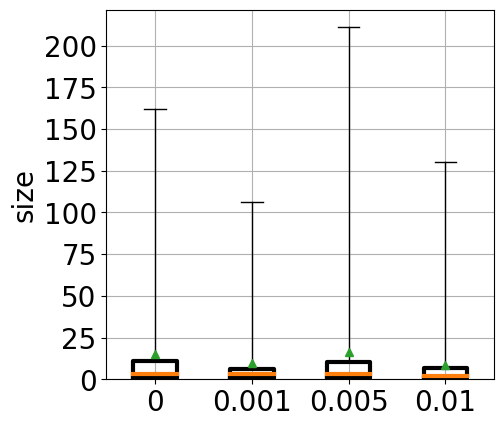}
\caption{ImageNet + PGD}
\end{subfigure} 
\caption{Prediction set error size size under rate shifts on DomainNet (a-g) and under support shifts on ImageNet (h, i) (over 100 random trial) for varying $E$. Parameters are $m=50,000$ for DomainNet and $m=20,000$ for ImageNet, $\ep = 0.1$, and $\delta = 10^{-5}$. In particular, the parameter $E$ in Assumption \ref{assump:bins} bounds the quality of our estimates of $p(x)$ and $q(x)$; since these errors cannot be conveniently measured, we have chosen it heuristically as a hyperparameter. In this figure, we show the error of PS-W as a function of $E$. As $E$ becomes smaller, the prediction sets become more optimistic while still satisfying the PAC guarantee. Note that the optimal case is $E=0$, since PS-W still satisfies the PAC guarantee; this result suggests that our IW estimates are reasonably accurate.}
\end{figure}

\subsection{Varying a Number of Bins} \label{apdx:varyingK}

\begin{figure}[ht!]
\centering
\begin{subfigure}[t]{0.48\linewidth}
\includegraphics[width=0.49\linewidth]{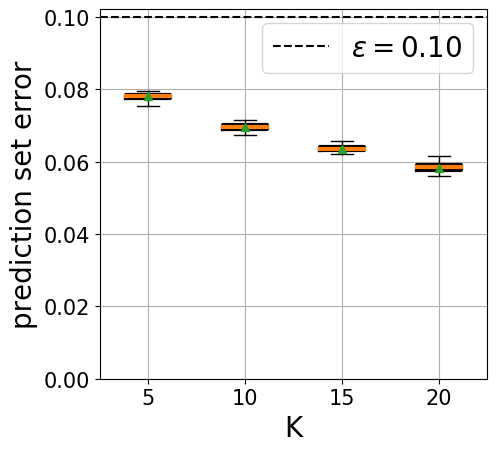}
\includegraphics[width=0.49\linewidth]{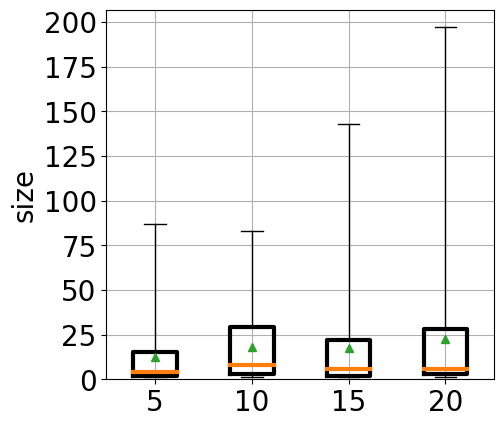}
\caption{All}
\end{subfigure}
\begin{subfigure}[t]{0.48\linewidth}
\includegraphics[width=0.49\linewidth]{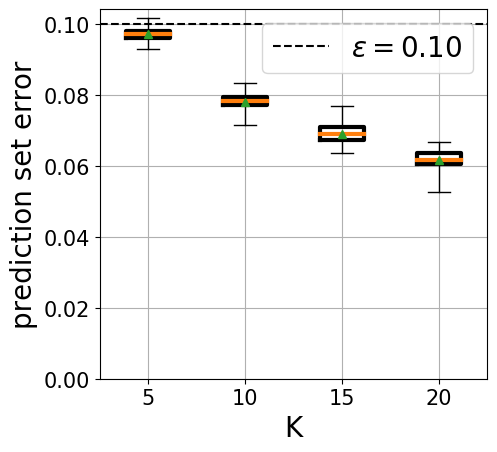}
\includegraphics[width=0.49\linewidth]{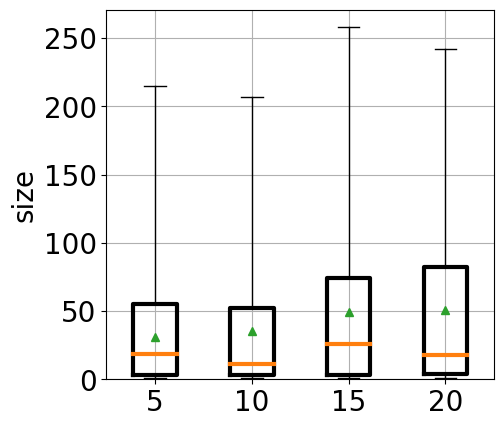}
\caption{Sketch}
\end{subfigure}
\begin{subfigure}[t]{0.48\linewidth}
\includegraphics[width=0.49\linewidth]{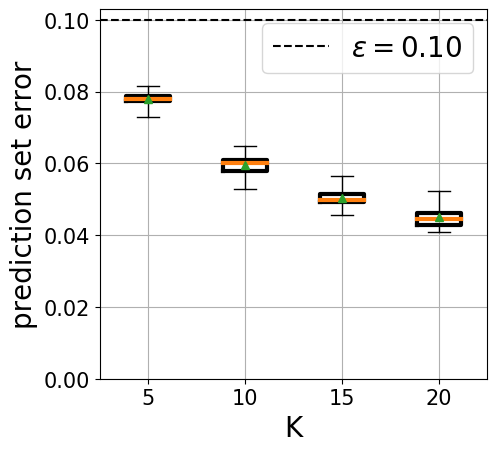}
\includegraphics[width=0.49\linewidth]{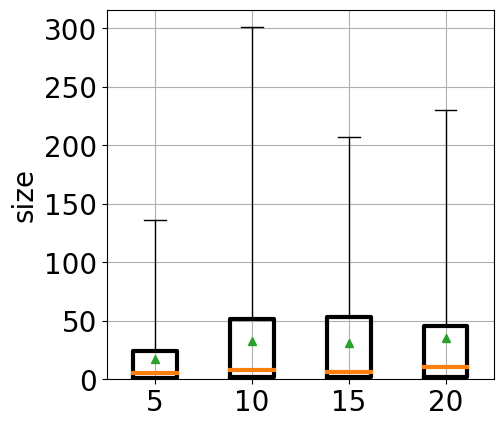}
\caption{Clipart}
\end{subfigure}
\begin{subfigure}[t]{0.48\linewidth}
\includegraphics[width=0.49\linewidth]{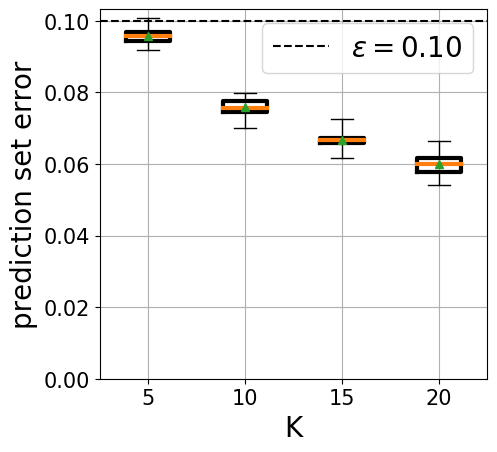}
\includegraphics[width=0.49\linewidth]{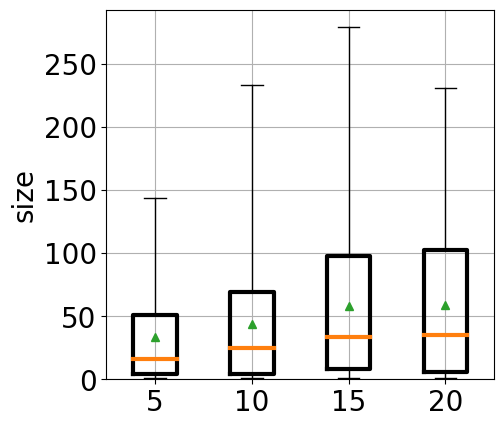}
\caption{Painting}
\end{subfigure}
\begin{subfigure}[t]{0.48\linewidth}
\includegraphics[width=0.49\linewidth]{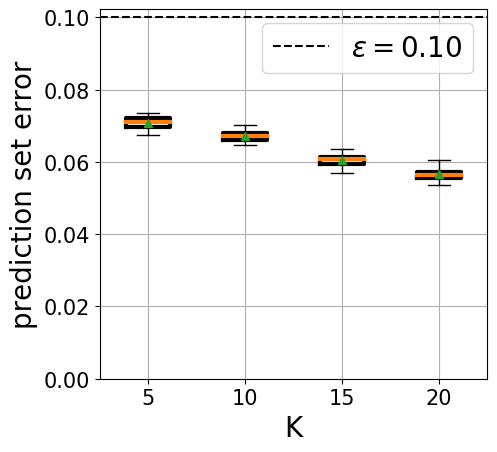}
\includegraphics[width=0.49\linewidth]{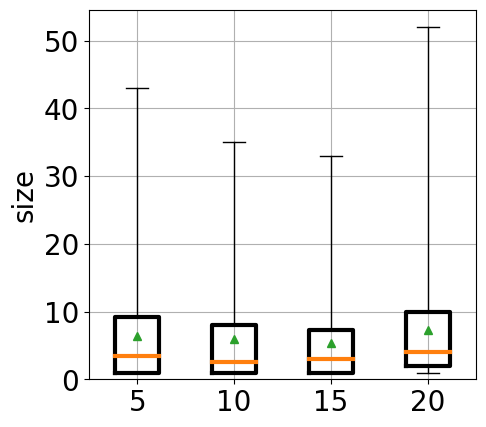}
\caption{Quickdraw}
\end{subfigure}
\begin{subfigure}[t]{0.48\linewidth}
\includegraphics[width=0.49\linewidth]{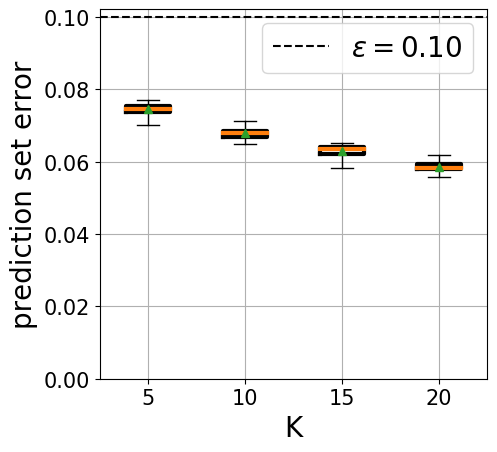}
\includegraphics[width=0.49\linewidth]{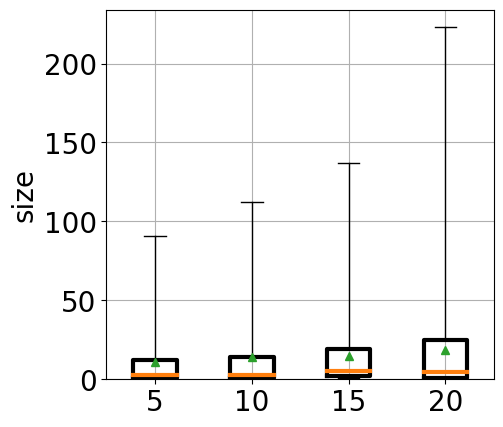}
\caption{Real}
\end{subfigure}
\begin{subfigure}[t]{0.48\linewidth}
\includegraphics[width=0.49\linewidth]{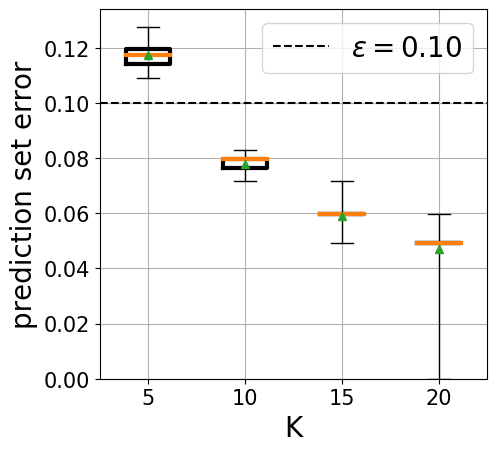}
\includegraphics[width=0.49\linewidth]{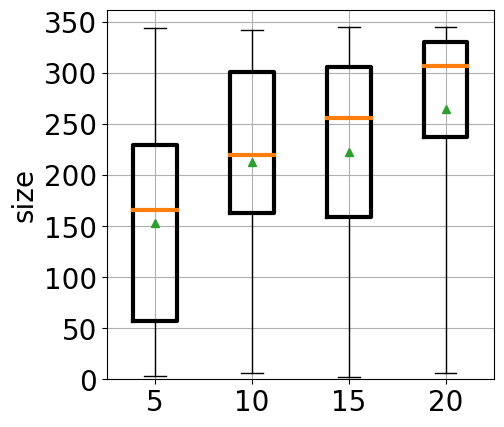}
\caption{Infograph}
\end{subfigure}
\begin{subfigure}[t]{0.48\linewidth}
\includegraphics[width=0.48\linewidth]{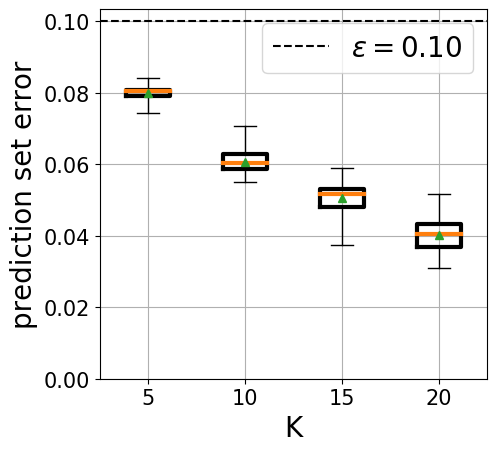}
\includegraphics[width=0.51\linewidth]{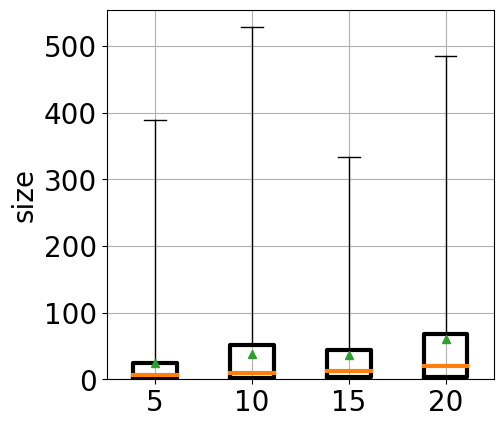}
\caption{ImageNet-C13}
\end{subfigure} 
\begin{subfigure}[t]{0.48\linewidth}
\includegraphics[width=0.48\linewidth]{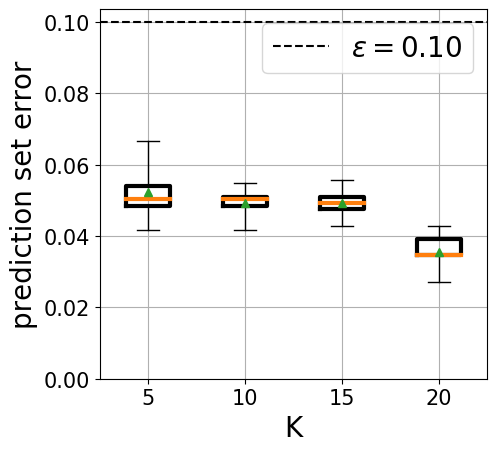}
\includegraphics[width=0.51\linewidth]{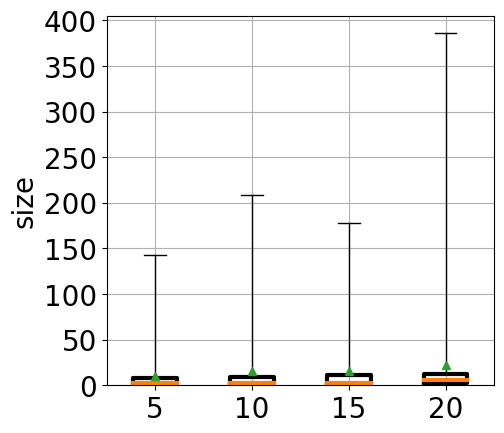}
\caption{ImageNet + PGD}
\end{subfigure} 
\caption{Prediction set error size size under rate shifts on DomainNet (a-g) and under support shifts on ImageNet (h, i) (over 100 random trial) for varying $K$. Parameters are $m=50,000$ for DomainNet and $m=20,000$ for ImageNet, $\ep = 0.1$, and $\delta = 10^{-5}$. In general, $K$ must be chosen to be small enough so each bin contains sufficiently many source examples to achieve a small Clopper-Perason interval size (\eg $10^{-3}$), though it also needs to be sufficiently large to satisfy the smoothness assumption.}
\end{figure}

\clearpage
\subsection{Prediction Set Visualization}
\begin{figure}[ht]
\centering
\footnotesize
\newcolumntype{M}[1]{>{\centering\arraybackslash}m{#1}}
\renewcommand{\arraystretch}{0.0}
\begin{tabular}{M{0.13\textwidth}|M{0.13\textwidth}|M{{0.13\textwidth}} || M{0.15\textwidth}|M{0.135\textwidth}|M{{0.13\textwidth}}}
\toprule
Example $x$ & \makecell{$\Ch_{\text{PS}}(x)$} & \makecell{$\Ch_{\text{PS-W}}(x)$} &
Example $x$ & \makecell{$\Ch_{\text{PS}}(x)$} & \makecell{$\Ch_{\text{PS-W}}(x)$}
\\
\midrule
\makecell{\includegraphics[width=\linewidth]{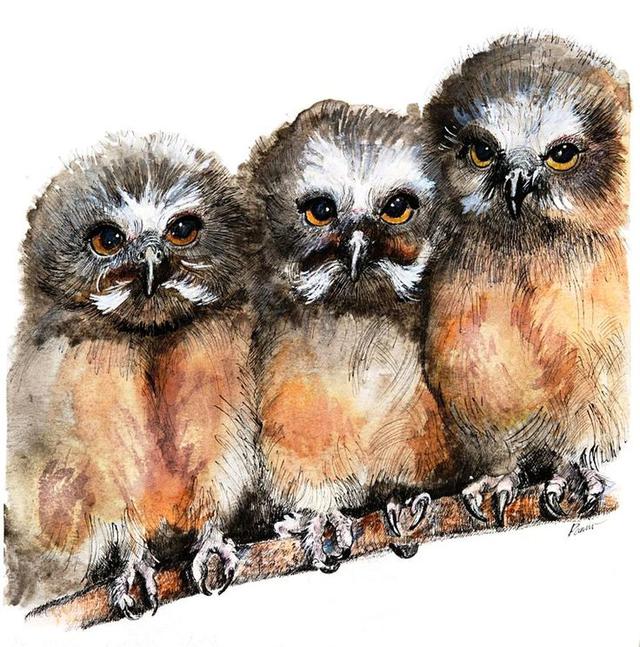}} &
\makecell{$\left\{ \makecell{\widehat{\text{raccoon}}} \right\}$} &
\makecell{$\left\{ \makecell{{\color{ForestGreen}\text{owl}},\\\widehat{\text{raccoon}}} \right\}$} &

\makecell{\includegraphics[width=0.8\linewidth]{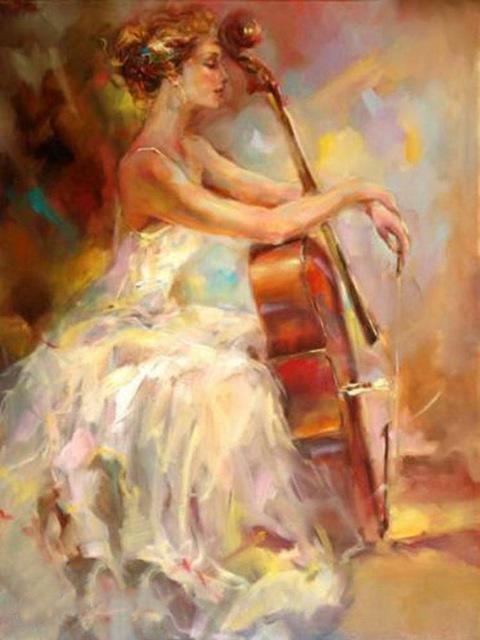}} &
\makecell{$\left\{ \makecell{\text{angel},\\\widehat{\text{harp}}} \right\}$} &
\makecell{$\left\{ \makecell{\text{angel},\\{\color{ForestGreen}\text{cello}},\\\widehat{\text{harp}},\\\text{\tiny microphone},\\\text{piano},\\\text{violin}} \right\}$} \\
\midrule

\makecell{\includegraphics[width=\linewidth]{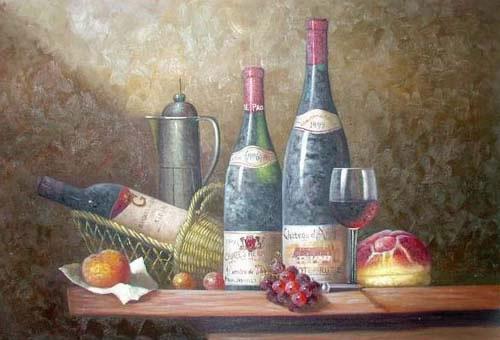}} &
\makecell{$\left\{ \makecell{\widehat{\text{\tiny wine bottle}}} \right\}$} &
\makecell{$\left\{ \makecell{{\color{ForestGreen}\text{bread}},\\\text{grapes},\\ \widehat{\text{\tiny wine bottle}},\\\text{wine glass}} \right\}$} &

\makecell{\includegraphics[width=\linewidth]{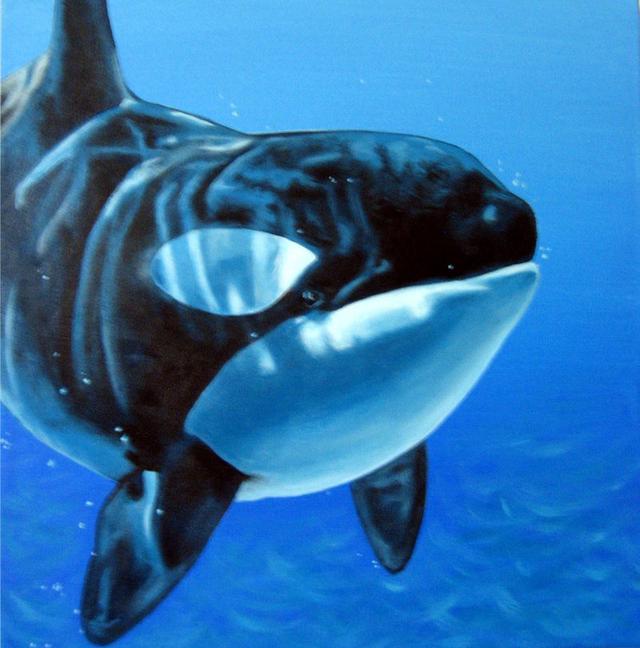}} &
\makecell{$\left\{ \makecell{\widehat{\text{shark}}, \\ \text{snorkel}} \right\}$} &
\makecell{$\left\{ \makecell{\text{dolphin},\\\widehat{\text{shark}},\\\text{snorkel},\\\text{submarine},\\{\color{ForestGreen}\text{whale}}} \right\}$} \\
\midrule

\makecell{\includegraphics[width=\linewidth]{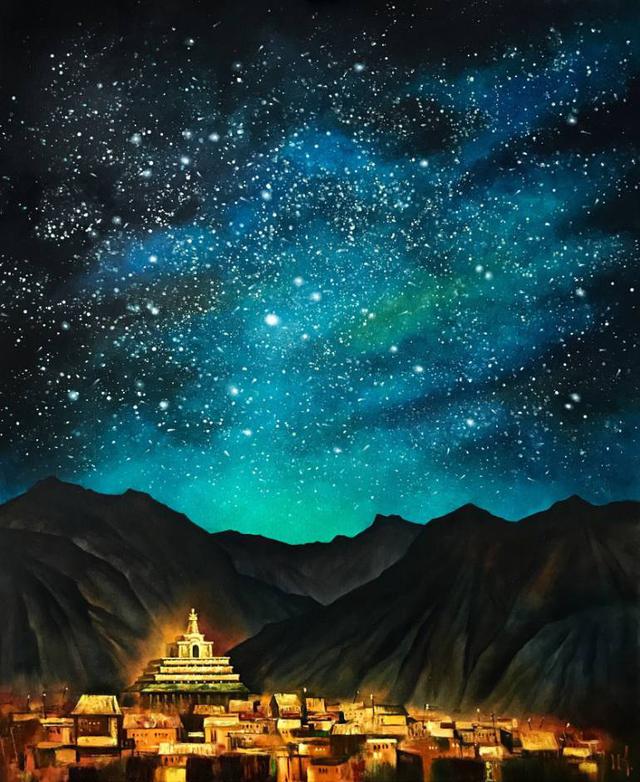}} &
\makecell{$\left\{ \makecell{\widehat{\text{campfire}}} \right\}$} &
\makecell{$\left\{ \makecell{\widehat{\text{campfire}},\\\text{ocean},\\{\color{ForestGreen}\text{star}},\\\text{tent}} \right\}$} &

\makecell{\includegraphics[width=0.9\linewidth]{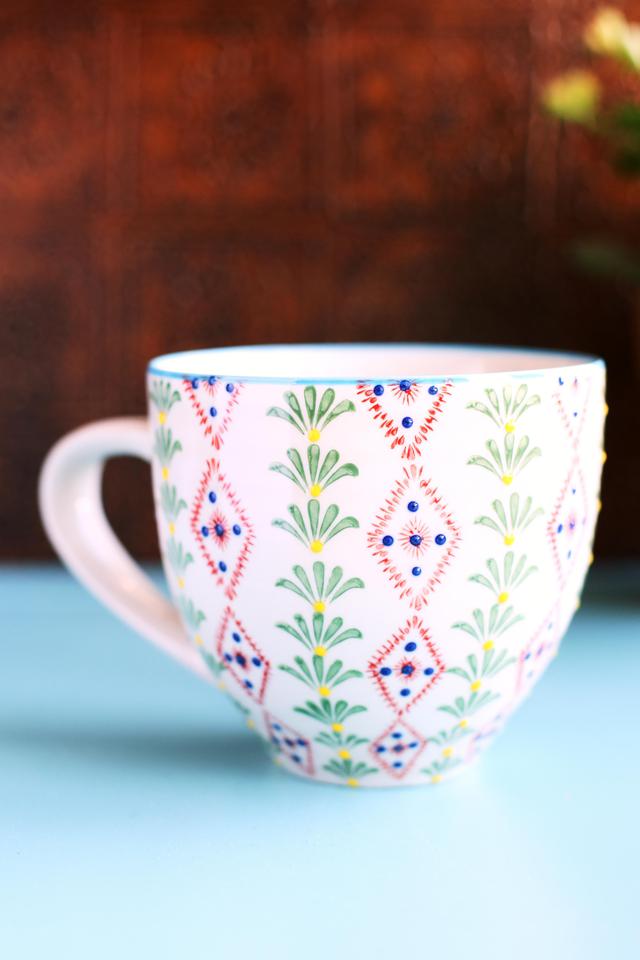}} &
\makecell{$\left\{ \makecell{\text{coffee cup},\\\widehat{\text{cup}}} \right\}$} &
\makecell{$\left\{ \makecell{\text{coffee cup},\\\widehat{\text{cup}},\\{\color{ForestGreen}\text{mug}},\\\text{teapot}} \right\}$} \\
\midrule

\makecell{\includegraphics[width=\linewidth]{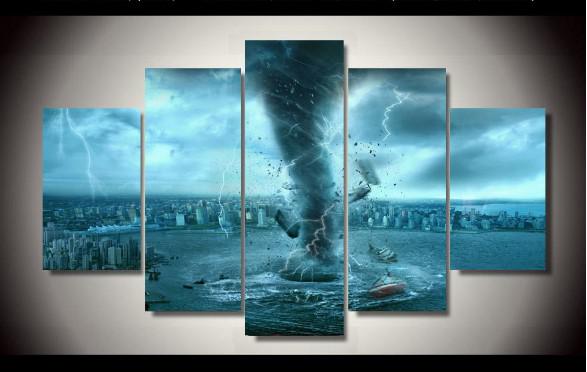}} &
\makecell{$\left\{ \makecell{\widehat{\text{ocean}}} \right\}$} &
\makecell{$\left\{ \makecell{\text{hurricane},\\\widehat{\text{ocean}},\\\text{square},\\{\color{ForestGreen}\text{tornado}}} \right\}$} &

\makecell{\includegraphics[width=\linewidth]{./figs/ps_ex/tar_DomainNetPainting/data/domainnet/painting/sea_turtle_add/painting_256_000157.jpg/iid_y_sea_turtle_yh_brain_ps_brain}} &
\makecell{$\left\{ \makecell{\widehat{\text{brain}}} \right\}$} &
\makecell{$\left\{ \makecell{ \widehat{\text{brain}},\\ \text{fish},\\\text{lion},\\ \text{lollipop}, \\ {\color{ForestGreen}\text{sea turtle}}} \right\}$} \\
\midrule

\makecell{\includegraphics[width=\linewidth]{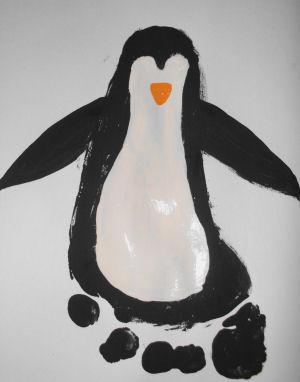}} &
\makecell{$\left\{ \makecell{\widehat{\text{penguin}}} \right\}$} &
\makecell{$\left\{ \makecell{\text{\tiny fire hydrant},\\{\color{ForestGreen}\text{foot}},\\\widehat{\text{penguin}},\\\text{telephone}} \right\}$} &

\makecell{\includegraphics[width=\linewidth]{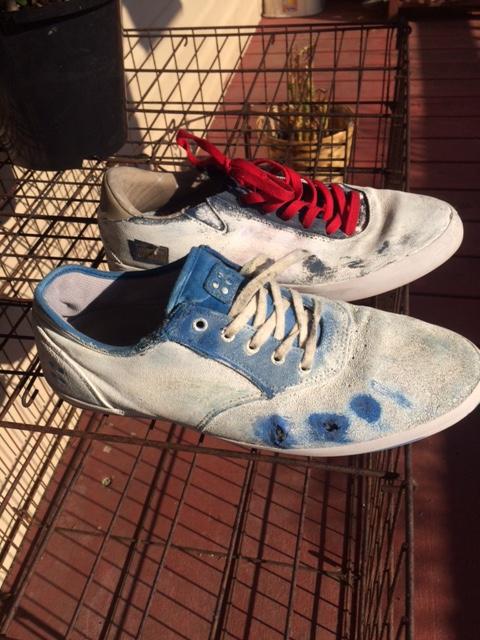}} &
\makecell{$\left\{ \makecell{\text{hat}, \\\widehat{\text{hot tub}}} \right\}$} &
\makecell{\tiny $\left\{ \makecell{\text{bed},\\\text{belt},\\\text{\tiny birthday cake},\\\text{guitar}, \\ \text{hat}, \\ \widehat{\text{hot tub}}, \\ \text{tiny paint can}, \\ \text{pillow}, \\ {\color{ForestGreen}\text{shoe}}, \\ \text{table} } \right\}$} \\
\midrule 



\makecell{\includegraphics[width=\linewidth]{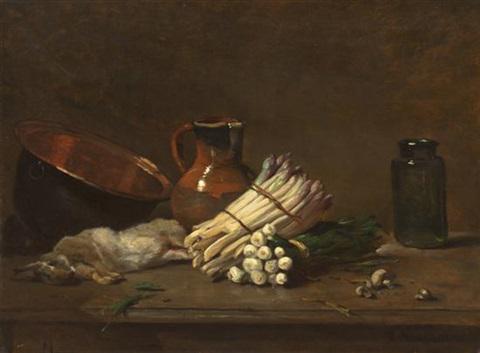}} &
\makecell{$\left\{ \makecell{\widehat{{\color{ForestGreen}\text{asparagus}}} } \right\}$} &
\makecell{\tiny $\left\{ \makecell{\widehat{{\color{ForestGreen}\text{asparagus}}},\\\text{basket},\\\text{bread},\\\text{carrot},\\ \text{harp},\\\text{lobster},\\\text{toothbrush} } \right\}$} &

\makecell{\includegraphics[width=0.9\linewidth]{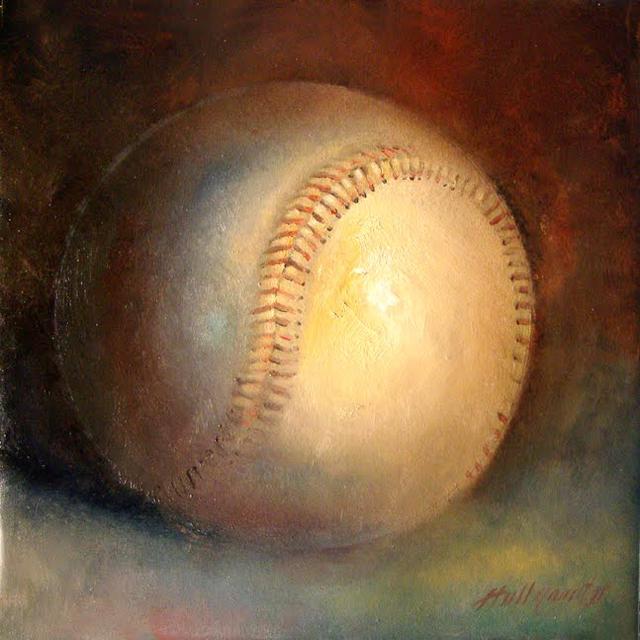}} &
\makecell{$\left\{ \makecell{\widehat{{\color{ForestGreen}\text{baseball}}}, \\ \text{onion}} \right\}$} &
\makecell{\tiny $\left\{ \makecell{\widehat{{\color{ForestGreen}\text{baseball}}},\\\text{baseball bat},\\\text{bread},\\\text{light bulb},\\ \text{onion},\\\text{potato} } \right\}$} \\

\bottomrule
\end{tabular}
\caption{Prediction sets of the DomainNet shift from All to Paint. Parameters are $m=50,000$, $\ep = 0.1$, and $\delta = 10^{-5}$. The green label is the true label and the label with the hat is the predicted label. We choose examples where the two approaches differ; in particular, if PS-W is incorrect, then PS is incorrect as well since the prediction set sizes are monotone in $\tau$. 
}
\end{figure}

\begin{figure}[ht]
\centering
\footnotesize
\newcolumntype{M}[1]{>{\centering\arraybackslash}m{#1}}
\renewcommand{\arraystretch}{0.0}
\begin{tabular}{M{0.13\textwidth}|M{0.13\textwidth}|M{{0.13\textwidth}} || M{0.15\textwidth}|M{0.135\textwidth}|M{{0.13\textwidth}}}
\toprule
Example $x$ & \makecell{$\Ch_{\text{PS}}(x)$} & \makecell{$\Ch_{\text{PS-W}}(x)$} &
Example $x$ & \makecell{$\Ch_{\text{PS}}(x)$} & \makecell{$\Ch_{\text{PS-W}}(x)$}
\\
\midrule
\makecell{\includegraphics[width=\linewidth]{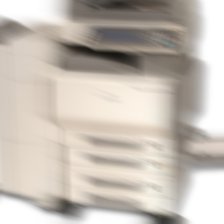}} &
\makecell{ $\left\{ \makecell{ \widehat{\text{photocopier}} } \right\}$} &
\makecell{ $\left\{ \makecell{ {\color{ForestGreen}\text{printer}}, \\\widehat{\text{photocopier}} } \right\}$} 
&
\makecell{\includegraphics[width=\linewidth]{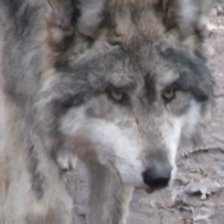}} &
\makecell{ $\left\{ \makecell{ \widehat{\text{\tiny timber wolf}}, \\ \text{red wolf}, \\ \text{dingo} } \right\}$} &
\makecell{\tiny $\left\{ \makecell{ \widehat{\text{timber wolf}}, \\ \text{white wolf}, \\ \text{red wolf}, \\ {\color{ForestGreen}\text{coyote}}, \\ \text{dingo} } \right\}$}
\\
\midrule

\makecell{\includegraphics[width=\linewidth]{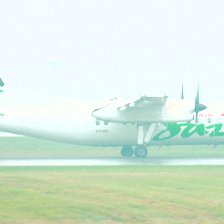}} &
\makecell{ $\left\{ \makecell{\text{airliner}, \\ \text{airship}, \\ \widehat{\text{warplane}}} \right\}$} &
\makecell{\tiny $\left\{ \makecell{ \text{aircraft carrier}, \\ \text{airliner}, \\ \text{airship}, \\ \text{container ship}, \\ \text{stopwatch}, \\ \text{parachute}, \\ \text{tank}, \\ \widehat{\text{warplane}}, \\ {\color{ForestGreen}\text{wing}} } \right\}$} 
&

\makecell{\includegraphics[width=\linewidth]{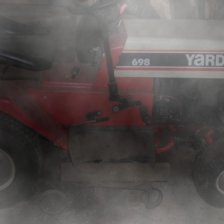}} &
\makecell{\tiny $\left\{ \makecell{\text{forklift}, \\ \text{golfcart}, \\ \text{harvester}, \\ \widehat{\text{\tiny lawn mower}}, \\ \text{snowplow}, \\ \text{tracktor} } \right\}$} &
\makecell{\tiny  $\left\{ \makecell{\text{forklift}, \\ \text{gokart}, \\ \text{golfcart}, \\ \text{harvester}, \\ \widehat{\text{\tiny lawn mower}}, \\ \text{pickup}, \\ \text{snowplow}, \\ {\color{ForestGreen}\text{thresher}}, \\ \text{tracktor} } \right\}$} 
\\
\midrule

\makecell{\includegraphics[width=\linewidth]{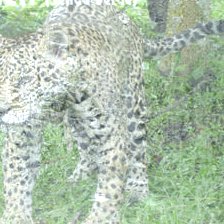}} &
\makecell{ $\left\{ \makecell{ \widehat{\text{leopard}}, \\ \text{\tiny snow leopard}, \\ \text{cheetah} } \right\}$} &
\makecell{ $\left\{ \makecell{ \widehat{\text{leopard}}, \\ \text{\tiny snow leopard}, \\ {\color{ForestGreen}\text{jaguar}}, \\\text{cheetah} } \right\}$}  &

\makecell{\includegraphics[width=\linewidth]{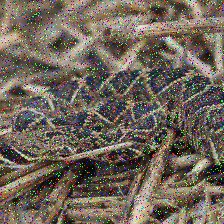}} &
\makecell{ $\left\{ \makecell{ \widehat{\text{\tiny diamondback}}, \\ \text{sidewinder} } \right\}$} &
\makecell{\tiny $\left\{ \makecell{ {\color{ForestGreen}\text{hognose snake}}, \\\widehat{\text{diamondback}}, \\ \text{sidewinder} } \right\}$} 
\\
\midrule

\makecell{\includegraphics[width=\linewidth]{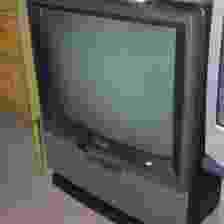}} &
\makecell{ $\left\{ \makecell{ \widehat{\text{television}} } \right\}$} &
\makecell{ $\left\{ \makecell{ \text{ent. center}, \\ \text{monitor}, \\ {\color{ForestGreen}\text{screen}}, \\\widehat{\text{television}} } \right\}$}  &

\makecell{\includegraphics[width=\linewidth]{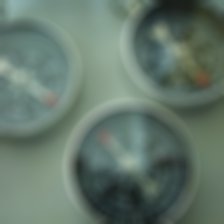}} &
\makecell{\tiny $\left\{ \makecell{\text{analog clock}, \\ \text{barometer}, \\ \text{odometer}, \\ \text{stopwatch}, \\ \widehat{\text{wall clock}}} \right\}$} &
\makecell{\tiny $\left\{ \makecell{ \text{analog clock}, \\ \text{barometer}, \\ \text{digital watch}, \\ {\color{ForestGreen}\text{mag. compass}}, \\ \text{odometer}, \\ \text{stopwatch}, \\ \widehat{\text{wall clock}} } \right\}$} 
\\
\midrule

\makecell{\includegraphics[width=\linewidth]{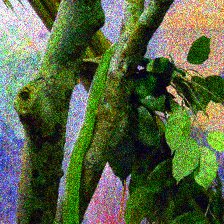}} &
\makecell{ $\left\{ \makecell{ \text{chameleon}, \\ \widehat{\text{green lizard}} } \right\}$} &
\makecell{\tiny  $\left\{ \makecell{ \text{chameleon}, \\ \widehat{\text{ green lizard}}, \\ {\color{ForestGreen}\text{\tiny green snake}}, \\ \text{waling stick}, \\ \text{mantis} } \right\}$}  &

\makecell{\includegraphics[width=\linewidth]{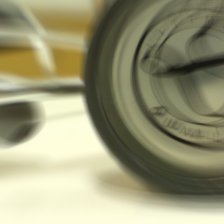}} &
\makecell{ $\left\{ \makecell{ \widehat{\text{barbell}}, \\ \text{dumbbell}, \\ \text{lens cap}, \\ \text{puck} } \right\}$} &
\makecell{\tiny $\left\{ \makecell{ \widehat{\text{barbell}}, \\ \text{barometer}, \\ \text{car wheel},  \\ \text{dumbbell}, \\ \text{lens cap}, \\ \text{power drill}, \\ \text{puck}, \\ {\color{ForestGreen}\text{stethoscope}} } \right\}$} 
\\
\midrule

\makecell{\includegraphics[width=\linewidth]{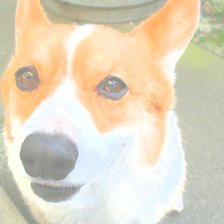}} &
\makecell{ $\left\{ \makecell{ \widehat{{\color{ForestGreen}\text{Pembroke}}}, \\ \text{Cardigan} } \right\}$} &
\makecell{ $\left\{ \makecell{ \widehat{{\color{ForestGreen}\text{Pembroke}}}, \\ \text{Cardigan} } \right\}$} &

\makecell{\includegraphics[width=\linewidth]{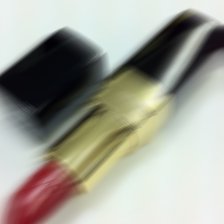}} &
\makecell{$\left\{ \makecell{ \text{\tiny face powder}, \\ \widehat{{\color{ForestGreen}\text{lipstick}}}, \\\text{paintbrush} } \right\}$} &
\makecell{\tiny $\left\{ \makecell{ \text{ballpoint}, \\ \text{face powde}, \\ \widehat{{\color{ForestGreen}\text{lipstick}}}, \\ \text{paintbrush}, \\ \text{perfume}, \\ \text{sunscreen} } \right\}$} \\

\bottomrule
\end{tabular}
\caption{Prediction sets of the shift from ImageNet to ImageNet-C13. Parameters are $m=20,000$, $\ep = 0.1$, and $\delta = 10^{-5}$. The green label is the true label and the label with the hat is the predicted label. We choose examples where the two approaches differ; in particular, if PS-W is incorrect, then PS is incorrect as well since the prediction set sizes are monotone in $\tau$. 
}
\end{figure}

\begin{figure}[ht]
\centering
\footnotesize
\newcolumntype{M}[1]{>{\centering\arraybackslash}m{#1}}
\renewcommand{\arraystretch}{0.0}
\begin{tabular}{M{0.13\textwidth}|M{0.13\textwidth}|M{{0.13\textwidth}} || M{0.15\textwidth}|M{0.135\textwidth}|M{{0.13\textwidth}}}
\toprule
Example $x$ & \makecell{$\Ch_{\text{PS}}(x)$} & \makecell{$\Ch_{\text{PS-W}}(x)$} &
Example $x$ & \makecell{$\Ch_{\text{PS}}(x)$} & \makecell{$\Ch_{\text{PS-W}}(x)$}
\\
\midrule
\makecell{\includegraphics[width=\linewidth]{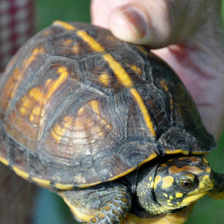}} &
\makecell{$\left\{ \makecell{\widehat{\text{box turtle}}} \right\}$} &
\makecell{$\left\{ \makecell{{\color{ForestGreen}\text{terrapin}},\\\widehat{\text{box turtle}}} \right\}$} &

\makecell{\includegraphics[width=\linewidth]{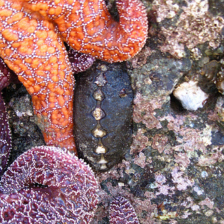}} &
\makecell{$\left\{ \makecell{\text{brain coral}, \\ \widehat{\text{starfish}}, \\\text{sea urchin}} \right\}$} &
\makecell{\tiny $\left\{ \makecell{\text{brain coral}, \\ {\color{ForestGreen}\text{chiton}},\\\widehat{\text{starfish}}, \\ \text{sea urchin}, \\\text{sea cucumber}, \\ \text{coral reef}, \\ \text{stinkhorn}} \right\}$} \\
\midrule

\makecell{\includegraphics[width=\linewidth]{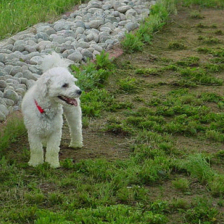}} &
\makecell{$\left\{ \makecell{\widehat{\tiny\text{white terrier}}, \\ \text{kuvasz}, \\ \text{komondor}} \right\}$} &
\makecell{\tiny$\left\{ \makecell{{\color{ForestGreen}\text{Maltese dog}},\\ \widehat{\text{white terrier}}, \\ \text{kuvasz}, \\ \text{komondor}, \\ \text{Samoyed}} \right\}$} &

\makecell{\includegraphics[width=\linewidth]{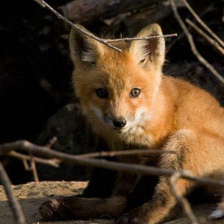}} &
\makecell{$\left\{ \makecell{\widehat{\text{kit fox}} } \right\}$} &
\makecell{ $\left\{ \makecell{ {\color{ForestGreen}\text{red fox}},\\\widehat{\text{kit fox}} } \right\}$} \\
\midrule

\makecell{\includegraphics[width=\linewidth]{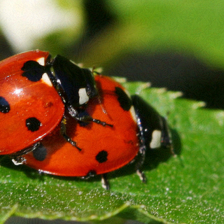}} &
\makecell{$\left\{ \makecell{\widehat{\text{ladybug}}} \right\}$} &
\makecell{$\left\{ \makecell{{\color{ForestGreen}\text{leaf beetle}},\\ \widehat{\text{ladybug}} } \right\}$} &

\makecell{\includegraphics[width=\linewidth]{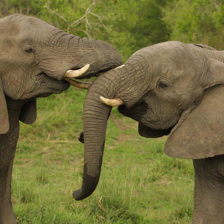}} &
\makecell{$\left\{ \makecell{\widehat{\text{tusker}} } \right\}$} &
\makecell{ $\left\{ \makecell{ \widehat{\text{tusker}}, \\ {\color{ForestGreen}\tiny\text{Afri. elephant}} } \right\}$} \\
\midrule

\makecell{\includegraphics[width=\linewidth]{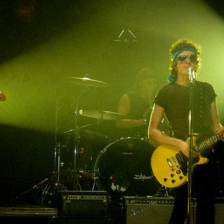}} &
\makecell{$\left\{ \makecell{\text{banjo}, \\ \text{\tiny elec. guitar}, \\ \widehat{\text{stage}}} \right\}$} &
\makecell{$\left\{ \makecell{{\color{ForestGreen}\text{\tiny aco. guitar}},\\ \text{banjo}, \\ \text{\tiny elec. guitar}, \\\widehat{\text{stage}} } \right\}$} &

\makecell{\includegraphics[width=\linewidth]{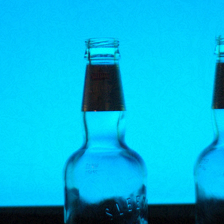}} &
\makecell{$\left\{ \makecell{\text{beaker}, \\ \widehat{\text{pop bottle}}, \\ \text{\tiny water bottle}, \\ \text{wine bottle} } \right\}$} &
\makecell{ $\left\{ \makecell{ \text{beaker}, \\ {\color{ForestGreen}\text{beer bottle}} , \\ \text{perfume}, \\  \widehat{\text{pop bottle}}, \\ \text{\tiny water bottle}, \\ \text{wine bottle} } \right\}$} \\
\midrule

\makecell{\includegraphics[width=\linewidth]{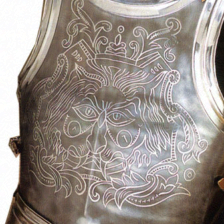}} &
\makecell{$\left\{ \makecell{\widehat{\text{cuirass}}} \right\}$} &
\makecell{$\left\{ \makecell{{\color{ForestGreen}\text{breastplate}},\\\widehat{\text{cuirass}} } \right\}$} &

\makecell{\includegraphics[width=\linewidth]{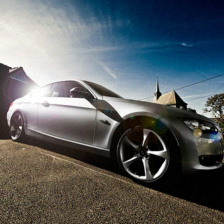}} &
\makecell{$\left\{ \makecell{\text{convertible}, \\ \widehat{\text{sports car}} } \right\}$} &
\makecell{ $\left\{ \makecell{ {\color{ForestGreen}\text{car wheel}} , \\ \text{convertible}, \\  \widehat{\text{sports car}}} \right\}$} \\
\midrule

\makecell{\includegraphics[width=\linewidth]{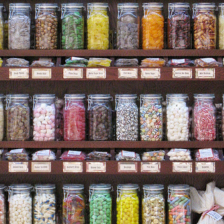}} &
\makecell{\tiny $\left\{ \makecell{ \widehat{{\color{ForestGreen}\text{confectionery}}} } \right\}$} &
\makecell{\tiny $\left\{ \makecell{ \widehat{{\color{ForestGreen}\text{confectionery}}} } \right\}$} &

\makecell{\includegraphics[width=\linewidth]{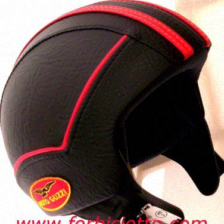}} &
\makecell{$\left\{ \makecell{ \widehat{{\color{ForestGreen}\text{crash helmet}}} } \right\}$} &
\makecell{\tiny $\left\{ \makecell{ \text{bonnet}, \\ \widehat{{\color{ForestGreen}\text{crash helmet}}}, \\ \text{football helmet} } \right\}$} \\

\bottomrule
\end{tabular}
\caption{Prediction sets of the shift from ImageNet to ImageNet-PGD. Parameters are $m=20,000$, $\ep = 0.1$, and $\delta = 10^{-5}$. The green label is the true label and the label with the hat is the predicted label. We choose examples where the two approaches differ; in particular, if PS-W is incorrect, then PS is incorrect as well since the prediction set sizes are monotone in $\tau$. 
}
\end{figure}

\end{document}